\numberwithin{equation}{section}
\numberwithin{figure}{section}
\newtheorem{thm}{Theorem}[section]
\newtheorem{cor}[thm]{Corollary}
\newtheorem{lem}[thm]{Lemma}
\theoremstyle{remark}
\theoremstyle{definition}
\newtheorem{eg}{Example}
\newcommand{\pair}[1]{\left({#1}\right)}
\newcommand{\ang}[1]{\left\langle{#1}\right\rangle}
\newcommand{\set}[1]{\left\{{#1}\right\}}
\newcommand{\es}{\varnothing}
\newcommand{\infl}{\varepsilon}
\newcommand{\pow}{\mathcal{P}}
\newcommand{\nat}{\mathbb{N}}
\newcommand{\x}{\times}
\newcommand{\sqbra}[1]{\left[{#1}\right]}
\newcommand{\ncsnakes}{\not\kern-0.01cm\mid\!\sim^{\bf{C}}}
\newcommand{\npsnakes}{\not\kern-0.01cm\mid\!\sim^{\bf{P}}}
\newcommand{\nrsnakes}{\not\kern-0.01cm\mid\!\sim^{\bf{R}}}
\newcommand{\lang}{\mathcal{L}}
\newcommand{\LSent}{\mathcal{SL}}
\newcommand{\alg}{\mathcal{A}}
\newcommand{\LForm}{\mathcal{FL}}
\newcommand{\relsymb}{\mathcal{R}}
\newcommand{\ext}{\mathcal{E}}
\newcommand{\powfin}{\pow_\text{fin}}
\newcommand{\subarg}{\subseteq_\text{arg}}
\newcommand{\ordneq}{\triangleleft}
\newcommand{\attk}{\rightharpoonup}
\newcommand{\defeat}{\hookrightarrow}
\newcommand{\orddeneq}{\triangleleft_{DEli}}
\newcommand{\abs}[1]{\left\lvert{#1}\right\rvert}
\begin{document}

\title{Prioritised Default Logic as Argumentation with Partial Order Default Priorities}
\author{Anthony P. Young, Sanjay Modgil, Odinaldo Rodrigues}
\affil{\normalfont\small Department of Informatics, King's College London, Strand, London, U.K.\\
\normalfont\small\texttt{\{\href{mailto:peter.young@kcl.ac.uk}{peter.young},\href{mailto:sanjay.modgil@kcl.ac.uk}{sanjay.modgil},\href{mailto:odinaldo.rodrigues@kcl.ac.uk}{odinaldo.rodrigues}\}@kcl.ac.uk}}
\vspace{-0.5cm}
\date{\today} 
\maketitle

\vspace{-0.8cm}

\begin{abstract}
\noindent We express Brewka's prioritised default logic (PDL) as argumentation using ASPIC$^+$. By representing PDL as argumentation and designing an argument preference relation that takes the argument structure into account, we prove that the conclusions of the justified arguments correspond to the PDL extensions. We will first assume that the default priority is total, and then generalise to the case where it is a partial order. This provides a characterisation of non-monotonic inference in PDL as an exchange of argument and counter-argument, providing a basis for distributed non-monotonic reasoning in the form of dialogue.\footnote{The results of Section \ref{sec:ASPIC+_to_PDL} first appeared in the preprint \cite{APY_ArXiV:2016} and have been published in the conference proceedings of AAMAS2016 \cite{AAMAS2016}. This paper gives the full proofs of these results.}
\end{abstract}

\tableofcontents


\section{Introduction}\label{sec:intro}



Dung's \textit{abstract argumentation theory} \cite{Dung:95} has become established as a means for unifying various nonmonotonic logics (NMLs) \cite{Bochman:07,Reiter:87,SEP:LogicAI}, where the inferences of a given NML can be interpreted as conclusions of justified arguments. Abstract argumentation defines ``justified arguments'' by making use of principles familiar in everyday reasoning and debate. This renders the process of inference in the NML transparent and amenable to human inspection and participation, and serves as a basis for distributed reasoning and dialogue.

More precisely, relating NMLs and argumentation is to endow the NML with \textit{argumentation semantics}. This has already been done for default logic \cite{Dung:95}, logic programming \cite{Dung:95}, defeasible logic \cite{Governatori:04} and preferred subtheories \cite{sanjay:13}. This allows the application of argument game proof theories \cite{Sanjay:09} to the process of inference in these NMLs, and the generalisation of these dialectical proof theories to distributed reasoning amongst computational agents, where agents can engage in argumentation-based dialogues
\cite{Added_Value,Sanjay:08,Atkinson:05}.

Abstract argumentation has been upgraded to \textit{structured argumentation theory} \cite{Besnard:14}, one example of which is the \textit{ASPIC$^+$ framework for structured argumentation} \cite{sanjay:13}. In ASPIC$^+$, arguments are constructed from premises and deductive or defeasible rules of inference. The conclusions of arguments can contradict each other and hence arguments can attack each other. A preference relation over the arguments can be used to determine which attacks succeed as defeats. The arguments and defeats instantiate an abstract argumentation framework, where the justified arguments are determined using Dung's method. The conclusions of the justified arguments are then identified with the nonmonotonic inferences from the underlying premises and rules of inference. The advantages of ASPIC$^+$ are that the framework provides a systematic and general method of endowing non-monotonic logics with argumentation semantics, and identifies sufficient conditions on the underlying logic and preference relations that guarantee the satisfaction of various normatively rational desiderata \cite{Caminada:07}.


This paper endows Brewka's \emph{prioritised default logic} (PDL) \cite{Brewka:94} with argumentation semantics. PDL is an important NML because it upgrades default logic (DL) \cite{Reiter:80} with an explicit priority relation over defaults, so that, for example, one can account for recent information taking priority over information in the distant past. PDL has also been used to represent the (possibly conflicting) beliefs, obligations, intentions and desires (BOID) of agents, and model how these different categories of mental attitudes override each other in order to generate goals and actions that attain those goals \cite{BOID:02}.

We prove a correspondence between inferences in PDL and the conclusions of the justified arguments defined by the argumentation semantics. We realise these contributions by appropriately representing PDL in ASPIC$^+$. The main challenges involve understanding how priorities over defaults in PDL can be represented as an ASPIC$^+$ argument preference relation, and then applying the properties of this preference relation to prove that the extensions of PDL correspond to the conclusions of justified arguments.

This paper has five sections. In Section \ref{sec:rev}, we review ASPIC$^+$, abstract argumentation, and PDL. In Section \ref{sec:ASPIC+_to_PDL} we present an instantiation of ASPIC$^+$ to PDL when the default priority is total. The key results are the design of an appropriate argument preference relation (Section \ref{sec:pref}), and showing that this argument preference relation guarantees that the conclusions of the justified arguments correspond exactly to the PDL extensions by the \textit{representation theorem} (Section \ref{sec:rep_thm}). We then investigate some properties and directly prove that the normative rationality postulates of \cite{Caminada:07} are satisfied (Section \ref{sec:normative_rationality_of_PDL_inst}).\footnote{But in this case we are not leveraging the properties of ASPIC$^+$ to achieve this. We will discuss this point in Section \ref{sec:discussion_conclusions}.}

In Section \ref{sec:lift_total_assumption} we lift the assumption that the priority order on the defaults is total. Following the pattern of the previous section we generalise the argument preference (Section \ref{sec:partial_order_pref_part2}) to accommodate for partial order default priorities. We then prove a generalised representation theorem (Sections \ref{sec:rep_thm_nonlinear}) and prove a partial result concerning the satisfaction of the rationality postulates of \cite{Caminada:07} (Section \ref{sec:rationality_nonlinear}). We conclude in Section \ref{sec:discussion_conclusions} with suggestions for future work.

\section{Background}\label{sec:rev}

\subsection{Notation Used in this Paper}\label{sec:notation}

\noindent In this paper: ``$:=$'' is read ``is defined as''. WLOG stands for ``without loss of generality''. $\nat$ denotes the set  of natural numbers. We denote set difference with $-$. For two sets $A,\:B$, $A\ominus B:=\pair{A-B}\cup\pair{B-A}$ denotes their symmetric difference. If $f:X\to Y$ is a function and $A\subseteq X$, $f(A)\subseteq Y$ is the image set of $A$ in $Y$ under $f$. For a set $X$ its power set is $\pow\pair{X}$ and its finite power set (set of all finite subsets) is $\powfin(X)$. $X\subseteq_\text{fin}Y$ iff $X$ is a finite subset of $Y$, therefore $X\in\powfin(Y)\Leftrightarrow X\subseteq_\text{fin}Y$. Undefined quantities are denoted by $*$, for example $1/0=*$ in the real numbers. Order isomorphism is denoted by $\cong$.

If $\ang{P,\:\lesssim}$ is a preordered set then the strict version of the preorder is $a<b\Leftrightarrow\sqbra{a\lesssim b,\:b\not\lesssim a}$, which is also a strict partial order. If $<$ is a strict partial order on $P$ and $U\subseteq P$, then we define the set $\max_< U:=\set{x\in U\:\vline\:\pair{\forall y\in U}x\not<y}\subseteq U$, i.e. the set of all $<$-maximal elements of $U$. We define the set $\min_< U$ analogously. For a set $X$ we define the set of possible strict partial orders on $X$ to be $PO(X):=\set{<\:\subseteq\:X^2\:\vline\:\text{$<$ is a strict partial order}}$. Similarly, the set of all possible strict total orders on $X$ is $TO(X):=\set{<\:\subseteq\:X^2\:\vline\:\text{$<$ is a strict total order}}\subset PO(X)$. We will use the terms ``total (order)'' and ``linear (order)'' interchangeably. We will also call totally ordered sets either ``tosets'' or ``chains''.

\subsection{The ASPIC\texorpdfstring{$^+$}{+} Framework}\label{sec:rev_ASPIC+}


\noindent Abstract argumentation abstracts from the internal logical structure of arguments, the nature of defeats and how they are determined by preferences, and consideration of the conclusions of the arguments \cite{Dung:95}. However, these features are referenced when studying whether any given logical instantiation of a framework yields complete extensions that satisfy the rationality postulates of \cite{Caminada:07}. ASPIC$^+$ \cite{sanjay:13} provides a structured account of abstract argumentation, allowing one to reference the above features, while at the same time accommodating a wide range of instantiating logics and preference relations in a principled manner. ASPIC$^+$ then identifies conditions under which complete extensions defined by the arguments, attacks and preferences, satisfy the rationality postulates of \cite{Caminada:07}.


In ASPIC$^+$, the tuple $\ang{\lang,\:-,\:\relsymb_s,\:\relsymb_d,\:n}$ is an \emph{argumentation system}, where $\lang$ is a logical language and $-:\lang\to\pow(\lang)$ is the \emph{contrary function} $\theta\mapsto\overline{\theta}$ where $\overline{\theta}$ is the set of wffs that are inconsistent with $\theta$. Let $\theta_1,\dots,\theta_m, \phi\in\lang$ be wffs for $m\in\nat$, $\relsymb_s$ is the \emph{set of strict inference rules} of the form $(\theta_1,\:\ldots,\:\theta_m\to\phi)$, denoting that if $\theta_1,\:\ldots,\:\theta_m$ are true then $\phi$ is also true, and $\relsymb_d$ is the \emph{set of defeasible inference rules} of the form $(\theta_1,\:\ldots,\:\theta_m\Rightarrow\phi)$, denoting that if $\theta_1,\:\ldots,\:\theta_m$ are true then $\phi$ is tentatively true. Note $\relsymb_s\cap\relsymb_d=\es$. For a strict or defeasible rule $r=\pair{\theta_1,\ldots\theta_m\to/\Rightarrow\phi}$, we define $Ante(r):=\set{\theta_1,\ldots,\theta_m}\subseteq_\text{fin}\lang$,\footnote{Note it is possible to have $m=0$ and hence $Ante(r)=\es$.} and $Cons(r):=\phi\in\lang$. Finally, $n:\relsymb_d\to\lang$ is a \emph{partial} function that assigns a \emph{name} to \emph{some} of the defeasible rules. For any $S\subseteq\lang$ we define the set $Cl_{\relsymb_s}(S)\subseteq\lang$ to be the smallest superset of $S$ that also contains $Cons(r)$ for all $r\in\relsymb_s$ such that $Ante(r)\subseteq Cl_{\relsymb_s}(S)$. We call $Cl_{\relsymb_s}$ the \textit{closure under strict rules operator}.

In ASPIC$^+$, a \emph{knowledge base} is a set $\mathcal{K}:=\mathcal{K}_n\cup\mathcal{K}_p\subseteq\lang$ where $\mathcal{K}_n$ is the set of \emph{axioms} and $\mathcal{K}_p$ is the set of \emph{ordinary premises}. Note that $\mathcal{K}_n\cap\mathcal{K}_p=\es$. Given an argumentation system and $\mathcal{K}$, \emph{arguments} are defined inductively:
\begin{enumerate}
\item(Base) $[\theta]$ is a \emph{singleton argument} with $\theta\in\mathcal{K}$, conclusion $Conc([\theta]):=\theta$, premise set $Prem\pair{\sqbra{\theta}}:=\set{\theta}\subseteq\mathcal{K}$, top rule $TopRule([\theta]):=*$ and set of subarguments $Sub\pair{[\theta]}:=\set{[\theta]}$.
\item(Inductive) Let $A_1,\:\ldots,\:A_n$ be arguments with respective conclusions $Conc(A_1),\:\ldots,\:Conc(A_n)$ and premise sets $Prem(A_1),\:\ldots,\:Prem(A_n)$. If there is a rule $r:=(Conc\pair{A_1},\:\ldots,\:Conc\pair{A_n}\to / \Rightarrow\phi)\in\relsymb$, then $B:=[A_1,\:\ldots,\:A_n\to/\Rightarrow\phi]$ is also an argument with $Conc(B)=\phi$, premises $Prem(B):=\bigcup_{i=1}^n Prem(A_i)$, $TopRule(B)=r\in\relsymb$ and set of subarguments $Sub(B):=\set{B}\cup\bigcup_{i=1}^n Sub(A_i)$.
\end{enumerate}

\noindent Let $\alg$ be the (unique) set of all arguments freely constructed following the above rules. It is clear that arguments are finite objects in that each argument has finitely many premises, and take finitely many rules to reach its conclusion. We define the \textit{conclusion map} $Conc:\alg\to\lang:A\mapsto Conc(A)$. We can generalise this to arbitrary \textit{sets} of arguments (abuse of notation):
\begin{align}\label{eq:conc_for_sets}
Conc:\pow\pair{\alg}&\to\pow\pair{\lang}:S\mapsto Conc(S):=\bigcup_{A\in S}Conc(A).
\end{align}

Two strict or defeasible rules are \textit{equal} iff they have the same antecedent sets, consequents and name syntactically in the underlying $\lang$. Two arguments are \emph{equal} iff they are constructed identically as described above. More precisely, we can define equality of arguments inductively. The base case would be two singleton arguments $[\theta],\:[\phi]$ are equal iff $\theta$ and $\phi$ are \textit{syntactically} the same formulae. Given $n$ arguments $A_1,\:\ldots,\:A_n$ and two equal rules $r_1$ and $r_2$ (either both strict or both defeasible) with antecedent $\set{Conc\pair{A_i}}_{i=1}^n$, such that $B_1$ is the rule $r_1$ appended to the $A_i$'s, and $B_2$ is the rule $r_2$ appended to the $A_i$'s, then $B_1$ and $B_2$ are equal arguments.


We say $A$ is a \textit{subargument} of $B$ iff $A\in Sub(B)$ and we write $A\subarg B$. We say $A$ is a \textit{proper subargument} of $B$ iff $A\in Sub(B)-\set{B}$ and we write $A\subset_\text{arg}B$. It can be shown that $\subarg$ is a preorder on $Sub(B)$. A set of arguments is \textit{subargument closed} iff it is $\subarg$-down closed. Clearly, for every defeasible rule $r$ in an argument $A$, there is a subargument of $A$ with $r$ as its top rule, by the inductive construction of arguments.

An argument $A\in\alg$ is \emph{firm} iff $Prem(A)\subseteq\mathcal{K}_n$. Further, $SR(A)\subseteq\relsymb_s$ is the set of strict rules applied in constructing $A$, and $DR(A)\subseteq\relsymb_d$ is the set of defeasible rules applied in constructing $A$. We also define $Prem_n(A):=Prem(A)\cap\mathcal{K}_n$ and $Prem_p(A):=Prem(A)\cap\mathcal{K}_p$. An argument $A$ is \emph{strict} iff $DR(A)=\es$, else $A$ is \emph{defeasible}. We can generalise $DR\pair{\:\cdot\:}$ to sets as well just like Equation \ref{eq:conc_for_sets} for $Conc\pair{\:\cdot\:}$.

Given $R\subseteq\relsymb_d$, we introduce \emph{the set of all arguments freely constructed with defeasible rules restricted to those in $R$} as the set $Args(R)\subseteq\alg$, which are all arguments with premises in $\mathcal{K}$, strict rules in $\relsymb_s$ and defeasible rules in $R$. Formally, $Args(R)$ is defined inductively just as how arguments are constructed, but with the choice of defeasible rules restricted to those in $R$. It is easy to show that this definition is equivalent to
\begin{align}\label{eq:def_Args(R)}
A\in Args(R)\Leftrightarrow DR(A)\subseteq R.
\end{align}
Clearly, $Args(\relsymb_d)=\alg$. Given $R$, $Args(R)$ exists and is unique. 

Let $S\subseteq_\text{fin}\alg$. The \textit{set of all strict extensions of $S$} is the set $StExt\pair{S}$ where
\begin{align}
A\in StExt\pair{S}\Leftrightarrow &DR(A)=DR(S),\:SR(A)\supseteq SR(S),\nonumber\\
&Prem_p(A)=Prem_p(S),\:Prem_n(A)\supseteq Prem_n(S).\nonumber
\end{align}
A set $S\subseteq\alg$ is \textit{closed under strict extensions} iff for all $T\subseteq_\text{fin}S$, $StExt\pair{T}\subseteq S$.

\begin{lem}\label{lem:Args(R)_props}
The set $Args(R)$, for any $R\subseteq\relsymb_d$, is closed under strict extensions and subarguments.
\end{lem}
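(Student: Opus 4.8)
The plan is to prove the two closure properties separately, in each case using the characterisation $A \in Args(R) \Leftrightarrow DR(A) \subseteq R$ from Equation \ref{eq:def_Args(R)}, which reduces everything to a statement about the defeasible rules $DR(\cdot)$ appearing in arguments. This equation is the workhorse: membership in $Args(R)$ is entirely determined by $DR(A)$, so I need only track how $DR$ behaves under the operations of taking strict extensions and taking subarguments.

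First I would handle closure under subarguments. Suppose $A \in Args(R)$ and $B \subarg A$, i.e.\ $B \in Sub(A)$. By Equation \ref{eq:def_Args(R)} it suffices to show $DR(B) \subseteq R$; since $DR(A) \subseteq R$ by hypothesis, it is enough to establish the monotonicity fact $DR(B) \subseteq DR(A)$ whenever $B \in Sub(A)$. This is a routine induction on the structure of $A$. In the base case $A = [\theta]$ we have $Sub(A) = \{A\}$ and $DR(A) = \es$, so the only subargument is $A$ itself. In the inductive case $A = [A_1, \ldots, A_n \to/\Rightarrow \phi]$, we have $Sub(A) = \{A\} \cup \bigcup_i Sub(A_i)$ and $DR(A)$ is the union of the $DR(A_i)$ together with the top rule if it is defeasible; any $B \in Sub(A)$ is either $A$ itself or lies in some $Sub(A_i)$, and in the latter case the inductive hypothesis gives $DR(B) \subseteq DR(A_i) \subseteq DR(A)$.

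Next I would handle closure under strict extensions. Let $T \subseteq_\text{fin} Args(R)$ and let $A \in StExt(T)$; I must show $A \in Args(R)$, i.e.\ $DR(A) \subseteq R$. The defining conditions of $StExt(T)$ give $DR(A) = DR(T)$. Since $T \subseteq Args(R)$, every argument in $T$ has its defeasible rules in $R$, so $DR(T) = \bigcup_{C \in T} DR(C) \subseteq R$ by the set-generalisation of $DR$ (analogous to Equation \ref{eq:conc_for_sets}). Hence $DR(A) \subseteq R$ and $A \in Args(R)$ by Equation \ref{eq:def_Args(R)}. The conditions on $SR$, $Prem_n$ and $Prem_p$ in the definition of $StExt$ are irrelevant here precisely because membership in $Args(R)$ constrains only defeasible rules, not strict rules or premises.

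The main obstacle, such as it is, will not be conceptual but bookkeeping: I must be careful that the definition $DR(S) = \bigcup_{C \in S} DR(C)$ for sets is the one in play for the strict-extension clause $DR(A) = DR(T)$, and that the subargument induction correctly accounts for the top rule's contribution to $DR(A)$. I expect no genuine difficulty, since Equation \ref{eq:def_Args(R)} collapses both claims to elementary facts about how $DR$ propagates, but I would state the monotonicity lemma $B \subarg A \Rightarrow DR(B) \subseteq DR(A)$ explicitly so that the subargument case is clean.
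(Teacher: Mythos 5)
Your proposal is correct and takes essentially the same route as the paper's proof: both parts are reduced, via Equation \ref{eq:def_Args(R)}, to checking $DR(\cdot)\subseteq R$, with subargument closure following from $B\subarg A\Rightarrow DR(B)\subseteq DR(A)$ and closure under strict extensions following from the clause $DR(A)=DR(T)$ in the definition of $StExt\pair{T}$ together with $DR(T)=\bigcup_{C\in T}DR(C)\subseteq R$. The only difference is that you prove the monotonicity fact $B\subarg A\Rightarrow DR(B)\subseteq DR(A)$ by an explicit structural induction, whereas the paper simply asserts it; this is a harmless (indeed welcome) addition of detail.
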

\begin{proof}
If $A\in Args(R)$ and $B\subseteq A$, then $DR(B)\subseteq DR(A)\subseteq R$ so $DR(B)\subseteq R$ and hence $B\in Args(R)$, therefore $Args(R)$ is subargument closed. Now let $T\subseteq_\text{fin}Args(R)$, so for all $B\in T$, $DR(B)\subseteq R$, therefore $DR(T):=\bigcup_{B\in T}DR(B)\subseteq R$. Let $A\in StExt\pair{T}$, then $DR(A)=DR(T)\subseteq R$ and hence $A\in Args\pair{R}$. Therefore, $StExt\pair{T}\subseteq Args\pair{R}$, therefore $Args(R)$ is closed under strict extensions.
\end{proof}


An argument $A$ \emph{attacks} another argument $B$, denoted as $A\attk B$, iff at least one of the following hold, where:
\begin{enumerate}
\item  $A$ is said to \emph{undermine} attack $B$ on the (singleton) subargument $B'$ = $[\phi]$ iff there is some $\phi\in Prem_p(B)$ such that $Conc(A)\in\overline{\phi}$.
\item $A$ is said to \emph{rebut} attack $B$ on the subargument $B'$ iff there is some $B'\subarg B$ such that $r:= TopRule\pair{B'}\in\relsymb_d$, $\phi:=Cons(r)$ and $Conc(A)\in\overline{\phi}$.
\item $A$ is said to \emph{undercut} attack $B$ on the subargument $B'$ iff there is some $B'\subarg B$ such that $r:=TopRule(B')\in\relsymb_d$ and $Conc(A)\in\overline{n(r)}$.
\end{enumerate}

\noindent See \cite[Section 2]{sanjay:13} for a further discussion of why attacks are distinguished in this way. We abuse notation to define the \emph{attack relation} as $\attk\:\subseteq\alg^2$ such that $(A,\:B)\:\in\:\attk\:\Leftrightarrow\: A\attk B$. Notice that by the transitivity of $\subarg$, if $A\attk B$ and $B\subarg C$, then $A\attk C$.

A preference relation over arguments is then used to determine which attacks  succeed as defeats. We denote the preference $\precsim\:\subseteq\:\alg^2$ (not necessarily a preorder for now) such that $A\precsim B\Leftrightarrow\:B$ is at least as preferred as $A$. Strict preference and equivalence are, respectively, $A\prec B\Leftrightarrow\sqbra{A\precsim B,\:B\not\precsim A}$ and $A\approx B\Leftrightarrow\sqbra{A\precsim B,\:B\precsim A}$. We define a \emph{defeat} as
\begin{align}\label{eq:ASPIC+_general_defeat}
A\defeat B\Leftrightarrow\pair{\exists B'\subarg B}\sqbra{A\attk B',\:A\not\prec B'}.
\end{align}
That is to say, $A$ defeats $B$ (on $B'$) iff  $A$ attacks $B$ on the subargument $B'$, and $B'$ is not 
strictly preferred to $A$. Notice the comparison is made at the subargument $B'$ instead of the whole argument $B$. We abuse notation to define the \emph{defeat relation} as $\defeat\:\subseteq\alg^2$ such that $(A,\:B)\:\in\:\defeat\:\Leftrightarrow\: A\defeat B$. A set of arguments $S\subseteq\alg$ is \textit{conflict-free} (cf) iff $\defeat\:\cap\:S^2=\es$.\footnote{Note that \cite{sanjay:13} studies two different notions of cf sets: one where no two arguments \textit{attack} each other, and the other where no two arguments \textit{defeat} each other. We choose the latter notion of cf as this is more commonplace in argumentation formalisms that distinguish between attacks and defeats, e.g. in \cite{Prakken:10}.} Notice that by the transitivity of $\subarg$ and that the preference comparison is made at the defeated subargument, if $A\defeat B$ and $B\subarg C$, then $A\defeat C$. As relations, $\defeat\:\subseteq\:\attk$.

Preferences between arguments are calculated from the argument structure by comparing arguments at their fallible components, i.e. the ordinary premises and defeasible rules. This is achieved by endowing $\mathcal{K}_p$ and $\relsymb_d$ with preorders $\lesssim_K$ and $\lesssim_D$ respectively, where (e.g.) $r_1\lesssim_D r_2$ iff $r_2$ is \textit{just as preferred or more preferred} than $r_1$ (and analogously for $\lesssim_K$). These preorders are then aggregated to a set-comparison relation $\trianglelefteq$ between the sets of premises and / or defeasible rules of the arguments, and then finally to $\precsim\:\subseteq\:\alg^2$, following the method in \cite[Section 5]{sanjay:13}.\footnote{Note there are many other ways to lift a preference $<$ on a set of objects $X$ to compare subsets of $X$ in various ways that are ``compatible'' with $<$ \cite{Barbera:04}.} We will use a modified version of this lifting, which will be explained in Section \ref{sec:pref}.

Given the preference relation $\precsim$ between arguments, we call the structure $\ang{\alg,\:\attk,\:\precsim}$ an \emph{ASPIC$^+$ SAF} (structured argumentation framework), or \emph{attack graph}. Its corresponding \emph{defeat graph} is $\ang{\alg,\:\defeat}$, where $\defeat$ is defined in terms of $\attk$ and $\precsim$ as in Equation \ref{eq:ASPIC+_general_defeat}.


Given $\ang{\alg,\:\defeat}$ one can then evaluate the extensions under Dung's abstract argumentation semantics, and thus identify the inferences defined by argumentation as the conclusions of the justified arguments. We now recap the key definitions of \cite{Dung:95}. An \emph{argumentation framework} is a directed graph $\ang{\alg,\:\defeat}$, where $\alg$ is the set of arguments and $\defeat\:\subseteq\:\alg^2$ is the \emph{defeat relation}, such that $A\defeat B$ means $A$ is a (successful) counterargument against $B$. The argumentation frameworks we consider are defeat graphs, but this is a general definition. 

Let $S\subseteq\alg$ and $A,\:B\in\alg$. $S$ \emph{defeats} $B$ iff $\pair{\exists A\in S}A\defeat B$. $S$ is \emph{conflict-free} (cf) iff $\defeat\cap S^2=\es$. $S$ \emph{defends} $A$ iff $\pair{\forall B\in\alg}[B\defeat A\Rightarrow S$  defeats $B]$. The \textit{characteristic function} is $\chi:\pow\pair{\alg}\to\pow\pair{\alg}$, such that $\chi(S):=\set{A\in\alg\:\vline\:S\text{ defends }A}\subseteq\alg$. $S$ is an \emph{admissible extension} iff $S$ is cf and $S\subseteq \chi(S)$. An admissible extension $S$ is: a \emph{complete extension} iff $S=\chi(S)$; a \emph{preferred extension} iff $S$ is a $\subseteq$-maximal complete extension; the \emph{grounded extension} iff $S$ is the $\subseteq$-least complete extension; a \emph{stable extension} iff $S$ is complete and defeats all arguments $B\in\alg-S$.

Let $\mathcal{S}:=\{complete,preferred,grounded,stable\}$ be the set of \emph{Dung semantics}. An argument $A\in\alg$ is \emph{sceptically (credulously) justified} under the semantics $s\in\mathcal{S}$ iff $A$ belongs to all (at least one) of the $s$-extensions of $\ang{\alg,\defeat}$.

Instantiations of ASPIC$^+$ should satisfy some properties to ensure they are rational \cite{Caminada:07}. Given an instantiation let $\ang{\alg,\:\attk,\:\precsim}$ be its ASPIC$^+$ attack graph with corresponding defeat graph $\ang{\alg,\defeat}$. Let $\ext$ be any complete extension. The \textit{Caminada-Amgoud rationality postulates} state:
\begin{enumerate}
\item (Subargument closure) $\ext$ is subargument closed.
\item (Closure under strict rules) $\ext$ satisfies $Conc(\ext)=Cl_{\relsymb_s}\sqbra{Conc(\ext)}$, where $Conc\pair{\ext}$ is defined in Equation \ref{eq:conc_for_sets}.
\item (Consistency) $Conc(\ext)$ is  consistent.\footnote{Notice by properties 2 and 3 above $Cl_{\relsymb_s}\pair{Conc\pair{\ext}}$ is consistent. ASPIC$^+$ distinguishes this into direct and indirect consistency given that $\relsymb_s$ is in general arbitrary and do not have to be the rules of inference of classical logic. We will not make this distinction because our underlying logic will be first order logic (FOL) (Section \ref{sec:instantiation_choice_of_variables}). Further, consistency in the abstract logic of ASPIC$^+$ is expressed in terms of the contrary function, but since our contrary function will just be classical negation, we can take the usual meaning of consistency in FOL.}
\end{enumerate}
An ASPIC$^+$ instantiation is \textit{normatively rational} iff it satisfies these rationality postulates. These postulates may be proved directly given an instantiation. ASPIC$^+$ also identifies sufficient conditions for an instantiation to satisfy these postulates \cite[Section 4]{sanjay:13}, which we will discuss in Section \ref{sec:discussion_conclusions}.

\subsection{Brewka's Prioritised Default Logic}\label{sec:rev_PDL}

\noindent In this section we recap Brewka's prioritised default logic (PDL) \cite{Brewka:94}. We work in first order logic (FOL) of arbitrary signature where the set of first-order formulae is $\LForm$ and the set of closed first order formulae\footnote{i.e. first order formulae without free variables} a.k.a. \textit{sentences} is $\LSent\subseteq\LForm$, with the usual quantifiers and connectives. Entailment is denoted by $\models$. Logical equivalence of formulae is denoted by $\equiv$. Given $S\subseteq\LForm$, the \emph{deductive closure of $S$} is $Th(S)$, and given $\theta\in\LForm$, the \emph{addition operator} $+:\pow(\LForm)\x\LForm\to\pow\pair{\LForm}$ is defined as $S+\theta:=Th(S\cup\set{\theta})$.

A \emph{normal default} is an expression $\frac{\theta:\phi}{\phi}$ where $\theta,\:\phi\in\LForm$ and read ``if $\theta$ is the case and $\phi$ is consistent with what we know, then jump to the conclusion $\phi$ even if it does not deductively follow''. In this case we call $\theta$ the \emph{antecedent} and $\phi$ the \emph{consequent}. A normal default $\frac{\theta:\phi}{\phi}$ is \emph{closed} iff $\theta,\:\phi\in\LSent$. We will assume all defaults are closed and normal unless stated otherwise. Given $S\subseteq\LSent$, a default is \emph{active (in $S$)} iff $\sqbra{\theta\in S,\:\phi\notin S,\:\neg\phi\notin S}$. 

A \emph{finite prioritised default theory} (PDT) is a structure $T:=\ang{D,\:W,\:<}$, where \textit{the set of facts} $W\subseteq\LSent$ is not necessarily finite and $\ang{D,\:<}$ is a \emph{finite} strict partially ordered set of defaults that nonmonotonically extend $W$. The priority relation is such that $d'< d\Leftrightarrow d$ is \emph{more\footnote{\label{fn:dual_priority_PDL} We have defined the order dually to \cite{Brewka:94} so as to comply with orderings over the ASPIC$^+$ defeasible inference rules. This goes \textit{against} the tradition in NML where the \textit{smaller} item in $<$ is the \textit{more} preferred one.} prioritised} than $d'$. All PDTs in this paper are finite.

The inferences of a PDT $T=\ang{D,W,<}$ are defined by its extensions. Let $<^+\supseteq <$ be a linearisation of $<$. A \emph{prioritised default extension (with respect to $<^+$)} (PDE) is a set $E:=\bigcup_{i\in\nat}E_i\subseteq\LSent$ built inductively as:
\newpage
\begin{align}
E_0&:=Th(W)\text{ and }\label{eq:ext_base}\\
E_{i+1}&:=
\begin{cases}
E_i+\phi, &\text{if property 1}\\
E_i,&\text{else}
\end{cases}\label{eq:ext_ind}
\end{align}
where ``property 1'' abbreviates ``$\phi$ is the consequent of the $<^+$-\textit{greatest}\footnote{See Footnote \ref{fn:dual_priority_PDL}.} default $d$ active in $E_i$''. Intuitively, one first generates all classical consequences from the facts $W$, and then iteratively adds the nonmonotonic consequences from the highest priority default to the lowest. Notice if $W$ is inconsistent then $E_0=E=\LForm$. For this paper we will assume $W$ is always consistent.

For finite $D$ it can be shown that the ascending chain $E_i\subseteq E_{i+1}$ stabilises at some finite $i\in\nat$ and that $E$ is consistent provided that $W$ is consistent. $E$ does not have to be unique because there are many distinct linearisations of $<$. We say $T$ \emph{sceptically infers} $\theta\in\LSent$ iff $\theta\in E$ for \textit{all} extensions $E$ of $T$.

A PDT $T$ for which $<$ is a strict total order is a \emph{linearised PDT} (LPDT). If $<$ is total then there is only one way to apply the defaults in $D$ by Equation \ref{eq:ext_ind}, hence the extension is unique. We will use the notation $<^+$ to emphasise that the priority is total, and the notation $T^+$ to denote an arbitrary LPDT.

For the rest of this paper, if we declare $T$ to be a PDT, we mean $T=\ang{D,W,<}$ where each component is defined above, and we make no further assumptions on each component. If we declare $T^+$ to be an LPDT, we mean $T^+=\ang{D,W,<^+}$ where $<^+$ is a strict total order on $D$.



\section{From ASPIC\texorpdfstring{$^+$}{+} to PDL}\label{sec:ASPIC+_to_PDL}

\subsection{Representing PDL in ASPIC\texorpdfstring{$^+$}{+}}\label{sec:instantiation_choice_of_variables}

\noindent We now instantiate ASPIC$^+$ to PDL. Let $T^+:=\ang{D,\:W,\:<^+}$ be an LPDT.\footnote{We will lift this assumption of a total order priority in Section \ref{sec:lift_total_assumption}.}
\begin{enumerate}
\item Our arguments are expressed in FOL, so our set of wffs $\lang$ is $\LForm$.
\item The contrary function $-:\LForm\to\pow\pair{\LForm}$ syntactically defines conflict in terms of classical negation. For all $\theta\in\LForm$, $\overline{\theta}=\set{\neg\theta}$ unless $\theta$ has the syntactic form $\neg\phi$ for some $\phi\in\LForm$, then $\overline{\theta}=\set{\phi}$. As $\overline{\theta}$ is singleton, we will abuse notation and write $\overline{\theta}$ to refer to its element.
\item The set of strict rules $\relsymb_s$ characterises inference in FOL. Notice $\relsymb_s$ is \textit{closed under transposition}, i.e. for all $1\leq i\leq n\in\nat^+$,
\begin{align}
&\pair{\theta_1,\theta_2,\ldots,\theta_{i-1},\theta_i,\theta_{n+1},\ldots,\theta_n\to\phi}\in\relsymb_s\nonumber\\
\Rightarrow&\pair{\theta_1,\theta_2,\ldots,\theta_{i-1},-\phi,\theta_{n+1},\ldots,\theta_n\to-\theta_i}\in\relsymb_s.\nonumber
\end{align}
We leave the proof theory implicit. $Cl_{\relsymb_s}$ instantiates to deductive closure.
\item The set of defeasible rules $\relsymb_d$ is defined as:
\begin{align*}
\relsymb_d:=\set{(\theta\Rightarrow\phi)\:\vline\:\frac{\theta:\phi}{\phi}\in D},
\end{align*}
with the naming function $n\equiv*$. Clearly, there is a bijection $f$ where
\begin{align}\label{eq:bij_defaults_def_rules}
f:D\to\relsymb_d:\frac{\theta:\phi}{\phi}\mapsto f\pair{\frac{\theta:\phi}{\phi}}:=\pair{\theta\Rightarrow\phi}
\end{align}
and we will define the \emph{strict version of the} preorder $\leq_D$ over $\relsymb_d$ as\footnote{From Footnote \ref{fn:dual_priority_PDL}, we do not need to define $<_D$ as the order-theoretic dual to $<^+$, avoiding potential confusion as to which item is more preferred.}
\begin{align}\label{eq:def_rules_pref_order}
(\theta\Rightarrow\phi)<_D(\theta'\Rightarrow\phi')\Leftrightarrow\frac{\theta:\phi}{\phi}<^+\frac{\theta':\phi'}{\phi'}.
\end{align}
We can see that the strict toset $\ang{\relsymb_d,\:<_D}$ is order isomorphic to $\ang{D,\:<^+}$.
\item The set of axiom premises is $\mathcal{K}_n=W$, because we take $W$ to be the set of facts. Furthermore, $\mathcal{K}_p=\es$.
\end{enumerate}

\noindent The set $\alg$ of ASPIC$^+$ arguments are defined as in Section \ref{sec:rev_ASPIC+}.\footnote{As $\relsymb_s$ is a countably infinite set, $\alg$ is also a countably infinite set.} All arguments are firm because $\mathcal{K}_p=\es$, and so there are no undermining attacks. As $n$ is undefined, no attack can be an undercut. Therefore, we only have rebut attacks, 
\begin{align}\label{eq:attack}
A\attk B\Leftrightarrow\pair{\exists B',\:B''\subarg B}\:B'=\sqbra{B''\Rightarrow\overline{Conc(A)}}.
\end{align}
Defeats are defined as in Equation \ref{eq:ASPIC+_general_defeat}. In the next section, we will define the argument preference $\precsim$, based on the strict total order $<_D$ over $\relsymb_d$.

\subsection{A Suitable Argument Preference Relation}\label{sec:pref}


We wish to define a suitable argument preference relation such that the conclusion set of the stable extension defined by $\defeat$ corresponds to the extension of the underlying PDT.\footnote{In Section \ref{sec:normative_rationality_of_PDL_inst}, we will show that for the resulting defeat graphs there is only one extension in that is stable, grounded and preferred.} The first place to look for such a relation is in the existing relations of ASPIC$^+$ \cite[Definition 19]{sanjay:13}. However, simple counterexamples can be devised to show the inferences of the PDT and its argumentation counterpart do not correspond.

The difference between PDL and ASPIC$^+$ is in how blocked defaults are treated. In PDL, blocked defaults are simply excluded from the extension. In ASPIC$^+$, it is possible to construct arguments with defeasible rules that correspond to blocked defaults. If $<^+\:\cong\:<_D$ such that $<^+$ is arbitrary, there is no guarantee that the blocked defaults will be positioned in the chain $<^+$ such that arguments with blocked defaults are always defeated by arguments with only non-blocked defaults.\footnote{We will see this explicitly in Example \ref{eg:not_disj_eli_WLP} later.} To ensure that arguments with blocked defaults are defeated and hence the conclusions of the justified arguments form the extension of the PDT, we need to rearrange the rules in $\relsymb_d$ to take into account the structure of arguments. ASPIC$^+$ does allow for explicit reference to argument structure, i.e. we can tell which defeasible rules preceed which within an argument.

Rearranging $<_D$ to take argument structure into account captures the PDL meaning of ``active'' default, because defaults are added to $E_i$ when its prerequisite is inferred. This rearrangement will mean that every defeasible rule $r$ corresponding to a blocked default will be less preferred than the rules which make up arguments that rebut the argument with $r$ as its top rule. We now devise a new ASPIC$^+$ argument preference relation which incorporates the argument structure into the preorder $<_D$.


More formally, given any strict total order $<_D$ on $\relsymb_d$, we first define a transformation $<_D\:\mapsto\:<_{SP}$, where the subscript SP stands for \textit{structure-preference}. This sorts the defeasible rules in a way compatible with both the priority $<_D$ and their logical structure.

The set $\relsymb_d$ is finite because we have assumed that $D$ is finite (Sections \ref{sec:rev_PDL} and \ref{sec:instantiation_choice_of_variables}). Let $1\leq i\leq|\relsymb_d|=:N\in\nat$. We define $a_i\in\relsymb_d$ to be the $<_D$-greatest element of the following set:
\begin{align}\label{eq:SP_ord_def}
\set{r\in\relsymb_d\:\vline\:Ante(r)\subseteq Conc\sqbra{Args\pair{\bigcup_{k=1}^{i-1}\set{a_k}}}}\!-\!\bigcup_{j=1}^{i-1}\set{a_j}.
\end{align}

\noindent The intuition is: $a_1$ is the most preferred rule whose antecedent is inferred by the conclusions of all strict arguments, $a_2$ is the next most preferred rule, whose antecedent is amongst the conclusions of all arguments having \textit{at most} $a_1$ as a defeasible rule. Similarly, $a_3$ is the next most preferred rule, whose antecedent is amongst the conclusions of all arguments having \textit{at most} $a_1$ and $a_2$ as defeasible rules, and so on until all of the rules of $\relsymb_d$ are exhausted. Notice that the second union after the set difference in Equation \ref{eq:SP_ord_def} ensures that once a rule is applied it cannot be applied again. We then define $<_{SP}$ as (notice the dual order)
\begin{align}\label{eq:SP_order}
a_i <_{SP} a_j\Leftrightarrow j < i.
\end{align}
We define the non-strict order to be $a_i\leq_{SP}a_i\Leftrightarrow\sqbra{a_i=a_j\text{ or }a_i<_{SP}a_j}$. This makes sense because $i\mapsto a_i$ is bijective between $\relsymb_d$ and $\set{1,2,3,\ldots,\:N}$. Clearly $<_{SP}$ is a strict total order on $\relsymb_d$. We call this the \textit{structure preference order} on $\relsymb_d$, which exists and is unique given $<_D$. This means the transformation $<_D\:\mapsto\:<_{SP}$ is functional, where $<_D$ is total on $\relsymb_d$.

Now let $<_D$ be \textit{any} strict partial order on $\relsymb_d$. We define the strict set comparison relation on $\powfin\pair{\relsymb_d}$ corresponding to $<_D$. For $\Gamma,\:\Gamma'\subseteq_\text{fin}\relsymb_d$, the relation $\orddeneq$, called \textit{the disjoint elitist order}, is defined as follows:
\begin{align}\label{eq:disj_eli}
\Gamma\orddeneq\Gamma'\Leftrightarrow\pair{\exists x\in\Gamma-\Gamma'}\pair{\forall y\in\Gamma'-\Gamma}x<_{D}y.
\end{align}
The lifting $<_{D}\:\mapsto\:\orddeneq$ is functional. We will focus on the following special case of $\orddeneq$, where instead of $<_D$ we have $<_{SP}$:
\begin{align}\label{eq:SP_set_comparison}
\Gamma\ordneq_{SP}\Gamma'\Leftrightarrow\pair{\exists x\in\Gamma-\Gamma'}\pair{\forall y\in\Gamma'-\Gamma}x<_{SP}y.
\end{align}
The corresponding strict argument preference is, for $A,B\in\alg$,
\begin{align}\label{eq:SP_arg_pref}
A\prec_{SP}B\Leftrightarrow DR(A)\ordneq_{SP} DR(B).
\end{align}
We define the corresponding non-strict preference as
\begin{align}\label{eq:SP_arg_pref_non_strict}
A\precsim_{SP}B\Leftrightarrow\sqbra{DR(A)\ordneq_{SP}DR(B)\text{ or } DR(A)=DR(B)}
\end{align}

We now show that $\precsim_{SP}$ satisfies the following properties.


\begin{lem}\label{lem:larger_args_less_pref}
For all $A,B\in\alg$, $DR(A)\subseteq DR(B)\Rightarrow B\precsim_{SP} A$.
\end{lem}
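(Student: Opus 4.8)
The plan is to unpack the definition of $\precsim_{SP}$ (Equation \ref{eq:SP_arg_pref_non_strict}) and split on whether the inclusion $DR(A)\subseteq DR(B)$ is strict. The entire argument rests on a single observation: the set-comparison relation $\ordneq_{SP}$ (Equation \ref{eq:SP_set_comparison}) is defined via a universal quantifier ranging over a set difference, and when that set difference is empty the quantifier holds vacuously.

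First I would dispose of the equality case. If $DR(A)=DR(B)$, then the second disjunct of Equation \ref{eq:SP_arg_pref_non_strict} immediately yields $B\precsim_{SP}A$, so nothing further is required.

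For the remaining case, suppose $DR(A)\subsetneq DR(B)$. Then $DR(A)-DR(B)=\es$ while $DR(B)-DR(A)\neq\es$. I would apply the definition of $\ordneq_{SP}$ (Equation \ref{eq:SP_set_comparison}) with $\Gamma:=DR(B)$ and $\Gamma':=DR(A)$: to establish $DR(B)\ordneq_{SP}DR(A)$ I must exhibit some $x\in DR(B)-DR(A)$ such that $\pair{\forall y\in DR(A)-DR(B)}x<_{SP}y$. Since $DR(A)-DR(B)=\es$, the inner universally quantified statement is vacuously true for every candidate $x$; and since $DR(B)-DR(A)\neq\es$, at least one such $x$ exists. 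Hence $DR(B)\ordneq_{SP}DR(A)$, and by the first disjunct of Equation \ref{eq:SP_arg_pref_non_strict} we conclude $B\precsim_{SP}A$.

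The main (and really the only) subtlety is keeping the direction of the set differences straight: because the goal is $B\precsim_{SP}A$, I instantiate $\ordneq_{SP}$ as $DR(B)\ordneq_{SP}DR(A)$, so it is precisely $DR(A)-DR(B)$ — the empty difference under proper inclusion — that sits under the universal quantifier. There is no genuine obstacle here, since the result is a direct consequence of vacuous quantification; notably, no property of the particular ordering $<_{SP}$ is invoked, so the same reasoning would establish the analogous statement for the disjoint elitist order $\orddeneq$ built from any strict partial order.
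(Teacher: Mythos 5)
Your proof is correct and follows essentially the same route as the paper's: split into the equality case (handled by the second disjunct of Equation \ref{eq:SP_arg_pref_non_strict}) and the proper-inclusion case, where $DR(A)-DR(B)=\es$ makes the universal quantifier in Equation \ref{eq:SP_set_comparison} vacuously true. If anything, you are slightly more careful than the paper in noting explicitly that the existential quantifier also needs a witness, which proper inclusion supplies via $DR(B)-DR(A)\neq\es$.
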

\begin{proof}
If $DR(B)=DR(A)$ then $B\approx A$, so $B\precsim_{SP} A$. If $DR(A)\subset DR(B)$, then $DR(A)-DR(B)=\es$, which means $B\prec_{SP} A$ is vacuously true from Equation \ref{eq:SP_set_comparison} so $B\precsim_{SP} A$.
\end{proof}

\noindent The following result shows that larger arguments, which potentially can contain more fallible information (i.e. defeasible rules), cannot be more preferred than its (smaller) subarguments.

\begin{cor}\label{cor:larger_args_less_pref}
For all $A,B\in\alg$, if $A\subarg B$ then $B\precsim_{SP}A$.
\end{cor}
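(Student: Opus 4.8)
The plan is to reduce the corollary directly to Lemma \ref{lem:larger_args_less_pref} by showing that the subargument relation forces containment of the defeasible-rule sets. Concretely, I would first establish the implication $A\subarg B\Rightarrow DR(A)\subseteq DR(B)$, and then invoke Lemma \ref{lem:larger_args_less_pref} verbatim to obtain $B\precsim_{SP}A$.

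For the containment, the cleanest route reuses the machinery already in place rather than arguing from scratch. By the characterisation in Equation \ref{eq:def_Args(R)}, we have $B\in Args\pair{DR(B)}$, since trivially $DR(B)\subseteq DR(B)$. By Lemma \ref{lem:Args(R)_props}, the set $Args\pair{DR(B)}$ is closed under subarguments; as $A\subarg B$, this yields $A\in Args\pair{DR(B)}$, and applying Equation \ref{eq:def_Args(R)} once more gives exactly $DR(A)\subseteq DR(B)$. (Alternatively, one could argue by induction on the construction of $B$: the base case is immediate because a singleton argument has only itself as a subargument, and in the inductive step $Sub(B)=\set{B}\cup\bigcup_i Sub(A_i)$ together with $DR(B)=DR\pair{TopRule(B)}\cup\bigcup_i DR(A_i)$ lets the inductive hypothesis propagate the containment to any proper subargument. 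I would prefer the first route as it avoids reproving a fact essentially captured by Lemma \ref{lem:Args(R)_props}.)

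With $DR(A)\subseteq DR(B)$ established, the conclusion is immediate: this is precisely the hypothesis of Lemma \ref{lem:larger_args_less_pref}, whose conclusion is $B\precsim_{SP}A$. There is no real obstacle here; the only point requiring attention is bookkeeping on the direction of the preference, since $<_{SP}$ is defined dually in Equation \ref{eq:SP_order} and the set comparison $\ordneq_{SP}$ in Equation \ref{eq:SP_set_comparison} is vacuously satisfied when the relevant set difference is empty. This is exactly why the \emph{smaller} argument $A$ (the subargument) turns out to be at least as preferred as the larger argument $B$, matching the intuition that an argument carrying more fallible defeasible rules cannot be strictly preferred to any of its subarguments.
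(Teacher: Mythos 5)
Your proposal is correct and takes essentially the same route as the paper: the paper's proof likewise reduces the corollary to the containment $DR(A)\subseteq DR(B)$ (asserted to follow from the inductive construction of arguments, i.e.\ your second, inductive route) and then invokes Lemma \ref{lem:larger_args_less_pref}. One small caveat on your preferred derivation: the paper's own proof of Lemma \ref{lem:Args(R)_props} already uses the fact that $A\subarg B\Rightarrow DR(A)\subseteq DR(B)$ as a primitive structural property, so while citing that lemma is formally valid, it is foundationally backwards, and the direct induction you sketch as an alternative is the argument actually intended.
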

\begin{proof}
It can be shown from how ASPIC$^+$ arguments are constructed (Section \ref{sec:rev_ASPIC+}) that $A\subarg B\Rightarrow DR(A)\subseteq DR(B)$, and then invoke Lemma \ref{lem:larger_args_less_pref}.
\end{proof}

\begin{cor}\label{cor:empty_set_is_top}
Strict arguments are $\precsim_{SP}$-maximal.
\end{cor}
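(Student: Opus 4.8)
The plan is to recall that, by definition, a strict argument $A$ satisfies $DR(A)=\es$, and to show directly that no argument can be strictly more $\precsim_{SP}$-preferred than such an $A$; that is, $A\not\prec_{SP}B$ for every $B\in\alg$, which is precisely the assertion that $A$ is $\precsim_{SP}$-maximal.

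First I would invoke Lemma \ref{lem:larger_args_less_pref}. For any argument $B\in\alg$ we trivially have $DR(A)=\es\subseteq DR(B)$, since the empty set is a subset of every set. Lemma \ref{lem:larger_args_less_pref} then yields $B\precsim_{SP}A$ immediately. Hence every argument is at most as preferred as $A$ (indeed $A$ is a greatest element of $\ang{\alg,\:\precsim_{SP}}$), and in particular there is no $B$ with $A\prec_{SP}B$, so $A$ is $\precsim_{SP}$-maximal.

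Alternatively, one can argue directly from the set-comparison definition of Equation \ref{eq:SP_set_comparison}: for $A\prec_{SP}B$ to hold we would need $\es=DR(A)\ordneq_{SP}DR(B)$, which by Equation \ref{eq:SP_set_comparison} requires some $x\in DR(A)-DR(B)=\es-DR(B)=\es$. As no such $x$ exists, the existential is vacuously false, so $A\not\prec_{SP}B$ for every $B$.

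There is no genuine obstacle here; the corollary is an immediate consequence of Lemma \ref{lem:larger_args_less_pref}, resting ultimately on the vacuity of the existential quantifier in Equation \ref{eq:SP_set_comparison} when the first set is empty. The only point worth stating explicitly is the identification of ``strict argument'' with ``$DR(A)=\es$'', after which either route closes the proof in a single line.
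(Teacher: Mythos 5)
Your proposal is correct, and your second route is exactly the paper's own proof: the paper argues that for a strict argument $A$, $A\prec_{SP}B$ would require $\pair{\exists x\in\es}\pair{\forall y\in DR(B)}\:x<_{SP}y$, which fails at the first bounded quantifier. One caution about your first route: since $\prec_{SP}$ is defined directly by Equation \ref{eq:SP_arg_pref} rather than as the strict part of $\precsim_{SP}$, passing from ``$B\precsim_{SP}A$ for all $B$'' to ``no $B$ with $A\prec_{SP}B$'' tacitly uses irreflexivity and asymmetry of $\ordneq_{SP}$ (Lemma \ref{lem:disj_eli_is_transitive_over_chain}), whereas your direct vacuity argument needs nothing beyond the definition.
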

\begin{proof}
Let $A\in\alg$ be strict and $B\in\alg$ be arbitrary. Assume for contradiction that $A\prec_{SP}B$. As $DR(A)=\es$, Equations \ref{eq:SP_set_comparison} and \ref{eq:SP_arg_pref} instantiate to: $A\prec_{SP}B\Leftrightarrow\pair{\exists x\in\es}\pair{\forall y\in DR(B)}x<_{SP}y$, which is impossible by the first bounded quantifier. Therefore, if $A$ is strict, then for all $\pair{\forall B\in\alg}A\not\prec_{SP}B$.
\end{proof}

\begin{lem}\label{lem:disj_eli_is_transitive_over_chain}
Let $\ang{P,<}$ be a strict toset, then $\ang{\powfin\pair{P},\orddeneq}$ is also a strict toset, where $\orddeneq$ is defined in Equation \ref{eq:disj_eli}, here with $<$ instead of $<_D$.
\end{lem}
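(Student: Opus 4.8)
The plan is to verify the three defining properties of a strict total order on $\powfin(P)$: irreflexivity, transitivity, and totality (trichotomy). Everything rests on one reformulation of $\orddeneq$ that exploits the linearity of $<$. Since $\Gamma,\Gamma'$ are finite, $\Gamma\ominus\Gamma'$ is finite, and when $\Gamma\neq\Gamma'$ it is nonempty; as $<$ is total this set is a finite chain and so has a unique $<$-least element, for which I write $\ell(\Gamma,\Gamma')$ (note $\ell(\Gamma,\Gamma')=\ell(\Gamma',\Gamma)$). First I would establish
\begin{align*}
\Gamma\orddeneq\Gamma'\iff\ell(\Gamma,\Gamma')\in\Gamma\qquad(\Gamma\neq\Gamma').
\end{align*}
Writing $A:=\Gamma-\Gamma'$ and $B:=\Gamma'-\Gamma$ (disjoint), the proof is a short case split: if $A=\es$ both sides fail; if $B=\es$ both hold; and if $A,B$ are both nonempty then $\pair{\exists x\in A}\pair{\forall y\in B}x<y$ holds iff the least element of $A$ precedes the least element of $B$, which in turn holds iff $\ell(\Gamma,\Gamma')$, the least element of $A\cup B=\Gamma\ominus\Gamma'$, lies in $A\subseteq\Gamma$.

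Irreflexivity is then immediate, since $\Gamma\ominus\Gamma=\es$ admits no witness $x$ in Equation \ref{eq:disj_eli}. For totality and asymmetry simultaneously, I would argue that for distinct $\Gamma,\Gamma'$ the element $\ell(\Gamma,\Gamma')$ lies in exactly one of $\Gamma-\Gamma'$ or $\Gamma'-\Gamma$; by the reformulation the first case gives $\Gamma\orddeneq\Gamma'$ and the second gives $\Gamma'\orddeneq\Gamma$, and the two cannot hold together since that would place $\ell(\Gamma,\Gamma')$ in both $\Gamma$ and $\Gamma'$. Hence exactly one of $\Gamma\orddeneq\Gamma'$, $\Gamma=\Gamma'$, $\Gamma'\orddeneq\Gamma$ holds, which is trichotomy.

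Transitivity is the main obstacle, and where the reformulation earns its keep. Suppose $\Gamma\orddeneq\Gamma'$ and $\Gamma'\orddeneq\Gamma''$, and set $m_1:=\ell(\Gamma,\Gamma')\in\Gamma-\Gamma'$ and $m_2:=\ell(\Gamma',\Gamma'')\in\Gamma'-\Gamma''$; note $m_1\neq m_2$ as $m_1\notin\Gamma'\ni m_2$, so one is strictly $<$-smaller. The crux is the observation that any $z$ with $z<m_1$ and $z<m_2$ satisfies both $\pair{z\in\Gamma\Leftrightarrow z\in\Gamma'}$ and $\pair{z\in\Gamma'\Leftrightarrow z\in\Gamma''}$ (being below the least point of each symmetric difference), whence $\pair{z\in\Gamma\Leftrightarrow z\in\Gamma''}$ and so $z\notin\Gamma\ominus\Gamma''$. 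Let $m$ be the $<$-smaller of $m_1,m_2$. A case split shows $m\in\Gamma-\Gamma''$: if $m=m_1<m_2$ then $m_1\notin\Gamma''$ (else $m_1\in\Gamma''-\Gamma'\subseteq\Gamma'\ominus\Gamma''$ would contradict $m_1<m_2$), while if $m=m_2<m_1$ then $m_2\in\Gamma$ (else $m_2\in\Gamma'-\Gamma\subseteq\Gamma\ominus\Gamma'$ would contradict $m_2<m_1$). Combining with the displayed observation, $m$ is the $<$-least element of $\Gamma\ominus\Gamma''$ and lies in $\Gamma$, so $\ell(\Gamma,\Gamma'')=m\in\Gamma$ and the reformulation yields $\Gamma\orddeneq\Gamma''$. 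Since asymmetry already rules out $\Gamma=\Gamma''$, the reformulation applies, and this completes the proof that $\ang{\powfin\pair{P},\orddeneq}$ is a strict toset.
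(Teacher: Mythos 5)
Your proof is correct, but it takes a genuinely different route from the paper's, and a more economical one. The paper proves transitivity by brute force: it decomposes $\Gamma,\Gamma',\Gamma''$ into the seven regions of a Venn diagram, expands $\Gamma\orddeneq\Gamma'$ and $\Gamma'\orddeneq\Gamma''$ into disjunctions of element-wise comparisons, distributes, and works through four cases (one of which invokes totality of $<$, another of which is contradictory); the least-element-of-the-symmetric-difference idea appears in the paper only at the very end, and only as a one-directional implication used to establish trichotomy. You instead promote that idea to a full characterisation, $\Gamma\orddeneq\Gamma'\Leftrightarrow\ell\pair{\Gamma,\Gamma'}\in\Gamma$ for $\Gamma\neq\Gamma'$ where $\ell\pair{\Gamma,\Gamma'}:=\min_<\pair{\Gamma\ominus\Gamma'}$, and then derive all three order axioms from it: irreflexivity trivially, trichotomy because $\ell\pair{\Gamma,\Gamma'}$ lies in exactly one of $\Gamma-\Gamma'$ and $\Gamma'-\Gamma$, and transitivity by showing that $\min\pair{m_1,m_2}$ is itself the least element of $\Gamma\ominus\Gamma''$ and lies in $\Gamma$ (your two case splits checking $m_1\notin\Gamma''$, resp.\ $m_2\in\Gamma$, are exactly what is needed, and your observation that points below both $m_1$ and $m_2$ are absent from $\Gamma\ominus\Gamma''$ closes the argument). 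What your approach buys is a proof that is much shorter, makes the use of totality transparent (it enters only through the existence of least elements and the comparability of $m_1$ and $m_2$, matching the paper's footnote that transitivity fails for non-modular partial orders), and yields trichotomy as a genuine "exactly one" statement directly from the iff, whereas the paper's trichotomy argument, read literally, only exhibits at least one of the two relations and even contains a typo ($\Gamma'\orddeneq\Gamma'$ where $\Gamma'\orddeneq\Gamma$ is meant). What the paper's longer computation buys is an explicit, self-contained verification at the level of the defining formula, with no auxiliary lemma to set up.
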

\begin{proof}
We prove $\orddeneq$ is irreflexive, transitive and total on $\powfin\pair{P}$, assuming that $<$ is a strict total order on $P$.\footnote{\label{fn:acyclic} More generally, it can be shown that for \textit{any} strict partial order $<$, the relation $\orddeneq$ from Equation \ref{eq:disj_eli} is acyclic, and hence irreflexive and asymmetric, but not necessarily transitive. If $<$ is a \textit{modular} order \cite[Lemma 3.7]{LM:92}, then $\orddeneq$ is transitive. Further, if $<$ total (recalling that total orders are modular), then $\orddeneq$ is trichotomous, and hence a strict total order.} To show irreflexivity, assume for contradiction that there is some $\Gamma\in\powfin\pair{P}$ such that $\Gamma\orddeneq\Gamma$, which by Equation \ref{eq:disj_eli} is equivalent to a formula whose first bounded quantifier is ``$\pair{\exists x\in\es}$'', which is false, so $\orddeneq$ is irreflexive. To show transitivity, let $n_1,\:\cdots,\:n_7\in\nat$, such that
\begin{align}
&\set{a_1,\:\cdots,\:a_{n_1}}\cup\set{b_1,\:\cdots,\:b_{n_2}}\cup\set{c_1,\:\cdots,\:c_{n_3}}\cup\set{d_1,\:\cdots,\:d_{n_4}}\nonumber\\
\cup&\set{e_1,\:\cdots,\:e_{n_5}}\cup\set{f_1,\:\cdots,\:f_{n_6}}\cup\set{g_1,\:\cdots,\:g_{n_7}}\subseteq P.
\end{align}
All of these elements $a_1,\:\ldots,\:g_{n_7}$ are distinct. If $n_i=0$ then the corresponding set is empty. Let $\Gamma,\:\Gamma',\:\Gamma''\subseteq_\text{fin}P$, where
\begin{align*}
\Gamma&=\set{a_1,\:\cdots,\:a_{n_1}}\cup\set{d_1,\:\cdots,\:d_{n_4}}\cup\set{f_1,\:\cdots,\:f_{n_6}}\cup\set{g_1,\:\cdots,\:g_{n_7}},\\
\Gamma'&=\set{b_1,\:\cdots,\:b_{n_2}}\cup\set{d_1,\:\cdots,\:d_{n_4}}\cup\set{e_1,\:\cdots,\:e_{n_5}}\cup\set{g_1,\:\cdots,\:g_{n_7}}\text{ and}\\
\Gamma''&=\set{c_1,\:\cdots,\:c_{n_3}}\cup\set{e_1,\:\cdots,\:e_{n_5}}\cup\set{f_1,\:\cdots,\:f_{n_6}}\cup\set{g_1,\:\cdots,\:g_{n_7}}.
\end{align*}

\noindent We can picture these sets with the Venn diagram in Figure \ref{figure:Venn}. The solid outer rectangle represents the set $P$. The three finite sets $\Gamma,\:\Gamma',\:\Gamma''$ are the three rectangles within. Each overlapping region has exactly the elements indicated and nothing more. This configuration exhausts all possibilities for $\Gamma,\:\Gamma'$ and $\Gamma''$.

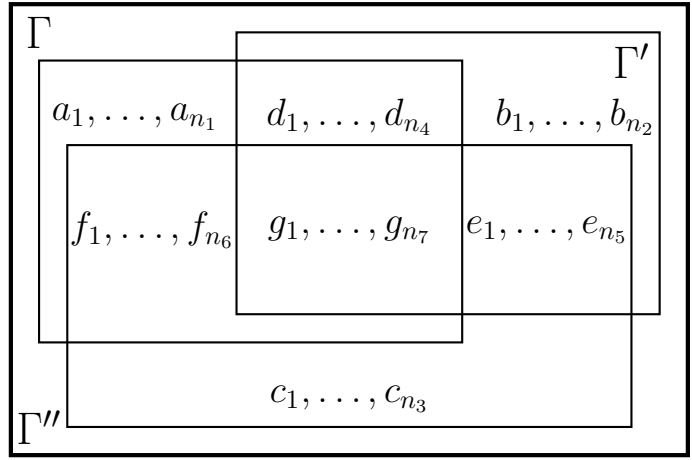
\begin{figure}[h]
\begin{center}
\begin{tikzpicture}[scale=0.75]
\draw [ultra thick] (0,0) rectangle (12,-8);
\draw [thick] (0.5,-1) rectangle (8,-6);
\draw [thick] (4,-0.5) rectangle (11.5,-5.5);
\draw [thick] (1,-2.5) rectangle (11,-7.5);
\node (Gamma) at (0.5,-0.5) {\LARGE $\Gamma$};
\node (Gamma) at (11,-1) {\LARGE $\Gamma'$};
\node (Gamma) at (0.5,-7.5) {\LARGE $\Gamma''$};
\node (a) at (2.2,-2) {\Large $a_1,\ldots,a_{n_1}$};
\node (b) at (10,-2) {\Large $b_1,\ldots,b_{n_2}$};
\node (c) at (6,-7) {\Large $c_1,\ldots,c_{n_3}$};
\node (d) at (6,-2) {\Large $d_1,\ldots,d_{n_4}$};
\node (e) at (9.5,-4) {\Large $e_1,\ldots,e_{n_5}$};
\node (f) at (2.5,-4) {\Large $f_1,\ldots,f_{n_6}$};
\node (g) at (6,-4) {\Large $g_1,\ldots,g_{n_7}$};
\end{tikzpicture}
\caption{Venn diagram for the proof of Lemma \ref{lem:disj_eli_is_transitive_over_chain}.}
\label{figure:Venn}
\end{center}
\end{figure}


\noindent Now suppose $<$ permits $\Gamma\orddeneq\Gamma'\orddeneq\Gamma''$, we write this out in terms of elements (Equations \ref{eq:first_ass_trans} and \ref{eq:second_ass_trans}). $\Gamma\orddeneq\Gamma'$ is equivalent to

\begin{align}\label{eq:first_ass_trans}
&\pair{\exists x\in\Gamma-\Gamma'}\pair{\forall y\in\Gamma'-\Gamma}\:x<y\nonumber\\
\Leftrightarrow&\pair{\exists x\in\set{a_1,\cdots,a_{n_1}}\cup\set{f_1,\cdots,f_{n_6}}}\pair{\forall y\in\set{b_1,\cdots,b_{n_2}}\cup\set{e_1,\cdots,e_{n_5}}}x<y\nonumber\\
\Leftrightarrow&\pair{\exists x\in\set{a_1,\:\cdots,\:a_{n_1}}\cup\set{f_1,\:\cdots,\:f_{n_6}}}\sqbra{\pair{\bigwedge_{i=1}^{n_2}x<b_i}\wedge\pair{\bigwedge_{j=1}^{n_5}x<e_j}}\nonumber\\
\Leftrightarrow&\bigvee_{k=1}^{n_1}\sqbra{\pair{\bigwedge_{i=1}^{n_2}a_k<b_i}\wedge\pair{\bigwedge_{j=1}^{n_5}a_k<e_j}}\vee\bigvee_{l=1}^{n_6}\sqbra{\pair{\bigwedge_{i=1}^{n_2}f_l<b_i}\wedge\pair{\bigwedge_{j=1}^{n_5}f_l<e_j}}.
\end{align}
\noindent Note that there are $(n_1+n_6)$ disjuncts in Equation \ref{eq:first_ass_trans}. Applying the same reasoning as in Equation \ref{eq:first_ass_trans}, we can see that $\Gamma'\orddeneq\Gamma''$ is equivalent to

\begin{align}\label{eq:second_ass_trans}
\bigvee_{k=1}^{n_2}\sqbra{\pair{\bigwedge_{i=1}^{n_3}b_k<c_i}\wedge\pair{\bigwedge_{j=1}^{n_6}b_k<f_j}}\vee\bigvee_{l=1}^{n_4}\sqbra{\pair{\bigwedge_{i=1}^{n_3}d_l<c_i}\wedge\pair{\bigwedge_{j=1}^{n_6}d_l<f_j}}.
\end{align}

\noindent There are $(n_2+n_4)$ disjuncts in Equation \ref{eq:first_ass_trans}. We need to show that $\Gamma\orddeneq\Gamma''$. By the same reasoning as Equations \ref{eq:first_ass_trans} and \ref{eq:second_ass_trans}, this is equivalent to
\begin{align}\label{eq:target}
\bigvee_{k=1}^{n_1}\sqbra{\pair{\bigwedge_{i=1}^{n_3}a_k<c_i}\wedge\pair{\bigwedge_{j=1}^{n_5}a_k<e_j}}\vee\bigvee_{l=1}^{n_4}\sqbra{\pair{\bigwedge_{i=1}^{n_3}d_l<c_i}\wedge\pair{\bigwedge_{j=1}^{n_5}d_l<e_j}}.
\end{align}

So, to prove Equation \ref{eq:target}, we need to show one of the disjuncts of Equation \ref{eq:target} i.e. for at least one of $1\leq k\leq n_1$ or $1\leq l\leq n_4$, we show either
\begin{align}\label{eq:answer_proof_trans}
\sqbra{\pair{\bigwedge_{i=1}^{n_3}a_k<c_i}\wedge\pair{\bigwedge_{j=1}^{n_5}a_k<e_j}}\text{ or }\sqbra{\pair{\bigwedge_{i=1}^{n_3}d_l<c_i}\wedge\pair{\bigwedge_{j=1}^{n_5}d_l<e_j}}
\end{align}
by establishing all of the conjuncts. Given $\Gamma\orddeneq\Gamma'\orddeneq\Gamma''$, we take the conjunction of Equations \ref{eq:first_ass_trans} and \ref{eq:second_ass_trans}, making $(n_1+n_6)(n_2+n_4)$ disjuncts, which is equivalent to the following expression:
\begin{align*}
&\set{\bigvee_{k=1}^{n_1}\sqbra{\pair{\bigwedge_{i=1}^{n_2}a_k<b_i}\wedge\pair{\bigwedge_{j=1}^{n_5}a_k<e_j}}\vee\bigvee_{l=1}^{n_6}\sqbra{\pair{\bigwedge_{i=1}^{n_2}f_l<b_i}\wedge\pair{\bigwedge_{j=1}^{n_5}f_l<e_j}}}\\
\wedge&\set{\bigvee_{k=1}^{n_2}\sqbra{\pair{\bigwedge_{i=1}^{n_3}b_k<c_i}\wedge\pair{\bigwedge_{j=1}^{n_6}b_k<f_j}}\vee\bigvee_{l=1}^{n_4}\sqbra{\pair{\bigwedge_{i=1}^{n_3}d_l<c_i}\wedge\pair{\bigwedge_{j=1}^{n_6}d_l<f_j}}}.
\end{align*}
As $\wedge$ and $\vee$ bi-distribute, we have four cases:
\begin{enumerate}
\item For some $1\leq k\leq n_1$ and $1\leq k'\leq n_2$, we have
\begin{align}
\pair{\bigwedge_{i=1}^{n_2}a_k<b_i}\wedge\pair{\bigwedge_{j=1}^{n_5}a_k<e_j}\wedge\pair{\bigwedge_{i'=1}^{n_3}b_{k'}<c_{i'}}\wedge\pair{\bigwedge_{j'=1}^{n_6}b_{k'}<f_{j'}}.
\end{align}
This means for some $1\leq k\leq n_1$, we have
\begin{align}\label{eq:1.1}
&\pair{\bigwedge_{j=1}^{n_5}a_k<e_j}\text{, and from}\\
&\pair{\bigwedge_{i=1}^{n_2}a_k<b_i}\wedge\pair{\bigwedge_{i'=1}^{n_3}b_{k'}<c_{i'}},\nonumber
\end{align}
that $1\leq k'\leq n_2$, and transitivity of $<$, we infer
\begin{align}\label{eq:1.2}
\pair{\bigwedge_{i=1}^{n_3}a_k<c_i}.
\end{align}
Equations \ref{eq:1.1} and \ref{eq:1.2} imply $\Gamma\orddeneq\Gamma''$.
\item For some $1\leq k\leq n_1$ and $1\leq l\leq n_4$, we have
\begin{align}\label{eq:case2}
\pair{\bigwedge_{i=1}^{n_2}a_k<b_i}\wedge\pair{\bigwedge_{j=1}^{n_5}a_k<e_j}\wedge\pair{\bigwedge_{i'=1}^{n_3}d_l<c_{i'}}\wedge\pair{\bigwedge_{j'=1}^{n_6}d_l<f_{j'}}
\end{align}
This case uses the assumption that $<$ is total.\footnote{It can be shown that if $<$ is not total, $\orddeneq$ is not transitive, see \cite[Lemma A.2]{APY_ArXiV:2016}.} The second and the third bracketed conjuncts in Equation \ref{eq:case2} are necessary but not sufficient to lead to $\Gamma\orddeneq\Gamma''$. Let $k_0$ be the witness to $1\leq k\leq n_1$ and $l_0$ be the witness to $1\leq l_0\leq n_4$. As $<$ is total, either $a_{k_0}<d_{l_0}$ or $d_{l_0}<a_{k_0}$ (remember all elements are distinct).
\begin{itemize}
\item If $a_{k_0}<d_{l_0}$ then $a_{k_0}<c_i$ for all $1\leq i\leq n_3$. Therefore, $\pair{\bigwedge_{i=1}^{n_3}a_{k_0}<c_i}$.
\item If $d_{l_0}<a_{k_0}$ then $d_{l_0}<e_j$ for all $1\leq j\leq n_5$. Therefore, $\pair{\bigwedge_{j=1}^{n_5}d_{l_0}<e_j}$.
\end{itemize}
In either case, $\Gamma\orddeneq\Gamma''$.
\item For some $1\leq l\leq n_6$ and $1\leq k\leq n_2$, we have

\begin{align}
\pair{\bigwedge_{i=1}^{n_2}f_l<b_i}\wedge\pair{\bigwedge_{j=1}^{n_5}f_l<e_j}\wedge\pair{\bigwedge_{i'=1}^{n_3}b_k<c_{i'}}\wedge\pair{\bigwedge_{j'=1}^{n_6}b_k<f_{j'}}
\end{align}
The irreflexivity of $<$ and the first and last bracketed conjuncts gives a contradiction when you run over all indices, so this case gives a contradiction.
\item For some $1\leq l\leq n_6$ and $1\leq l'\leq n_4$, we have
\begin{align}
\pair{\bigwedge_{i=1}^{n_2}f_l<b_i}\wedge\pair{\bigwedge_{j=1}^{n_5}f_l<e_j}\wedge\pair{\bigwedge_{i'=1}^{n_3}d_{l'}<c_{i'}}\wedge\pair{\bigwedge_{j'=1}^{n_6}d_{l'}<f_{j'}}
\end{align}
This case is similar to the first case -- we use transitivity to combine the second and last bracketed conjuncts. This infers the second conjunct of Equation \ref{eq:answer_proof_trans}, which means $\Gamma\orddeneq\Gamma''$.
\end{enumerate}
Therefore, in all cases, $\Gamma\orddeneq\Gamma''$. This shows $\orddeneq$ is transitive on $\powfin\pair{P}$.

To show trichotomy, let $\Gamma,\:\Gamma'\in\powfin(P)$ be arbitrary. We start by assuming $\Gamma\neq\Gamma'$ and show exactly one of $\Gamma\orddeneq\Gamma'$ or $\Gamma'\orddeneq\Gamma$ is true. From Equation \ref{eq:disj_eli}, we consider the symmetric difference $\Gamma\ominus\Gamma'$. The set $\ang{\Gamma\ominus\Gamma',\:<}\:\subseteq\ang{P,\:<}$ is also a finite strict toset. This means there must exist a $<$-least element $x_0\in\Gamma\ominus\Gamma'$, say. There are two mutually exclusive possibilities. If $x_0\in\Gamma-\Gamma'$, then $\Gamma\orddeneq\Gamma'$. If $x_0\in\Gamma'-\Gamma$, then $\Gamma'\orddeneq\Gamma'$. This establishes trichotomy. Therefore, $\ang{\powfin\pair{P},\orddeneq}$ is a strict chain.
\end{proof}

\noindent Therefore, given the strict toset $\ang{\relsymb_d,<_D}$, $\ang{\powfin\pair{\relsymb_d},\ordneq_{SP}}$ is also a strict toset.

\begin{lem}\label{lem:SP_total_preorder_when_total}
The argument preference $\precsim_{SP}$ is a total preorder on $\alg$.
\end{lem}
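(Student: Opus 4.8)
The plan is to establish the three defining properties of a total preorder --- reflexivity, transitivity and totality --- for $\precsim_{SP}$. The key observation is that, by Equations \ref{eq:SP_arg_pref} and \ref{eq:SP_arg_pref_non_strict}, the relation $\precsim_{SP}$ is nothing but the pullback along the map $DR:\alg\to\powfin\pair{\relsymb_d}$ of the reflexive closure of $\ordneq_{SP}$ on $\powfin\pair{\relsymb_d}$; that is, $A\precsim_{SP}B$ holds exactly when $DR(A)$ stands to $DR(B)$ in the relation ``$DR(A)\ordneq_{SP}DR(B)$ or $DR(A)=DR(B)$''. By Lemma \ref{lem:disj_eli_is_transitive_over_chain}, specialised from $<_D$ to $<_{SP}$ exactly as noted in the sentence immediately following that lemma, $\ordneq_{SP}$ is a strict total order on $\powfin\pair{\relsymb_d}$, hence its reflexive closure is a (non-strict) total order there. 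The general principle I would invoke is that the pullback of any total preorder along any map is again a total preorder.

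First I would verify reflexivity: for any $A\in\alg$ we have $DR(A)=DR(A)$, so $A\precsim_{SP}A$ by the right disjunct of Equation \ref{eq:SP_arg_pref_non_strict}. Next I would verify transitivity by a four-way case split on which disjunct of Equation \ref{eq:SP_arg_pref_non_strict} supplies $A\precsim_{SP}B$ and which supplies $B\precsim_{SP}C$. When both are the strict disjunct, I appeal to the transitivity of $\ordneq_{SP}$ proved in Lemma \ref{lem:disj_eli_is_transitive_over_chain} to conclude $DR(A)\ordneq_{SP}DR(C)$; in each of the three remaining cases at least one of the set-comparisons is an equality, so I substitute and reduce immediately to either the single strict comparison $DR(A)\ordneq_{SP}DR(C)$ or the equality $DR(A)=DR(C)$. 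In every case $A\precsim_{SP}C$ follows from Equation \ref{eq:SP_arg_pref_non_strict}.

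Finally I would verify totality using the trichotomy of $\ordneq_{SP}$ (again from Lemma \ref{lem:disj_eli_is_transitive_over_chain}): for arbitrary $A,B\in\alg$ exactly one of $DR(A)\ordneq_{SP}DR(B)$, $DR(B)\ordneq_{SP}DR(A)$, or $DR(A)=DR(B)$ holds, and these respectively yield $A\precsim_{SP}B$, $B\precsim_{SP}A$, or both. I do not anticipate any genuine obstacle: all the substantive work lies in Lemma \ref{lem:disj_eli_is_transitive_over_chain}, which already certifies that $\ordneq_{SP}$ is a strict total order, and the remainder is the routine check that pulling this order back along $DR$ preserves reflexivity, transitivity and totality. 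It is worth emphasising that the pullback need not preserve antisymmetry --- distinct arguments $A\neq B$ with $DR(A)=DR(B)$ satisfy both $A\precsim_{SP}B$ and $B\precsim_{SP}A$ --- which is precisely why the result is a total \emph{preorder} rather than a total order, and it is consistent with the equivalence $\approx$ already appearing in Lemma \ref{lem:larger_args_less_pref}.
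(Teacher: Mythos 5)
Your proposal is correct and follows essentially the same route as the paper: both proofs delegate all substantive work to Lemma \ref{lem:disj_eli_is_transitive_over_chain} instantiated at $\ang{\relsymb_d,<_{SP}}$, and then lift the resulting strict total order on $\powfin\pair{\relsymb_d}$ to $\alg$ via $DR$. Your write-up is in fact slightly more careful than the paper's, which asserts that $\prec_{SP}$ is a strict total order \emph{on $\alg$} (false for distinct arguments with equal $DR$ sets, as your antisymmetry remark correctly notes), whereas you only claim what is actually needed, namely that the pullback is a total preorder.
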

\begin{proof}
We instantiate $\ang{P,<}$ in Lemma \ref{lem:disj_eli_is_transitive_over_chain} to $\ang{\relsymb_d,<_{SP}}$. This is valid because by Equation \ref{eq:SP_order} and the discussion aftewards, $<_{SP}$ is a strict total order on $\relsymb_d$. Further, Equation \ref{eq:SP_set_comparison} is Equation \ref{eq:disj_eli} with $<_{SP}$ instead of $<_D$. Therefore, $\ang{\powfin\pair{\relsymb_d},\ordneq_{SP}}$ is a strict toset by Lemma \ref{lem:disj_eli_is_transitive_over_chain}. By Equation \ref{eq:SP_arg_pref}, $\prec_{SP}$ is a strict total order on $\alg$, and $\precsim_{SP}$ (Equation \ref{eq:SP_arg_pref_non_strict}) is a total preorder on $\alg$.
\end{proof}

\noindent By Lemma \ref{lem:SP_total_preorder_when_total}, if two arguments $A$ and $B$ satisfy $A\not\prec_{SP}B$, then $B\precsim_{SP}A$. We demonstrate the features of $<_{SP}$ and $\prec_{SP}$ with Examples \ref{eg:not_disj_eli_WLP} and \ref{eg:always_blocked_default}.

\begin{eg}\label{eg:not_disj_eli_WLP}
Suppose that instead of respecting the logical structure of the defeasible rules with $<_{SP}$, we use an argument preference relation $\prec$ based on $\orddeneq$ (Equation \ref{eq:disj_eli}) instead of $\ordneq_{SP}$, i.e. replace $\ordneq_{SP}$ in Equation \ref{eq:SP_arg_pref_non_strict} with $\orddeneq$. Now consider the following LPDT. Let $T^+$ have $W=\es$ and $D=\set{d_k}_{k=1}^5$ where
\begin{align*}
d_1:=\frac{:c_1}{c_1},\:d_4:=\frac{c_3:c_4}{c_4},\:d_3:=\frac{:c_3}{c_3},\:d_2:=\frac{c_1:c_2}{c_2},\:d_5:=\frac{c_1:\neg(c_2\wedge c_4)}{\neg(c_2\wedge c_4)},
\end{align*}
\noindent such that $d_1<^+ d_4<^+ d_3<^+ d_2<^+ d_5$. Our PDE is constructed in the usual manner starting from $E_0=Th(\es)$ by Equation \ref{eq:ext_base}. Equation \ref{eq:ext_ind} gives the order of application of the defaults:
\begin{align}\label{eq:order_of_adding_rules}
E_1=E_0+c_3,\:E_2=E_1+c_4,\:E_3=E_2+c_1,\:E_4=E_3+\neg(c_2\wedge c_4),
\end{align}
with $E_k=E_4$ for all $k\geq 5$. As $\neg(c_2\wedge c_4)\equiv(\neg c_2\vee\neg c_4)$, along with $c_4$ (from $d_4$), we have $\neg c_2$, which blocks $d_2$. The unique PDE from this LPDT is $E:=Th(\set{c_1,\neg c_2,c_3,c_4})$. Now consider the corresponding arguments following our instantiation. We have the defeasible rules\footnote{We define, for $1\leq i\leq 5$, $r_i:=f\pair{d_i}$, by Equation \ref{eq:bij_defaults_def_rules}.} $r_1<_Dr_4<_Dr_3<_Dr_2<_Dr_5$. The relevant arguments and sets of defeasible rules are
\begin{align*}
A&:=[[\Rightarrow c_1]\Rightarrow c_2],DR(A)=\set{r_1,r_2}\\
B&:=[[\Rightarrow c_3]\Rightarrow c_4],DR(B)=\set{r_3,r_4}\\
C&:=[[\Rightarrow c_1]\Rightarrow\neg(c_2\wedge c_4)],DR(C)=\set{r_1,r_5},\\
D&:=[B,C\to\neg c_2],DR(D)=\set{r_1,r_3,r_4,r_5}.
\end{align*}

\noindent We illustrate these arguments in Figure \ref{figure:ad_nauseam}. Our convention for diagrams is that broken arrows represent defeasible rules, and solid arrows represent strict rules. Solid vertical lines spanning the length of arguments label those arguments. In the diagrams of this paper, defeasible rules with empty antecedent have the symbol $\top$ as a placeholder for their antecedent.

\begin{figure}[h]
\begin{center}
\begin{tikzpicture}[scale = 1]
\node (Ta) at (0,0) {$ \top $};
\node (c11) at (0,-1) {$ c_1 $};
\draw [thick, dashed, ->] (Ta) -- (c11);
\node (r1) at (0.3,-0.5) {$r_1$};
\node (c2) at (0,-2) {$ c_2 $};
\draw [thick, dashed, ->] (c11) -- (c2);
\node (r2) at (0.3,-1.5) {$r_2$};
\draw [ultra thick] (-0.3,0.3)--(-0.3,-2.3);
\node (A) at (-0.6,-1) {$A$};

\node (Tb) at (2,0) {$ \top $};
\node (c3) at (2,-1) {$ c_3 $};
\draw [thick, dashed, ->] (Tb) -- (c3);
\node (r3) at (2.3,-0.5) {$r_3$};
\node (c4) at (2,-2) {$ c_4 $};
\draw [thick, dashed, ->] (c3) -- (c4);
\node (r4) at (2.3,-1.5) {$r_4$};
\draw [ultra thick] (1.7,0.3)--(1.7,-2.3);
\node (B) at (1.4,-1) {$B$};

\node (Tc) at (4.5,0) {$ \top $};
\node (c12) at (4.5,-1) {$ c_1 $};
\draw [thick, dashed, ->] (Tc) -- (c12);
\node (r1) at (4.8,-0.5) {$r_1$};
\node (c5) at (4.5,-2) {$ \neg(c_2\wedge c_4) $};
\draw [thick, dashed, ->] (c12) -- (c5);
\node (r5) at (4.8,-1.5) {$r_5$};
\draw [ultra thick] (3.7,0.3)--(3.7,-2.3);
\node (C) at (3.4,-1) {$C$};
\draw [thick] (c4)--(2,-3);
\draw [thick] (c5)--(4.5,-3);
\draw [thick] (2,-3)--(4.5,-3);
\node (nc2) at (3.25,-4) {$ \neg c_2 $};
\draw [thick,->] (3.25,-3)--(nc2);
\draw [ultra thick] (5.5,0.3)--(5.5,-4.2);
\node (D) at (5.8,-2) {$D$};
\end{tikzpicture}
\caption{The arguments of Example \ref{eg:not_disj_eli_WLP}.}
\label{figure:ad_nauseam}
\end{center}
\end{figure}
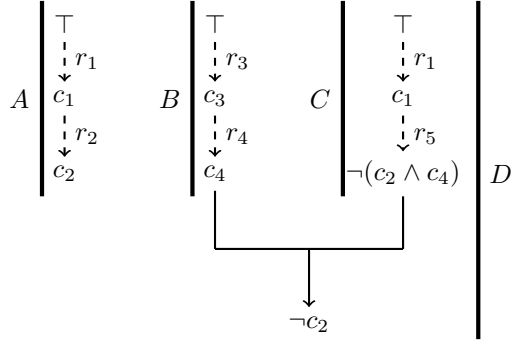

For the stable extension to correspond to the PDL extension, the desired stable extension contains the arguments $D,\:B,\:C,\:[\Rightarrow c_3],\:[\Rightarrow c_1]$, the conclusions of which are, respectively, $\neg c_2,\:c_4,\:\neg\pair{c_2\wedge c_4},\:c_3,\:c_1$, which under deductive closure, corresponds to $E$. However, this would require $D\defeat A$, which means, by Equation \ref{eq:ASPIC+_general_defeat}, $D\attk A$ and $D\not\prec A$. Clearly, $D\attk A$ on $A$. However, it is not the case that $r_2$ is the $<_D$-least defeasible rule, so $D\prec A$. Therefore, this argument preference relation does not generate the corresponding stable extension to $E$.

Suppose now that we do respect the logical structure of the rules and use $\prec_{SP}$ as our argument preference (Equation \ref{eq:SP_arg_pref}). By applying Equations \ref{eq:SP_ord_def} and \ref{eq:SP_order}, we can show that $a_1=r_3$, $a_2=r_4$, $a_3=r_1$, $a_4=r_5$ and $a_5=r_2$. The structure preference order is $r_2<_{SP}r_5<_{SP}r_1<_{SP}r_4<_{SP}r_3$. Notice that this is precisely the order in which the corresponding normal defaults are added in PDL, as Equation \ref{eq:order_of_adding_rules} shows. It is easy to show that the corresponding stable extension under the argument preference $\prec_{SP}$ corresponds to the PDL inference, because $r_2$ is now $<_{SP}$-least, so $D\not\prec_{SP}A$. Therefore $A\prec_{SP}D$, so $A\defeat D$.
\end{eg}

\begin{eg}\label{eg:always_blocked_default}
However, $<_{SP}$ does not necessarily follow the PDL order of the application of defaults. Consider $\ang{\set{d_1,d_2},\:\set{a},\:<^+}$ with $d_1:=\frac{a:\neg a}{\neg a}$ and $d_2:=\frac{:b}{b}$ such that $d_2<^+ d_1$. We have $E=Th\pair{\set{a,b}}$, where $d_1$ is blocked by $W$, so $d_2$ is the only default added. In argumentation, we have $\mathcal{K}_n=\set{a}$, $r_1:=(a\Rightarrow\neg a)$ and $r_2:=(\Rightarrow b)$, such that $r_2<_D r_1$. The arguments are $A_0:=[a]$, $A_1:=[A_0\Rightarrow\neg a]$ and $B:=[\Rightarrow b]$. Applying Equation \ref{eq:SP_ord_def}, we have $r_2<_{SP}r_1$, which clearly is not the order of how the corresponding defaults are added in PDL. Yet the correspondence still holds, since $A_0\defeat A_1$ because $A_0$ is strict and strict arguments always defeat any non-strict argument they attack, so the stable extension is the strict extension of $\set{A_0,\:B}$, the conclusion set of which (after deductive closure) is the extension of the underlying LPDT.
\end{eg}


We have now defined the structure-preference argument preference relation $\precsim_{SP}$. Given an LPDT $T^+$, we denote its \textit{attack graph} to be $AG\pair{T^+}:=\ang{\alg,\attk,\precsim_{SP}}$, and its \textit{defeat graph} to be $DG\pair{T^+}:=\ang{\alg,\defeat}$, where $\defeat$ is defined by Equation \ref{eq:ASPIC+_general_defeat} with $\precsim$ equal to $\precsim_{SP}$.

\subsection{The Representation Theorem}\label{sec:rep_thm}

\noindent In this section we state and prove the representation theorem (Theorem \ref{thm:rep_thm}), which guarantees that the inferences with argumentation semantics under the preference $\precsim_{SP}$ correspond exactly to the inferences in PDL.

\subsubsection{Non-Blocked Defaults}

\noindent We introduce some concepts to help prove the representation theorem. Let $T$ be a PDT and $E=\bigcup_{i\in\nat}E_i$ one of its extensions generated from the linearisation $<^+\supseteq<$. The \emph{set of generating defaults (w.r.t. $<^+$), $GD(<^+)$}, is defined as
\begin{align}\label{eq:GD}
GD_i(<^+)&:=\set{d\in D\:\vline\:\text{$d$ is $<^+$-greatest active in $E_i$}},\nonumber\\
GD(<^+)&:=\bigcup_{i\in\nat}GD_i(<^+)\subseteq D.
\end{align}
Intuitively, this is the set of defaults applied to calculate $E$ following the priority $<^+$. However, the same $E$ can be generated by distinct total orders.
\begin{eg}
Consider the PDT $\ang{\set{\frac{a:c}{c},\:\frac{b:c}{c}},\:\set{a,\:b},\:\es}$. We have two possible linearisations $\frac{a:c}{c}<_1^+\frac{b:c}{c}$ and $\frac{b:c}{c}<_2^+\frac{a:c}{c}$. By Footnote \ref{fn:dual_priority_PDL} (page \pageref{fn:dual_priority_PDL}) we have $GD(<_1^+)=\set{\frac{b:c}{c}}$ and $GD(<_2^+)=\set{\frac{a:c}{c}}$, which are not equal, even though both linearisations give the same extension $E=Th\pair{\set{a,\:b,\:c}}$. In both cases, the default in $D-GD\pair{<_i^+}$ (for $i=1,2$) is not active not because it is blocked by $\neg c$, but rather because it adds no new information.
\end{eg}
\noindent We wish to distinguish between inactive defaults that conflict with something known and inactive defaults that do not add any new information. We call a default $\frac{\theta:\phi}{\phi}$ \emph{semi-active (in $S\subseteq\LSent$)} iff $\sqbra{\theta\in S,\:\neg\phi\notin S,\:\phi\in S}$. Let $<^+\:\supseteq\:<$ be total and which generates the extension $E$ (Equation \ref{eq:ext_ind}). The \emph{set $SAD\pair{<^+}$ of semi-active defaults with respect to the linearisation $<^+$} is defined as
\begin{align}\label{eq:SAD}
\set{d\in D\:\vline\:\text{ $d$ is semi-active w.r.t. $E$, which is generated by $<^+$}}.
\end{align}
Semi-active defaults add no new information. The \emph{set of non-blocked defaults} is
\begin{align}\label{eq:NBD}
NBD(<^+):=GD(<^+)\cup SAD(<^+)\subseteq D.
\end{align}
\begin{lem}
If $<^+$ generates the PDE $E$, then
\begin{align}\label{eq:lem_NBD}
NBD(<^+):=\set{\frac{\theta:\phi}{\phi}\in D\:\vline\:\theta\in E,\:\neg\phi\notin E}.
\end{align}
\end{lem}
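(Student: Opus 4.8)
The plan is to prove the two inclusions of the claimed equality in Equation \ref{eq:lem_NBD} separately, writing $X := \set{\frac{\theta:\phi}{\phi}\in D\:\vline\:\theta\in E,\:\neg\phi\notin E}$ for the right-hand side. Two facts recalled in Section \ref{sec:rev_PDL} will do the heavy lifting: that the ascending chain $\pair{E_i}_{i\in\nat}$ stabilises at some finite stage, and that $E$ is consistent whenever $W$ is (which we always assume). The whole argument is then just careful bookkeeping against the definitions of \emph{active} and \emph{semi-active} defaults and the construction in Equation \ref{eq:ext_ind}.

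First I would show $NBD(<^+)\subseteq X$ by splitting on the two parts of the union in Equation \ref{eq:NBD}. If $d=\frac{\theta:\phi}{\phi}\in SAD(<^+)$, then $d$ is semi-active w.r.t. $E$, so by definition $\theta\in E$ and $\neg\phi\notin E$, giving $d\in X$ at once. If instead $d\in GD(<^+)$, then $d\in GD_i(<^+)$ for some $i$, so $d$ is active in $E_i$; in particular $\theta\in E_i\subseteq E$, and since $d$ is the $<^+$-greatest default active in $E_i$, property 1 holds at stage $i$ and its consequent is added, i.e. $\phi\in E_{i+1}\subseteq E$. Consistency of $E$ then forces $\neg\phi\notin E$, so again $d\in X$.

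The converse $X\subseteq NBD(<^+)$ is where the stabilisation of the chain is essential. Let $d=\frac{\theta:\phi}{\phi}\in X$, so $\theta\in E$ and $\neg\phi\notin E$. The key observation is that \emph{no default is active in the limit $E$}: an active default in $E_i$ would, via property 1, have its (fresh, hence not yet present) consequent added at stage $i+1$ and so strictly enlarge $E_i$, so at the stabilising stage $i^*$ with $E_{i^*}=E_{i^*+1}=E$ there can be no active default. Applying this to $d$, which therefore fails the activity condition $\sqbra{\theta\in E,\:\phi\notin E,\:\neg\phi\notin E}$, while $\theta\in E$ and $\neg\phi\notin E$ both hold, the only way the conjunction can fail is $\phi\in E$. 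Hence $\sqbra{\theta\in E,\:\neg\phi\notin E,\:\phi\in E}$, i.e. $d$ is semi-active w.r.t. $E$, so $d\in SAD(<^+)\subseteq NBD(<^+)$.

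The main obstacle is precisely the auxiliary claim that no default is active in $E$; everything else reduces to unfolding definitions. I would establish it from the stabilisation fact by the strict-enlargement argument above: if some default were active in $E=E_{i^*}$, property 1 would hold at stage $i^*$ and $E_{i^*+1}$ would properly contain its new consequent, contradicting $E_{i^*}=E_{i^*+1}$. One minor point worth flagging is that, as the proof of the $GD$ case shows, every generating default is already semi-active w.r.t. $E$, so in fact $GD(<^+)\subseteq SAD(<^+)$ and the union in Equation \ref{eq:NBD} collapses to $SAD(<^+)$; this is harmless but explains why the two halves of the argument overlap.
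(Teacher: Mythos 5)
Your proof is correct, and although it shares the two-inclusion skeleton and the case split over $GD(<^+)$ versus $SAD(<^+)$ with the paper, the key steps are genuinely different in both directions. For the inclusion of $NBD(<^+)$ in the right-hand side of Equation \ref{eq:lem_NBD}, the paper handles the $GD$ case by contradiction, running a three-way case analysis comparing the stage $i_0$ at which $\neg Conc(d)$ would appear with the stage $j_0$ at which $d$ is applied, and only in the case $i_0>j_0$ does it use the fact you lead with; your direct argument ($\phi\in E_{i+1}\subseteq E$ because $d$ is the $<^+$-greatest active default at stage $i$, then consistency of $E$) reaches the same conclusion without any case analysis. The divergence is larger in the converse inclusion: the paper proceeds by a long chain of quantifier manipulations which, among other things, identifies ``$d$ is active in $E_{i_0}$'' with ``$d\in GD_{i_0}$'' -- an identification that strictly speaking glosses over the requirement that $d$ be $<^+$-\emph{greatest} active -- whereas you isolate the auxiliary lemma that \emph{no default is active in the limit} $E$, derived from the finite stabilisation of the chain $\pair{E_i}_{i\in\nat}$ recalled in Section \ref{sec:rev_PDL}, after which semi-activity of $d$ drops out in two lines (modulo the cosmetic point that property 1 adds the consequent of the $<^+$-greatest active default rather than of an arbitrary active one; the strict enlargement of the stabilised layer follows either way). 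Your route buys brevity and sidesteps the delicate step in the paper's equivalence chain, at the price of invoking stabilisation, which the paper's layer-by-layer argument never needs. Finally, your observation that every generating default is semi-active with respect to $E$, so that $GD(<^+)\subseteq SAD(<^+)$ and the union in Equation \ref{eq:NBD} collapses to $SAD(<^+)$, is correct, is not remarked upon in the paper, and neatly explains why your converse direction only ever lands in the semi-active set.
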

\begin{proof}
It is sufficient to show Equation \ref{eq:NBD} (with Equations \ref{eq:GD} and \ref{eq:SAD}) is the same as the right hand side of Equation \ref{eq:lem_NBD}. Let $<^+$ generate the extension $E$ and, for notational convenience, we suppress the argument ``$<^+$'' in the sets for this proof.\footnote{For example, instead of writing ``$GD\pair{<^+}$'' we write ``$GD$''.} ($\Rightarrow$) Case 1: Assume $d\in SAD$, then $Ante(d)\subseteq E,\:\neg Conc(d)\notin E$ and $Conc(d)\in E$. 
\begin{align}
d\in&\set{d'\in D\:\vline\:Ante(d')\subseteq E,\:\neg Conc(d')\notin E}\text{ and hence}\label{eq:NBD_proof_inter}\\
SAD\subseteq&\set{d\in D\:\vline\:Ante(d)\subseteq E,\:\neg Conc(d)\notin E}.\label{eq:SAD_subset_alternative_form}
\end{align}

\noindent Case 2: Now assume $d\in GD$, which is equivalent to $\pair{\exists i\in\nat}d\in GD_i$. This is equivalent to $\pair{\exists i\in\nat}\sqbra{Ante(d)\subseteq E_i,\: Conc(d)\notin E_i,\:\neg Conc(d)\notin E_i}$, which is equivalent to
\begin{align}\label{eq:intermediate_d_is_active}
& Ante(d)\subseteq E_{j_0},\:Conc(d)\notin E_{j_0},\:\neg Conc(d)\notin E_{j_0}\text{ $j_0$ witness to $i$}.
\end{align}
This implies $Ante(d)\subseteq E_{j_0},\:\neg Conc(d)\notin E_{j_0}$, and it follows that $Ante(d)\subseteq E,\:\neg Conc(d)\notin E_{j_0}$. Clearly, this means $Ante(d)\subseteq E$. Now assume for contradiction that $\neg Conc(d)\in E$, which means there is some $i_0\in\nat$ such that $\neg Conc(d)\in E_{i_0}$. What is the relationship between $i_0$ and $j_0$? There are three possibilities:
\begin{itemize}
\item $i_0=j_0$ would mean $\neg Conc(d)\notin E_{i_0}$ and $\neg Conc(d)\in E_{i_0}$ -- contradiction.
\item $i_0<j_0$: We have $\neg Conc(d)\notin E_{j_0}$ and $\neg Conc(d)\in E_{i_0}$, which is also impossible because the $E_i$'s form an ascending chain, so $E_{i_0}\subseteq E_{j_0}$. Therefore, we get $\neg Conc(d)\in E_{j_0}$ and $\neg Conc(d)\notin E_{j_0}$.
\item $i_0>j_0$: We have $\neg Conc(d)\notin E_{j_0}$ and $\neg Conc(d)\in E_{i_0}$. From Equation \ref{eq:intermediate_d_is_active}, we have that $d$ is active in $E_{j_0}$, hence $Conc(d)\in E_{j_0+1}\subseteq E_{i_0}$, which makes $\neg Conc(d)\in E_{i_0}$ impossible because the $E_i$'s are consistent.
\end{itemize}

\noindent Therefore, $\neg Conc(d)\notin E$. So we have $Ante(d)\subseteq E$ and $\neg Conc(d)\notin E$. Therefore, Equation \ref{eq:NBD_proof_inter} is true for this case and we have
\begin{align}\label{eq:GD_subset_alternative_form}
GD\subseteq\set{d\in D\:\vline\:Ante(d)\subseteq E,\:\neg Conc(d)\notin E}.
\end{align}

\noindent We can take the union of Equations \ref{eq:SAD_subset_alternative_form} and \ref{eq:GD_subset_alternative_form} to get
\begin{align}\label{eq:half_the_alternative_form1}
GD\cup SAD\subseteq\set{d\in D\:\vline\:Ante(d)\subseteq E,\:\neg Conc(d)\notin E}.
\end{align}

($\Leftarrow$) Now assume $d\in\set{d'\in D\:\vline\:Ante(d')\subseteq E,\:\neg Conc(d')\notin E}$, which means $Ante(d)\subseteq E$ and $\neg Conc(d)\notin E$. We have, for some $i_0\in\nat$,

\begin{align*}
\Leftrightarrow& Ante(d)\subseteq E_{i_0},\:\pair{\forall j\in\nat}\neg Conc(d)\notin E_j\\
\Leftrightarrow& Ante(d)\subseteq E_{i_0},\:\neg Conc(d)\notin E_{i_0},\:\pair{\forall j\in\nat-\set{i_0}}\neg Conc(d)\notin E_j\\
\Leftrightarrow&\pair{\forall j\in\nat-\set{i_0}}\neg Conc(d)\notin E_j\text{ and }\\
&[\pair{Ante(d)\subseteq E_{i_0},\:\neg Conc(d)\notin E_{i_0},\:Conc(d)\in E_{i_0}}\text{ or }\\
&\pair{Ante(d)\subseteq E_{i_0},\:\neg Conc(d)\notin E_{i_0},\:Conc(d)\notin E_{i_0}}]
\end{align*}
\begin{align*}
\Leftrightarrow&\pair{\forall j\in\nat-\set{i_0}}\neg Conc(d)\notin E_j\text{ and }\\
&[\pair{Ante(d)\subseteq E_{i_0},\:\neg Conc(d)\notin E_{i_0},\:Conc(d)\in E_{i_0}}\text{ or }d\in GD_{i_0}\\
\Rightarrow&\pair{\forall j\in\nat-\set{i_0}}\neg Conc(d)\notin E_j\text{ and }\\
&[\pair{Ante(d)\subseteq E_{i_0},\:\neg Conc(d)\notin E_{i_0},\:Conc(d)\in E_{i_0}}\text{ or }d\in GD\\
\Rightarrow& d\in GD\text{ or }[Ante(d)\subseteq E_{i_0},\:\neg Conc(d)\notin E_{i_0},\:Conc(d)\in E_{i_0}\text{ and }\\
&\pair{\forall j\in\nat-\set{i_0}}\neg Conc(d)\notin E_j]\\
\Rightarrow& d\in GD\text{ or }\sqbra{Ante(d)\subseteq E\text{ and }\pair{\forall j\in\nat}\neg Conc(d)\notin E_j}\\
\Rightarrow& d\in GD\text{ or }\sqbra{Ante(d)\subseteq E\text{ and }\neg Conc(d)\notin E}\Leftrightarrow d\in GD\cup SAD.
\end{align*}
Therefore, we have
\begin{align}\label{eq:half_the_alternative_form2}
\set{d\in D\:\vline\:Ante(d)\subseteq E,\:\neg Conc(d)\notin E}\subseteq GD\cup SAD.
\end{align}

\noindent The result follows from Equations \ref{eq:half_the_alternative_form1} and \ref{eq:half_the_alternative_form2}.
\end{proof}
\noindent Given $E$, $NBD(<^+)$ is uniquely determined by Equation \ref{eq:lem_NBD}, so we will write $NBD(E)$ instead. Equation \ref{eq:lem_NBD} adapts Reiter's idea of a \emph{generating default} \cite[page 92, Definition 2]{Reiter:80} to PDL.

We use these concepts to show that the rearrangement of rules $<_D\:\mapsto\:<_{SP}$, as defined in Equation \ref{eq:SP_order}, does not change the extension of the LPDT. This is because the manner through which $<_{SP}$ incorporates the argument structure captures the idea of Equation \ref{eq:ext_ind}, which is the method of how PDL incorporates both structure and preference when choosing the ``$<^+$-most active'' default.

\begin{lem}\label{lem:LPDT_SP_order_keeps_extension_same}
Let $T:=\ang{D,W,<^+}$ and $T':=\ang{D,W,<^{+'}}$ be two LPDTs such that $<^+\:\cong\:<_D\:\mapsto\:<_{SP}\cong <^{+'}$,\footnote{This means that the chain $\ang{D,<^+}$ is order isomorphic to $\ang{\relsymb_d,<_D}$ as described by Equations \ref{eq:bij_defaults_def_rules} and \ref{eq:def_rules_pref_order}. Then we calculate $<_{SP}$ from $<_D$ as described in Section \ref{sec:pref} and form a new chain $\ang{D,<^{+'}}$, which is order isomorphic to $\ang{\relsymb_d,<_{SP}}$.} then both $T$ and $T'$ have the same extension $E$.
\end{lem}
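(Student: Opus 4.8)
The plan is to run the extension construction of Equations \ref{eq:ext_base}--\ref{eq:ext_ind} for $T'$ (i.e.\ under $<^{+'}$) and to show, step by step, that it reproduces the construction for $T$ (under $<^+$); throughout I identify each default with its defeasible rule via the bijection of Equation \ref{eq:bij_defaults_def_rules}, so that $<^+,<^{+'}$ on $D$ are $<_D,<_{SP}$ on $\relsymb_d$. Let $E=\bigcup_i E_i$ be the extension generated by $<^+$, and let $g_1,\ldots,g_m$ be its generating defaults listed in the order in which Equation \ref{eq:ext_ind} applies them, so that $E_t=Th\pair{W\cup\set{Conc(g_1),\ldots,Conc(g_t)}}$. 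I would prove by strong induction on $t$ that the construction for $T'$ also reaches $E_t$ after $t$ applications, having applied exactly $g_1,\ldots,g_t$ in the same order; the base case is $E_0=Th(W)$ for both. Writing $Act_t$ for the set of defaults active in $E_t$, everything reduces to one claim about the inductive step: the $<^{+'}$-greatest element of $Act_t$ equals the $<^+$-greatest element of $Act_t$, namely $g_{t+1}$. Granting this, both constructions add $Conc(g_{t+1})$ and reach $E_{t+1}$, and when $Act_t=\es$ both stabilise, giving $E'=E$.

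Since $\max$ under $<^{+'}$ selects the $<_{SP}$-earliest (equivalently $<^{+'}$-greatest) member of $Act_t$ while $\max$ under $<^+$ selects the $<_D$-greatest, the claim is equivalent to the statement that $<^+$ and $<^{+'}$ agree on $Act_t$: for $d,d'\in Act_t$, $d<^+d'\Leftrightarrow d<^{+'}d'$. This is where the construction of $<_{SP}$ (Equations \ref{eq:SP_ord_def}--\ref{eq:SP_order}) does the work. Both $d$ and $d'$ are active in $E_t$, so both antecedents lie in $E_t$; hence the two rules both become eligible in the greedy selection defining $<_{SP}$ (their antecedents are among the conclusions of arguments built from already-placed rules), and ties are broken by $<_D$. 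The content to pin down is that the $<_D$-greater of the two, say $d$, is selected first: once both antecedents are available, any stage at which the $<_D$-lesser $d'$ would be chosen is a stage at which $d$ is also eligible and $<_D$-greater, contradicting the maximality in Equation \ref{eq:SP_ord_def}. The only gap is to exclude that $d'$ becomes eligible strictly before $d$, which is ruled out because $d\in Act_t$ forces $Ante(d)\in E_t$ while the same priority $<_D$ that governs the greedy placement also governs which generating defaults produce $E_t$, so $Ante(d)$ is available no later than $Ante(d')$.

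The main obstacle is exactly this last point, and it is the reason the structure-preference re-sorting was introduced: I must guarantee that the $T'$-construction never applies a default that is active in some $E_t$ but blocked in the final $E$, since such an application would add a consequent outside $E$ and destroy the correspondence. Equivalently, whenever a blocked default $d$ (one with $\neg Conc(d)\in E$, so $d\notin NBD(E)$ by Equation \ref{eq:lem_NBD}) would be considered under $<^{+'}$, the defaults whose consequents derive $\neg Conc(d)$ must already have been applied, so that $d$ is in fact inactive and is skipped. I would establish this using the characterisation of Equation \ref{eq:lem_NBD}, which fixes the non-blocked defaults independently of the linearisation, together with the self-correcting phenomenon visible in Examples \ref{eg:not_disj_eli_WLP} and \ref{eg:always_blocked_default}: a default is blocked under $<^+$ precisely because some $<^+$-greater default is applied first and supplies the blocker, and since $<_{SP}$ breaks ties by the same $<_D$, that blocker is also $<^{+'}$-greater than $d$ and hence applied first under $<^{+'}$. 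Carrying this interplay through the induction --- so that blocked defaults, wherever $<_{SP}$ happens to place them, are always inactive by the time the $T'$-construction reaches them --- is the crux; the remaining bookkeeping (that the generating defaults are applied in the same order and that stabilisation occurs simultaneously) is routine once the order-agreement claim is in hand.
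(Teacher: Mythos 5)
Your overall strategy --- running the two constructions in parallel and reducing the inductive step to the agreement of the $<^+$- and $<^{+'}$-maxima of each active set $Act_t$ --- is in spirit the same as the paper's proof, which reduces the lemma to $GD\pair{<^+}=GD\pair{<^{+'}}$ and asserts that the rearrangement always selects the generating defaults first. The problem is that both of the supporting claims you use to discharge this step are false, and they are exactly where the content of the lemma lies. First, the claim that for $d,d'\in Act_t$ with $d$ the $<_D$-greater, ``$Ante(d)$ is available no later than $Ante(d')$'' in the greedy placement of Equation \ref{eq:SP_ord_def}, is wrong. Take $W=\es$, a chain $g_1=(\Rightarrow p_1)$, $g_2=(p_1\Rightarrow p_2)$, \ldots, $g_t=(p_{t-1}\Rightarrow p_t)$, let $d=(p_t\Rightarrow u)$ have the highest priority and $d'=(\Rightarrow v)$ the lowest: then $d,d'\in Act_t$, but $d'$ is eligible in Equation \ref{eq:SP_ord_def} from the very first step, whereas $d$ becomes eligible only after all of $g_1,\ldots,g_t$ have been placed. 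What actually prevents $d'$ from being placed prematurely is not antecedent timing but a dominance argument you never make: whenever $d'$ is eligible and unplaced, the next unplaced generating default $g_{j+1}$ is also eligible (its antecedent lies in $E_j$, which is derivable because $g_1,\ldots,g_j$ are already placed) and is $<_D$-greater than $d'$ (because $d'$ is active but not selected at stage $j$ of Equation \ref{eq:ext_ind}), so the maximality condition never picks $d'$. This is the missing idea; note also that your reduction to \emph{full} agreement of the two orders on $Act_t$ is stronger than what is needed and can in fact fail --- only the maxima agree.

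Second, your account of why blocked defaults stay blocked rests on the claim that ``a default is blocked under $<^+$ precisely because some $<^+$-greater default is applied first and supplies the blocker,'' and this is also false. With $W=\es$, $\delta=(\Rightarrow w)$, $\gamma=(\Rightarrow\neg x)$, $\beta=(w\Rightarrow x)$ and $\delta<^+\gamma<^+\beta$, the default $\beta$ is blocked under $<^+$ even though its blocker $\gamma$ is $<^+$-\emph{smaller}: $\gamma$ fires first only because $Ante(\beta)$ has not yet arrived. Consequently ``tie-breaking by the same $<_D$'' cannot be what keeps $\beta$ blocked under $<^{+'}$; what does keep it blocked is again the structural fact above ($\beta$ only becomes eligible in Equation \ref{eq:SP_ord_def} after $\delta$ is placed, and $\gamma$, being eligible throughout, is placed before $\delta$), together with the observation via Equation \ref{eq:lem_NBD} that any rule placed while the placed pool is still inside $E$ and whose negated consequent lies in the current pool can only be selected once its blockers are already in place. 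Since you yourself flag ``carrying this interplay through the induction'' as the crux and leave it open, and the two lemmas you propose to carry it with are both refuted by simple prioritised default theories, the proposal has a genuine gap at precisely the step that distinguishes this lemma from routine bookkeeping.
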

\begin{proof}
Let $E$ be the unique extension of $T$, and $E'$ be the unique extension of $T'$. To show $E=E'$, we need to show that they have the same generating defaults, i.e. $GD\pair{<^+}=GD\pair{<^{+'}}$. As $<^{+'}\:\cong\:<_{SP}$, and $<^+\:\cong\:<_D\:\mapsto\:<_{SP}$, the rearrangement $<_D\:\mapsto\:<_{SP}$ will always choose the $<_D$-greatest active defeasible rule for $a_1$ in Equation \ref{eq:SP_ord_def}, the second $<_D$-greatest active defeasible rule for $a_2$... etc. until all defeasible rules are rearranged, but the defeasible rules corresponding to the generating defaults of $<^{+'}$ will always be chosen first in the rearrangement, therefore $GD\pair{<^+}=GD\pair{<^{+'}}$ and hence the result follows.
\end{proof}

\subsubsection{Existence and Uniqueness of Stable Extensions}

\noindent Let $T^+$ be an LPDT. In this section we show that its defeat graph $DG\pair{T^+}$ has a unique stable extension. We propose an algorithm that imitates how PDL extensions are constructed over an LPDT (Equation \ref{eq:ext_ind}). Given $S\subseteq\alg$, $r\in\relsymb_d$, the definition of $Args\pair{\:\cdot\:}$ (Equation \ref{eq:def_Args(R)}) and $<_{SP}$ we define $\oplus$ as $S\oplus r:=Args(DR(S)\cup\set{r})$, i.e. we close $S$ under all arguments over all strict rules, all defeasible rules in $S$, \textit{and} the addition of a new defeasible rule $r$.

Consider Algorithm \ref{alg:gen_stab_ext}, which takes as input the attack graph $AG\pair{T^+}$ of an LPDT $T^+$, and the strict chain of defeasible rules under the SP order $\ang{\relsymb_d,<_{SP}}$. The output is a set of arguments $S\subseteq\alg$. The formal definition is:\footnote{This is a brute-force definition used to prove that stable extensions exist and are unique in such defeat graphs.}

\begin{algorithm}[h]
\begin{algorithmic}[1]
\Function{GenerateStableExtension}{$\ang{\alg,\:\attk,\:\precsim_{SP}}$, $\ang{\relsymb_d,<_{SP}}$}
  \State $S\gets\set{\text{all strict arguments in $\alg$}}$\label{alg_line:input_strict_args}
  \For{$r\in\relsymb_d$ from $<_{SP}$-greatest to $<_{SP}$-smallest}
    \If{$S\oplus r$ has no attacks, $\pair{S\oplus r}^2\cap\attk\:=\:\es$,}\label{alg_line:cond_start}
    \State {$S\gets S\oplus r$\label{alg_line:cond_end}}
    \EndIf
  \EndFor
  \Return $S$
\EndFunction
\end{algorithmic}
\caption{Generating a Stable Extension}
\label{alg:gen_stab_ext}
\end{algorithm}

\noindent Algorithm \ref{alg:gen_stab_ext} first creates the largest possible set of undefeated arguments that do not attack each other. This includes all strict arguments as they are never defeated nor do they attack each other, and possibly some undefeated defeasible arguments. Then, the algorithm includes the defeasible rules from most to least preferred under $<_{SP}$ and tests whether the resulting arguments that are constructed by the inclusion of such a defeasible rule attack each other in the sense of Equation \ref{eq:attack} (Lines \ref{alg_line:cond_start}--\ref{alg_line:cond_end}). Note that the resulting attack must originate from the arguments having at most the defeasible rules added so far. As $<_{SP}$ is total, all defeasible rules are considered, and the result includes as many defeasible rules as possible such that the result has no attacks. Adding the rules in the order of $<_{SP}$ while ensuring no attacks mimics the condition of Equation \ref{eq:ext_ind}. It is clear from the algorithm that $S$ exists and is unique given the input, as it is of the form $Args(R)$ for some $R\subseteq\relsymb_d$ (Equation \ref{eq:def_Args(R)}). We show $S$ is a stable extension.

\begin{lem}\label{lem:alg1_output_is_cf}
The output $S$ of Algorithm \ref{alg:gen_stab_ext} is cf (conflict free).
\end{lem}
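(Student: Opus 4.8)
The plan is to prove the stronger statement that the output $S$ satisfies $S^2 \cap \attk = \es$, i.e. that no two of its arguments even \emph{attack} each other. Conflict-freeness then follows immediately: since $\defeat \subseteq \attk$ as relations (established in Section \ref{sec:rev_ASPIC+}), we get $S^2 \cap \defeat \subseteq S^2 \cap \attk = \es$, which is exactly the definition of cf. The natural way to establish the stronger statement is as a loop invariant for Algorithm \ref{alg:gen_stab_ext}: I would show that the current value of $S$ satisfies $S^2 \cap \attk = \es$ both at entry to the loop and after every iteration.

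For the base case (line \ref{alg_line:input_strict_args}), $S$ is the set of all strict arguments. By Equation \ref{eq:attack}, if $A \attk B$ then $B$ has a subargument whose top rule is defeasible; but a strict argument has $DR(B)=\es$ and hence no defeasible subargument, so no strict argument can be attacked. In particular no strict argument attacks another, giving $S^2 \cap \attk = \es$ initially. For the inductive step, assume $S^2 \cap \attk = \es$ at the start of an iteration processing $r$. If the guard on line \ref{alg_line:cond_start} holds, then $\pair{S\oplus r}^2 \cap \attk = \es$ by the guard itself, so after the assignment on line \ref{alg_line:cond_end} the new $S = S \oplus r$ satisfies the invariant; if the guard fails, $S$ is left unchanged and the invariant persists by the induction hypothesis. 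Either way the invariant holds at the end of the iteration.

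By induction over the (finitely many, as $\relsymb_d$ is finite) iterations, the returned $S$ satisfies $S^2 \cap \attk = \es$, whence $S$ is cf as argued above. There is essentially no hard step here: the argument is a routine loop-invariant induction, and the only point requiring any care is the base case -- verifying via Equation \ref{eq:attack} that strict arguments are immune to attack, which is where the instantiation's restriction to rebut-only attacks (together with $\mathcal{K}_p = \es$ and $n \equiv *$, so that there are no undermining or undercutting attacks) does the work. The guard on line \ref{alg_line:cond_start} is engineered precisely to preserve ``no attacks'', so the inductive step is immediate once the base case is in place.
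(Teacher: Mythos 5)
Your proof is correct and follows essentially the same route as the paper's: both establish the stronger property that the returned $S$ contains no \emph{attacks} at all (the guard on Line \ref{alg_line:cond_start} preserves this, and the initial set of strict arguments cannot be attacked since Equation \ref{eq:attack} requires a defeasible top rule in the target), and then conclude cf because $\defeat\:\subseteq\:\attk$. Your version merely makes the induction explicit as a loop invariant, and your base-case justification (structural immunity of strict arguments to rebuttal, via Equation \ref{eq:attack}) is if anything a sharper observation than the paper's appeal to the consistency of $\mathcal{K}_n$.
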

\begin{proof}
cf is guaranteed by the consistency of $\mathcal{K}_n$ and that defeasible rules $r\in\relsymb_d$ are only added if the resulting arguments do not attack each other (Lines \ref{alg_line:cond_start} - \ref{alg_line:cond_end}). Therefore, by Equation \ref{eq:ASPIC+_general_defeat}, $S$ contains no defeats and must be cf.
\end{proof}

Note that the setup of the algorithm prevents not just defeats but attacks from appearing in $S$ (Line \ref{alg_line:cond_start}). Normally, this is not sufficient to guarantee that $Conc\pair{S}$ is consistent in FOL.

\begin{eg}
Consider $S=\set{\sqbra{\Rightarrow a},\sqbra{\Rightarrow b},\sqbra{\Rightarrow\neg(a\wedge b)}}\subseteq\alg$. There are no attacks in $S$ because attacks are defined syntactically (Equation \ref{eq:attack}, page \pageref{eq:attack}). However, $Conc\pair{S}=\set{a,b,\neg\pair{a\wedge b}}$ is clearly inconsistent in FOL.
\end{eg}

\noindent We now show that if $S$ has no attacks then $Conc\pair{S}$ is consistent in FOL. 

\begin{lem}\label{lem:acf_S_consistent}
Let $S$ be the output of Algorithm \ref{alg:gen_stab_ext}. If $S^2\cap\attk=\es$ then $Conc\pair{S}$ is consistent.
\end{lem}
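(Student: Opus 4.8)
The plan is to argue by contradiction, exploiting two features of this instantiation: that the output has the special form $S=Args(R)$ for $R:=DR(S)\subseteq\relsymb_d$ (noted just before the statement), and that attacks here are purely syntactic rebuts given by Equation \ref{eq:attack}. The strategy is to assume $Conc\pair{S}$ is inconsistent and then exhibit two arguments of $S$ that attack each other, contradicting $S^2\cap\attk=\es$.

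First I would establish that $Conc\pair{S}$ is \emph{deductively closed}, i.e. $Cl_{\relsymb_s}\sqbra{Conc\pair{S}}=Conc\pair{S}$; this is the key structural fact and the step needing the most care. It follows from $S=Args(R)$: if $\theta_1,\ldots,\theta_n\in Conc\pair{S}$ are witnessed by $A_1,\ldots,A_n\in S$ and $\pair{\theta_1,\ldots,\theta_n\to\phi}\in\relsymb_s$, then $B:=\sqbra{A_1,\ldots,A_n\to\phi}$ is an argument with $DR(B)=\bigcup_{i=1}^n DR(A_i)\subseteq R$, so $B\in Args(R)=S$ and $\phi\in Conc\pair{S}$; iterating gives closure under $Cl_{\relsymb_s}$ (equivalently, this is closure under strict extensions from Lemma \ref{lem:Args(R)_props}). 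Since $Cl_{\relsymb_s}$ instantiates to $Th$, $Conc\pair{S}$ is deductively closed in FOL.

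Next I would suppose, for contradiction, that $Conc\pair{S}$ is inconsistent. Being deductively closed, it then equals the whole language, $Conc\pair{S}=\LForm$. If $DR(S)=\es$ then $S$ consists only of strict arguments and $Conc\pair{S}=Th(W)$, which is consistent because $W$ is assumed consistent, contradicting inconsistency; so $DR(S)\neq\es$. Pick any $r\in DR(S)$ and set $\phi:=Cons(r)$. Because $r\in DR(S)$, some argument in $S$ uses $r$; as $Args(R)$ is subargument closed (Lemma \ref{lem:Args(R)_props}) and every argument using $r$ has a subargument whose top rule is $r$, there is a $B_r\in S$ with $TopRule(B_r)=r$ and $Conc(B_r)=\phi$.

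Finally I would manufacture the forbidden attack. Since $Conc\pair{S}=\LForm$ we have $\overline{\phi}\in Conc\pair{S}$, so some $A\in S$ has $Conc(A)=\overline{\phi}$. Then $A$ rebuts $B_r$ on $B_r$ itself, because its top rule is the defeasible rule $r$ with consequent $\phi$ and $Conc(A)\in\overline{\phi}$; hence $A\attk B_r$ with $A,B_r\in S$, contradicting $S^2\cap\attk=\es$. Therefore $Conc\pair{S}$ is consistent. The main obstacle is the first step, deriving deductive closure of $Conc\pair{S}$ from the form $S=Args(R)$; once that is in hand, the contradiction is immediate from the syntactic rebut condition.
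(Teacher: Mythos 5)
Your proof is correct, but its core is genuinely different from the paper's. Both proofs open identically: since the output of Algorithm \ref{alg:gen_stab_ext} has the form $Args(R)$, Lemma \ref{lem:Args(R)_props} (closure under strict extensions and subarguments) gives that $Conc\pair{S}$ is deductively closed. From there the paper stays \emph{local}: it takes witnesses $A,B\in S$ with $Conc(A)=\theta$ and $Conc(B)=\neg\theta$; if either top rule is defeasible the attack is immediate, and otherwise it invokes closure under transposition of $\relsymb_s$ to build the transposed arguments $B^{(i)}$ and descends through subarguments (using well-foundedness of arguments) until a defeasible top rule is reached, yielding an attack inside $S$. You instead \emph{globalise}: inconsistency plus deductive closure gives $Conc\pair{S}=\LForm$ by explosion, so after ruling out $DR(S)=\es$ (via consistency of $W$), the contrary of $Cons(r)$ for an arbitrary $r\in DR(S)$ is the conclusion of some member of $S$, which rebuts the subargument of $S$ with top rule $r$ --- again an attack inside $S$. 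Your route is shorter and avoids both transposition and the descent argument; what it costs is reliance on instantiation-specific facts, namely that $\relsymb_s$ is full classical FOL inference (so that an inconsistent deductively closed set is the whole language) and that $W$ is consistent. The paper's transposition-based argument is the style that survives in the general ASPIC$^+$ setting, where the strict rules need not be classical and explosion is unavailable; closure under transposition is precisely the kind of condition ASPIC$^+$ highlights for its general consistency results. Both proofs use consistency of $W$ to dispose of the purely strict case, so nothing is lost there.
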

\begin{proof}
By construction and Lemma \ref{lem:Args(R)_props}, $Conc\pair{S}$ is deductively closed. Assume for contradiction that $Conc\pair{S}$ is inconsistent, then $\theta,\:\neg\theta\in Conc\pair{S}$ for some $\theta\in\LSent$. Hence there are $A,B\in S$ such that $Conc(A)=\theta$ and $Conc(B)=\neg\theta$. If at least one of $TopRule(A)$ or $TopRule(B)$ are defeasible then at least one of $A\attk B$ and $B\attk A$ is the case, hence $S^2\cap\attk\neq\es$ -- contradiction.

Now consider the case where $TopRule(A),\:TopRule(B)\in\relsymb_s$ are both strict. As $W=\mathcal{K}_n$ is consistent and the rules in $\relsymb_s$ are sound, if $A$ and $B$ are both strict then it cannot be the case they have contradictory conclusions. Therefore, at least one of $A$ and $B$ are defeasible. WLOG say $A$ is defeasible. Suppose by construction $A=\sqbra{A_1,A_2,\ldots,A_i,\ldots,A_n\to\theta}$ and $B=\sqbra{B_1,B_2,\ldots,B_m\to\neg\theta}$, where $A_i\subarg A$ is defeasible with conclusion $a_i$ (Section \ref{sec:rev_ASPIC+}). By closure under transposition of $\relsymb_s$ and the properties of $S$, we can construct the argument $B^{\pair{i}}:=\sqbra{A_1,A_2,\ldots,A_{i-1},B,A_{i+1},\ldots,A_n\to\neg a_i}$, and by Lemma \ref{lem:Args(R)_props}, $B^{(i)}\in S$. If $TopRule\pair{A_i}$ is defeasible, then $B^{\pair{i}}\attk A_i$ and hence $S^2\cap\attk\neq\es$ -- contradiction, so $TopRule\pair{A_i}$ is not defeasible. As $A_i$ is defeasible we choose some subargument $A'_i\subarg A_i$ and repeat the above line of reasoning for $B^{\pair{i}}$ and $A'_i$. As all arguments are well-founded, this line of reasoning must terminate at some subargument of $A_i$ whose top rule is defeasible, and hence $S^2\cap\attk\neq\es$ -- contradiction. The result follows.
\end{proof}

\begin{lem}\label{lem:alg1_output_defs_all_outside}
The set $S$ defeats all arguments outside of itself.
\end{lem}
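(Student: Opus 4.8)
The plan is to show directly that for every $B\in\alg-S$ there are an argument $A\in S$ and a subargument $B^*\subarg B$ with $A\attk B^*$ and $A\not\prec_{SP}B^*$, so that $A\defeat B^*$; since defeat propagates up subarguments (the remark following Equation \ref{eq:ASPIC+_general_defeat}), this yields $A\defeat B$ and hence $S$ defeats $B$. Throughout I use that $S=Args(R)$ for the set $R\subseteq\relsymb_d$ of defeasible rules accepted by Algorithm \ref{alg:gen_stab_ext}, that $S$ contains no attacks (so $Conc(S)$ is consistent by Lemma \ref{lem:acf_S_consistent}), and that $S$ is subargument- and strict-extension-closed with deductively closed conclusions (Lemma \ref{lem:Args(R)_props}).

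Fix $B\notin S$, so $DR(B)\not\subseteq R$, and let $r^*$ be the $<_{SP}$-greatest rule of the finite nonempty set $DR(B)-R$, with $\phi^*:=Cons(r^*)$ and $B^*$ the subargument of $B$ whose top rule is $r^*$ (so $Conc(B^*)=\phi^*$). Write $R_{>r^*}:=\{r\in R\:\vline\:r^*<_{SP}r\}$ for the rules accepted before $r^*$ was considered, and $X:=Args(R_{>r^*})$ for the value of $S$ at that stage; $X$ is attack-free. Because $r^*\notin R$, the algorithm rejected it, i.e. $X\oplus r^*=Args(R_{>r^*}\cup\{r^*\})=:X'$ contains an attack. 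A first observation is that $Ante(r^*)\subseteq Conc(X)$: otherwise no argument of $X'$ could apply $r^*$, since any $\subarg$-minimal argument of $X'$ using $r^*$ must draw the antecedents of its $r^*$-application from $r^*$-free subarguments lying in $X$; then $X'=X$ would be attack-free, contradicting rejection. Hence applying $r^*$ to subarguments of $X$ concluding $Ante(r^*)$ gives an argument $G\in X'$ with $Conc(G)=\phi^*$.

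The crux is to extract from this rejection an argument $A\in X\subseteq S$ with $Conc(A)=\overline{\phi^*}$. The point is that $<_{SP}$ is built precisely so that the conflict introduced by $r^*$ is localised at $\phi^*$: any defeasible rule $r'$ whose antecedent can only be derived with the help of $\phi^*=Cons(r^*)$ has, by the defining recursion of $<_{SP}$ (Equation \ref{eq:SP_ord_def}), its antecedent becoming available strictly later than $r^*$, whence $r'<_{SP}r^*$ and $r'\notin R_{>r^*}\cup\{r^*\}$. Consequently no rule of $R_{>r^*}$ fires newly when $r^*$ is added, so $Conc(X')=Th\pair{Conc(X)\cup\{\phi^*\}}$. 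Now $Conc(X)$ is consistent (as $X$ is attack-free), while the attack in $X'$ supplies two arguments of $X'$ with contrary conclusions, making $Conc(X')$ inconsistent; hence $Conc(X)\cup\{\phi^*\}$ is inconsistent, and since $Conc(X)$ is deductively closed this forces $\overline{\phi^*}\in Conc(X)$. Choose any $A\in X$ with $Conc(A)=\overline{\phi^*}$.

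It remains to verify the defeat. Since $r^*=TopRule(B^*)\in\relsymb_d$ and $Conc(A)=\overline{\phi^*}=\overline{Cons(r^*)}$, $A$ rebuts $B^*$ on $B^*$ itself, so $A\attk B^*$ by Equation \ref{eq:attack}. For the preference, $A\in X$ gives $DR(A)\subseteq R_{>r^*}$, so every rule of $A$ is $>_{SP}r^*$, whereas $r^*\in DR(B^*)-DR(A)$. Testing $DR(A)\ordneq_{SP}DR(B^*)$ (Equation \ref{eq:SP_set_comparison}) with the witness $y=r^*$ shows no $x\in DR(A)$ satisfies $x<_{SP}r^*$, so the condition fails and $A\not\prec_{SP}B^*$. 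Thus $A\defeat B^*$, and as $B^*\subarg B$ we get $A\defeat B$; since $B$ was arbitrary, $S$ defeats every argument outside itself. The main obstacle is the localisation step of the third paragraph, ruling out that adding $r^*$ lets an already-accepted higher-priority rule fire and relocate the conflict away from $\phi^*$; this is exactly where $<_{SP}$ (rather than the raw priority $<_D$) is essential, and it should be justified carefully from Equation \ref{eq:SP_ord_def}, most likely by an induction over the algorithm's stages establishing that $Conc\pair{Args(R_{>r})}$ is already closed under every accepted rule of priority $>_{SP}r$.
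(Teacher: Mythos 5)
Your overall architecture is the same as the paper's: for each $B\notin S$ you seek a rule $r\in DR(B)-R$ (with $R:=DR(S)$) and an argument $A\in S$ rebutting the subargument $B'$ of $B$ whose top rule is $r$, and you then verify $A\not\prec_{SP}B'$ exactly as the paper does, by noting that every rule of $DR(A)$ is $<_{SP}$-greater than $r$ while $r\in DR(B')-DR(A)$, so the existential in Equation \ref{eq:SP_set_comparison} has no witness; your closing paragraph is essentially the paper's closing case. The difference lies in how the attacker is obtained: the paper simply \emph{asserts} that some $r\in DR(B)-R$ ``causes $S$ to attack the subargument of $B$ with top rule $r$'' and then case-splits on the position of $r$ to place $DR(A)$ among the accepted rules above $r$, whereas you fix $r^*$ as the $<_{SP}$-greatest element of $DR(B)-R$ and try to extract the attacker from the algorithm's \emph{rejection} of $r^*$.

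That extraction rests on a false premise, namely ``because $r^*\notin R$, the algorithm rejected it, i.e.\ $X\oplus r^*$ contains an attack.'' For your standing assumption $S=Args(R)$ to hold, $R$ must be $DR(S)$ (the paper's choice); but then $\relsymb_d-R$ contains not only rules rejected at Line \ref{alg_line:cond_start}, but also rules accepted at Line \ref{alg_line:cond_end} \emph{vacuously}: a rule that cannot fire from the conclusions available at its stage creates no attack, so the algorithm ``adds'' it, yet it never enters $DR(S)$ and is silently dropped at the next $\oplus$ (which is defined through $DR(S)$). Such a rule can be the $<_{SP}$-greatest element of $DR(B)-R$, because Equation \ref{eq:SP_ord_def} decides availability using \emph{all} $<_{SP}$-greater rules, including ones the algorithm later rejects. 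Concretely: take $W=\es$, $a_1=(\Rightarrow p)$, $a_2=(p\Rightarrow q\wedge\neg p)$, $a_3=(q\Rightarrow z)$, $a_4=(p\Rightarrow q)$ with $a_4<_D a_3<_D a_2<_D a_1$. Equation \ref{eq:SP_ord_def} returns $<_{SP}=<_D$, since at the third stage $q\in Conc\sqbra{Args\pair{\set{a_1,a_2}}}$ via the strict rule $(q\wedge\neg p\to q)$, so $a_3$ sits above $a_4$. The algorithm accepts $a_1$, rejects $a_2$, accepts $a_3$ vacuously, accepts $a_4$; hence $R=\set{a_1,a_4}$ and $Conc(S)=Th\pair{\set{p,q}}$. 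For $B=\sqbra{\sqbra{\sqbra{\Rightarrow p}\Rightarrow q}\Rightarrow z}\notin S$ you get $r^*=a_3$: there was no rejection event, $X\oplus r^*$ is attack-free, $Ante(r^*)\not\subseteq Conc(X)$, and no argument of $S$ concludes $\overline{Cons(r^*)}=\neg z$, so the attacker you need does not exist (the alternative reading, $R=$ the rules for which Line \ref{alg_line:cond_end} executed, restores ``excluded implies rejected'' but falsifies $S=Args(R)$, so the proof breaks either way). The same root cause undermines your localisation step: ``antecedent only derivable with the help of $\phi^*$'' does not imply $<_{SP}$-lower than $r^*$, precisely because Equation \ref{eq:SP_ord_def} quantifies over all higher rules rather than the accepted ones. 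To be fair, this instance is not a defect peculiar to your write-up: on it $S$ attacks $B$ nowhere, so the existence assertion the paper makes without proof fails there as well — you have put your finger on exactly the soft spot the paper's proof glosses over — but as written your bridge from ``excluded from $R$'' to ``rejected with an attack'' is invalid, and the proof does not go through.
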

\begin{proof}
Let $R:=DR(S)$. Let $B\notin S$ be an arbitrary argument outside of $S$. We show there is an $A\in S$ such that $A\defeat B$. Given that $B\notin S$, there must be some rule $r\in DR(B)-R$ that causes $S$ to attack the subargument of $B$ with top rule $r$, according to Algorithm \ref{alg:gen_stab_ext}, Line \ref{alg_line:cond_start}. Let $B'\subarg B$ such that $TopRule(B')=r$. Let $A\in S$ be the attacker of $B'$ at $r$, such that $Conc(A)=\overline{Cons(r)}$.\footnote{Note that $A$ is appropriately chosen such that $Conc(A)=\overline{Cons(r)}$ is syntactic equality. This is always possible because $\relsymb_s$ has all rules of proof of FOL. Therefore, if an argument $C$ concludes $\theta$, and we would want it to conclude $\phi$, where $\phi\equiv\theta$, we can just append the strict rule $(\theta\to\phi)\in\relsymb_s$ to $C$ to create a new argument $D$ that concludes $\phi$.} This means $A\attk B'$ and hence $A\attk B$. There are two possibilities: either this rule $r\in DR(B)$ is $<_{SP}$-greatest, or it is not.

If $r$ is $<_{SP}$-greatest, then $Args(\es)\oplus r$ contains attacking arguments, so $A$ must be strict and hence $A\defeat B$. If $r$ is not $<_{SP}$-greatest, then consider the strict $<_{SP}$-upper-set of $r$ in $\ang{\relsymb_d,\:<_{SP}}$, $T:=\set{r'\in\relsymb_d\:\vline\:r<_{SP}r'}\neq\es$. There are two sub-possibilities: either $T\cap R=\es$ or $T\cap R\neq\es$. If $T\cap R=\es$, then adding $r$ to $S$ will create an attack by Algorithm \ref{alg:gen_stab_ext}, Line \ref{alg_line:cond_start}, and this attack must originate from some $A\in Args(\es)$ because no rule $<_{SP}$-larger than $r$ is used in the arguments of $S$, hence $A\defeat B$.

If $T\cap R\neq\es$, then adding $r$ to $S$ means its attacker $A\attk B'$ is in $Args\pair{T\cap R}$.\footnote{We have $A\in Args\pair{T\cap R}$ because as Algorithm \ref{alg:gen_stab_ext} adds the rules one by one according to $<_{SP}$, if adding $r$ to the rules in $S$ and then creating all arguments (with all strict rules) creates an attack, then this attack must be due to some argument whose defeasible rules are amongst $T\cap R$. This is because at the point where the algorithm excludes $r$, any defeasible arguments constructed then can only have their rules from $T$.} Either $A$ is strict or not strict (i.e. defeasible). If it is strict, then $A\defeat B$ as before. If it is not strict, i.e. $\es\neq DR(A)\subseteq T\cap R$, then by definition $\pair{\forall s\in T}r<_{SP}s$. As $DR(A)\subseteq T\cap R$, we must also have $\pair{\forall s\in DR(A)}r<_{SP} s$. Therefore, there is an $r\in DR(B')-DR(A)$ such that for all rules in $DR(A)$, and hence $DR(A)-DR(B')$, $r<_{SP} s$. By Equation \ref{eq:SP_arg_pref}, we conclude that $B'\prec_{SP} A$, and hence $A\defeat B'$. Therefore, by definition of $\defeat$ and $\subarg$, $A\defeat B$.
\end{proof}

\begin{thm}\label{thm:well_defined_stb_ext}
The output of Algorithm \ref{alg:gen_stab_ext}, $S$, is a stable extension of $DG\pair{T^+}$.
\end{thm}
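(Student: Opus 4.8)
The plan is to verify the two defining conditions of a stable extension directly from the lemmas already established for the output $S$ of Algorithm \ref{alg:gen_stab_ext}. Recall that $S$ is stable iff $S$ is complete and $S$ defeats every argument in $\alg-S$. The second condition is precisely Lemma \ref{lem:alg1_output_defs_all_outside}, so the entire task reduces to showing that $S$ is complete, which I would obtain by combining conflict-freeness (Lemma \ref{lem:alg1_output_is_cf}) with the external-defeat property.

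First I would establish admissibility, i.e. that $S$ is cf (immediate from Lemma \ref{lem:alg1_output_is_cf}) and that $S\subseteq\chi(S)$. For the latter inclusion I would take any $A\in S$ and any $B$ with $B\defeat A$; since $S$ is cf, $B\notin S$, so Lemma \ref{lem:alg1_output_defs_all_outside} supplies some $C\in S$ with $C\defeat B$. Hence $S$ defends every $A\in S$, giving $S\subseteq\chi(S)$ and therefore admissibility.

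Next I would upgrade admissibility to completeness by proving the reverse inclusion $\chi(S)\subseteq S$. Here I would argue by contradiction: if $A\in\chi(S)$ but $A\notin S$, then Lemma \ref{lem:alg1_output_defs_all_outside} yields some $B\in S$ with $B\defeat A$, and since $S$ defends $A$ there must be some $C\in S$ with $C\defeat B$; but $B,C\in S$ with $C\defeat B$ contradicts the conflict-freeness of $S$. Hence $\chi(S)=S$, so $S$ is complete, and combining this with Lemma \ref{lem:alg1_output_defs_all_outside} gives that $S$ is a stable extension of $DG\pair{T^+}$.

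I do not expect a substantial obstacle, since the genuinely technical content — conflict-freeness, the consistency of $Conc(S)$ (Lemma \ref{lem:acf_S_consistent}), and the proof that $S$ defeats every outsider — has already been discharged in the preceding lemmas. What remains is the standard abstract-argumentation fact that a conflict-free set defeating every argument outside itself is automatically complete. The only point requiring care is that \emph{both} inclusions $S\subseteq\chi(S)$ and $\chi(S)\subseteq S$ rely essentially on the external-defeat property of Lemma \ref{lem:alg1_output_defs_all_outside}, so the argument cannot be shortened to invoke conflict-freeness on its own.
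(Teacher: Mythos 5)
Your proof is correct and follows the paper's own route exactly: the paper's proof of Theorem \ref{thm:well_defined_stb_ext} simply declares the result ``immediate'' from Lemmata \ref{lem:alg1_output_is_cf} and \ref{lem:alg1_output_defs_all_outside}, leaving implicit the standard fact that a conflict-free set defeating every outside argument is complete. You have merely written out that folklore step (admissibility from cf plus external defeat, then $\chi(S)\subseteq S$ by contradiction), which is a faithful expansion rather than a different argument.
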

\begin{proof}
Immediate from Lemmata \ref{lem:alg1_output_is_cf} and \ref{lem:alg1_output_defs_all_outside}.
\end{proof}

We also have a useful property relating the presence of an argument in a stable extension with its rules, which is independent of Algorithm \ref{alg:gen_stab_ext}.

\begin{lem}\label{lem:rules_in_stb_ext}
For a LPDT $T^+$ and defeat graph $DG\pair{T^+}$, if $\ext$ is a stable extension of $DG\pair{T^+}$, then $A\in\ext\Leftrightarrow DR(A)\subseteq DR\pair{\ext}$.
\end{lem}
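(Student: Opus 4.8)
The forward implication is immediate: if $A\in\ext$ then $DR(A)\subseteq\bigcup_{B\in\ext}DR(B)=DR(\ext)$. The content lies in the converse, which I would recast as the statement $\ext=Args\pair{DR(\ext)}$. Writing $R:=DR(\ext)$, the inclusion $\ext\subseteq Args(R)$ is just the forward implication combined with Equation \ref{eq:def_Args(R)}, so the whole lemma reduces to showing $Args(R)\subseteq\ext$, i.e. $DR(A)\subseteq R\Rightarrow A\in\ext$.

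Before attacking this I would record two easy structural facts about an arbitrary stable $\ext$. First, $\ext$ is subargument closed: if $B\in\ext$ and $B'\subarg B$ but $B'\notin\ext$, then by stability some $C\in\ext$ has $C\defeat B'$, and since defeat is inherited upwards through $\subarg$ this gives $C\defeat B$, contradicting conflict-freeness. Second, every strict argument lies in $\ext$: a strict argument has no defeasible subargument, so by Equation \ref{eq:attack} (only rebuts occur here) it can never be attacked, hence never defeated, so stability forces it into $\ext$. The first fact I will use freely; the second is the base case of the induction below.

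I would then prove $DR(A)\subseteq R\Rightarrow A\in\ext$ by well-founded induction on the proper-subargument relation $\propsubarg$. The base case is the strict case, handled above. For the inductive step, every proper subargument $A''\propsubarg A$ satisfies $DR(A'')\subseteq DR(A)\subseteq R$ and is $\propsubarg$-smaller, so by the induction hypothesis $A''\in\ext$. Suppose for contradiction $A\notin\ext$. By stability there is $C\in\ext$ with $C\defeat A$, so by Equation \ref{eq:ASPIC+_general_defeat} there is $A'\subarg A$ whose top rule $r$ is defeasible, with $Conc(C)=\overline{Cons(r)}$ and $C\not\prec_{SP}A'$. If $A'$ were a \emph{proper} subargument of $A$, then $A'\in\ext$ by the induction hypothesis, and $C,A'\in\ext$ with $C\attk A'$ and $C\not\prec_{SP}A'$ would give $C\defeat A'$, contradicting conflict-freeness. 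Hence the only surviving possibility is $A'=A$, so $r=TopRule(A)$.

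In this remaining case $r\in DR(A)\subseteq R=DR(\ext)$, so some member of $\ext$ uses $r$, and by subargument closure its subargument $B''\in\ext$ with $TopRule(B'')=r$ satisfies $Conc(B'')=Cons(r)$. Then $Conc(C)=\overline{Cons(r)}=\overline{Conc(B'')}$, so $C\attk B''$; conflict-freeness of $\ext$ gives $C\prec_{SP}B''$. Combining $A\precsim_{SP}C$ (from $C\not\prec_{SP}A$ and totality, Lemma \ref{lem:SP_total_preorder_when_total}) with $C\prec_{SP}B''$ and transitivity of $\ordneq_{SP}$ (Lemma \ref{lem:disj_eli_is_transitive_over_chain}) yields $A\prec_{SP}B''$. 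This is exactly where I expect the main obstacle to sit: $A\notin\ext$ and $B''\in\ext$ share the defeasible top rule $r$, yet $A\prec_{SP}B''$, and merely sharing a top rule does not forbid this. I plan to close it using the defining property of $<_{SP}$ in Equation \ref{eq:SP_ord_def}, which guarantees that $Ante(r)$ is derivable using only rules strictly $<_{SP}$-greater than $r$; this canonical derivation produces an argument concluding $Cons(r)$ whose defeasible rules all dominate those distinguishing $A$ from $B''$, so that replacing the low-priority part of the derivation gives an argument for $Cons(r)$ in $\ext$ that is not strictly dispreferred to $C$, contradicting conflict-freeness with $C$. An alternative finish I would explore is to show directly that a conflict-free, subargument-closed $\ext$ cannot contain two arguments with complementary conclusions, by transposing the $C$-versus-$B''$ conflict down to the $<_{SP}$-least defeasible rule involved (as in the proof of Lemma \ref{lem:acf_S_consistent}) and observing that the resulting attacker uses only strictly $<_{SP}$-greater rules and hence defeats; the delicate point in this route is ensuring the transposed arguments remain inside $\ext$.
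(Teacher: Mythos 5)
Your forward direction, the two structural facts (subargument closure of $\ext$, membership of strict arguments), and the inductive reduction to the case where the defeat of $A$ lands on $r=TopRule(A)$ are all correct, as are the derivations $C\prec_{SP}B''$ (forced by conflict-freeness) and $A\prec_{SP}B''$. But precisely there the proposal stops being a proof: the entire content of the converse direction sits in the step you yourself label the main obstacle, and both of your proposed finishes are plans, not arguments. Worse, the first plan fails as stated: the canonical derivation of $Ante(r)$ promised by Equation \ref{eq:SP_ord_def} uses rules $<_{SP}$-greater than $r$, but such rules need not belong to $DR\pair{\ext}$ --- high-priority rules can be blocked, as in Example \ref{eg:always_blocked_default}, where the $<_{SP}$-greatest rule is excluded from the stable extension --- so the argument for $Cons(r)$ built from that derivation need not lie in $\ext$; asserting that it does is assuming an instance of the very implication $DR(\cdot)\subseteq DR\pair{\ext}\Rightarrow(\cdot)\in\ext$ that you are trying to prove.

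The observation that closes your main case is one step from where you stopped, and it is exactly what the paper's much terser proof compresses into the bare assertion ``so $\ext\defeat B$'': the conflict at $r$ is symmetric. Since $Conc(B'')=Cons(r)$ and $Conc(C)=\overline{Cons(r)}$, whenever $TopRule(C)$ is defeasible we have $B''\attk C$ by Equation \ref{eq:attack}; your $C\prec_{SP}B''$ then gives $B''\not\prec_{SP}C$ by asymmetry (Lemma \ref{lem:SP_total_preorder_when_total}), hence $B''\defeat C$ by Equation \ref{eq:ASPIC+_general_defeat}, violating conflict-freeness of $\ext$ --- no comparison of $A$ with $B''$ is needed at all. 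If $C$ is strict, Corollary \ref{cor:empty_set_is_top} gives $C\defeat B''$ and the same contradiction. The only genuinely remaining case is $C$ defeasible with a strict top rule, where $B''$ need not attack $C$; there your second plan (transposition, as in Lemma \ref{lem:acf_S_consistent}) is the right tool, and its ``delicate point'' is resolvable because $\prec_{SP}$ depends only on defeasible-rule sets: the transposed arguments use rules from $DR\pair{\ext}$ only, so any defeat of one of them transfers to a defeat of a member of $\ext$. Note also that the paper's proof needs none of your inductive machinery: it argues contrapositively from a defeat at an arbitrary subargument of $A$ with defeasible top rule $r$, and your analysis in fact exposes what justification its one-line transfer claim requires; but as submitted, your proof identifies the crux without closing it, so the lemma remains unproven.
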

\begin{proof}
($\Rightarrow$) If $A\in\ext$ then $DR(A)\subseteq\bigcup_{A\in\ext}DR(A)=DR(\ext)$ trivially. ($\Leftarrow$, contrapositive) If $A\notin\ext$, then $\ext\defeat A$ at some $A'\subarg A$. Let $r:=TopRule\pair{A'}$. Assume for contradiction that $r\in DR(\ext)$, then $\pair{\exists B\in\ext}r\in DR(B)$, so $\ext\defeat B$ -- contradiction, as $\ext$ is cf. Therefore, $r\notin DR\pair{\ext}$. But as $r\in DR(A)$, $DR(A)\not\subseteq DR\pair{\ext}$.
\end{proof}

We have shown that given $T^+$ and $AG\pair{T^+}$, Algorithm \ref{alg:gen_stab_ext} gives a unique output that is a stable extension (Theorem \ref{thm:well_defined_stb_ext}). We now show that this is the \textit{only} stable extension that $DG\pair{T^+}$ can have.

\begin{thm}\label{thm:total_still_has_unique_stable_extension}
Let $\ext$ be the stable extension that is the output of Algorithm \ref{alg:gen_stab_ext}. This is the unique stable extension of $DG\pair{T^+}$.
\end{thm}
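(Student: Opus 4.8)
The plan is to show that \emph{every} stable extension of $DG\pair{T^+}$ coincides with the output $\ext$ of Algorithm \ref{alg:gen_stab_ext}. Fix an arbitrary stable extension $\ext'$. By Lemma \ref{lem:rules_in_stb_ext}, membership in any stable extension is determined entirely by its defeasible rules ($A\in\ext'\Leftrightarrow DR(A)\subseteq DR(\ext')$, and likewise for $\ext$), so writing $R:=DR(\ext)$ and $R':=DR(\ext')$ we have $\ext=Args(R)$ and $\ext'=Args(R')$ by Equation \ref{eq:def_Args(R)}, and it suffices to prove $R=R'$. I would rule out $R\neq R'$ by a ``greatest element of the symmetric difference'' argument: assuming $R\ominus R'\neq\es$, let $r^*$ be its $<_{SP}$-greatest element, so that $R$ and $R'$ agree on the strict $<_{SP}$-upper set $U:=\set{r\in\relsymb_d\:\vline\:r^*<_{SP}r}$; put $R_0:=R\cap U=R'\cap U$. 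Note $R_0$ is precisely the set of rules the algorithm has committed to at the point it considers $r^*$, so its working set is the attack-free set $Args(R_0)$; moreover, by the construction of $<_{SP}$ (Equation \ref{eq:SP_ord_def}) together with the algorithm, every rule of $R_0$ is already active over $Args(R_0)$, so adjoining $r^*$ unlocks no new higher-priority conclusions.

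Suppose first $r^*\in R-R'$. Since the algorithm kept $r^*$, the set $Args(R_0\cup\set{r^*})$ is attack-free. Let $A$ be the argument with top rule $r^*$ whose remaining rules lie in $R_0$ (it exists because $r^*\in R$). As $r^*\notin R'$, Lemma \ref{lem:rules_in_stb_ext} gives $A\notin\ext'$, so some $C\in\ext'$ defeats $A$; because every other rule of $A$ lies in $R_0\subseteq R'$, the defeated subargument must be the one with top rule $r^*$, i.e.\ $A$ itself, whence $C\attk A$ with $Conc(C)=\overline{Cons(r^*)}$ and $C\not\prec_{SP}A$. Since $DR(C)\subseteq R'$ and $R'\cap U=R_0$, every rule of $C$ lying in $U$ already belongs to $DR(A)$, so $DR(C)-DR(A)$ consists solely of rules $<_{SP}r^*=\min_{<_{SP}}DR(A)$. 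If $DR(C)-DR(A)=\es$ then $C\in Args(R_0\cup\set{r^*})$ and $C\attk A$ contradicts attack-freeness. Otherwise $DR(C)\neq DR(A)$, so by totality of $\precsim_{SP}$ (Lemma \ref{lem:SP_total_preorder_when_total}) and $C\not\prec_{SP}A$ we get $A\prec_{SP}C$, i.e.\ $DR(A)\ordneq_{SP}DR(C)$; but this needs a witness $x\in DR(A)-DR(C)$ with $x<_{SP}y$ for all $y\in DR(C)-DR(A)$, which is impossible since every such $y<_{SP}r^*\leq_{SP}x$. This contradiction rules out $r^*\in R-R'$.

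Now suppose $r^*\in R'-R$. The algorithm discarded $r^*$ precisely because $Args(R_0\cup\set{r^*})$ contains an attack, equivalently (by Lemma \ref{lem:acf_S_consistent} and its converse, since attacking conclusions are contradictory) $Conc(Args(R_0\cup\set{r^*}))$ is inconsistent while $Conc(Args(R_0))$ is consistent. By the remark above, $Conc(Args(R_0\cup\set{r^*}))=Th(Conc(Args(R_0))\cup\set{Cons(r^*)})$, so $\overline{Cons(r^*)}\in Conc(Args(R_0))$; pick $P\in Args(R_0)$ with $Conc(P)=\overline{Cons(r^*)}$ and let $Q$ be the $r^*$-topped argument, so $P\attk Q$. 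Because $r^*\in R'$ and $R_0\subseteq R'$, both $P,Q\in Args(R_0\cup\set{r^*})\subseteq\ext'$. Finally $P\not\prec_{SP}Q$, since any witness would require some $x\in DR(P)-DR(Q)$ with $x<_{SP}r^*\in DR(Q)-DR(P)$, yet $DR(P)\subseteq R_0$ consists of rules $>_{SP}r^*$. Hence $P\defeat Q$ with $P,Q\in\ext'$, contradicting conflict-freeness. Thus $r^*\in R'-R$ is also impossible, so $R=R'$ and $\ext=\ext'$.

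The main obstacle is the preference bookkeeping in the first case: one must verify that the structure-preference lifting $\ordneq_{SP}$ (the disjoint elitist order of Equation \ref{eq:SP_set_comparison}) never lets a defeater built from rules below $r^*$ survive against the minimal $r^*$-argument, which is exactly where $r^*=\min_{<_{SP}}DR(A)$ and the ``exists--forall'' shape of Equation \ref{eq:disj_eli} are used. A secondary point to check carefully is that adjoining $r^*$ does not reactivate previously dormant higher-priority rules, so that the inconsistency detected by the algorithm is genuinely traceable to $Cons(r^*)$; this follows from the way $<_{SP}$ orders rules by the derivability of their antecedents.
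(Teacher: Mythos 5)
Your strategy is genuinely different from the paper's: you reduce uniqueness to equality of the rule sets $R:=DR\pair{\ext}$ and $R':=DR\pair{\ext'}$ via Lemma \ref{lem:rules_in_stb_ext} and then attack the $<_{SP}$-greatest element $r^*$ of $R\ominus R'$ by replaying Algorithm \ref{alg:gen_stab_ext}, whereas the paper never re-enters the algorithm: it builds an alternating defeat chain between $\ext-\ext'$ and $\ext'-\ext$, shows $A\defeat B\Rightarrow A\not\prec_{SP}B$ (Corollary \ref{cor:larger_args_less_pref} plus totality, Lemma \ref{lem:SP_total_preorder_when_total}), upgrades the chain to a strictly $\prec_{SP}$-ascending one via Lemma \ref{lem:rules_in_stb_ext}, and terminates it by finiteness of $\relsymb_d$ at an undefeated argument outside one extension. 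Within your case $r^*\in R-R'$ there is a false intermediate claim: ``every rule of $C$ lying in $U$ already belongs to $DR(A)$'' does not hold, since $DR(C)\cap U\subseteq R'\cap U=R_0$ but $DR(A)$ need not contain all of $R_0$; hence $DR(C)-DR(A)$ may contain rules $>_{SP}r^*$ and your ``no witness'' contradiction does not follow as written. This case is repairable: any witness $x\in DR(A)-DR(C)$ for $A\prec_{SP}C$ satisfies $r^*\leq_{SP}x$, so every $y\in DR(C)-DR(A)$ lies in $U\cap R'=R_0$; since also $r^*\notin DR(C)$, this forces $DR(C)\subseteq R_0$, and then $C\attk A$ inside $Args\pair{R_0\cup\set{r^*}}$ contradicts the attack-freeness that made the algorithm keep $r^*$.

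The serious gap is in the case $r^*\in R'-R$, which rests on a false dichotomy: $r^*\notin R$ does \emph{not} imply that ``the algorithm discarded $r^*$ because $Args\pair{R_0\cup\set{r^*}}$ contains an attack.'' There is a third possibility: the algorithm \emph{kept} $r^*$ vacuously because $Ante\pair{r^*}$ was not derivable over its working set (some $<_{SP}$-higher rule that would have enabled $r^*$ was itself excluded), so that $Args\pair{R_0\cup\set{r^*}}=Args\pair{R_0}$ is attack-free and yet $r^*$ never enters $DR\pair{\ext}$. For instance, with $\mathcal{K}_n=\set{\neg p}$, $r_1:=(\Rightarrow p)$, $r_2:=(p\Rightarrow q)$ and $r_2<_{SP}r_1$, the algorithm discards $r_1$, then ``keeps'' $r_2$ without creating any argument, and $DR\pair{\ext}=\es$. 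In that sub-case $\overline{Cons\pair{r^*}}\notin Conc\pair{Args\pair{R_0}}$, your argument $P$ does not exist, and no contradiction is reached; to have $r^*\in R'$ at all, $\ext'$ must enable $r^*$ ``from below'' with rules $<_{SP}r^*$ in $R'$, a configuration your maximality choice of $r^*$ does not exclude, and ruling it out appears to require precisely the defeat-chain descent the paper uses (one is led to $B'\in\ext'$ with top rule $r^*$, a defeater $A_0\in\ext-\ext'$ of $B'$, a defeater of $A_0$ in $\ext'-\ext$, and so on). Relatedly, your preliminary assertion that $R_0$ is the algorithm's committed rule set when it reaches $r^*$ is true but conceals exactly this subtlety: the working set consists of the rules that were kept \emph{and usable when kept}, not of all kept rules. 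Until the vacuously-kept sub-case is closed, the proof is incomplete.
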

\begin{proof}
Given $DG\pair{T^+}=\ang{\alg,\defeat}$, let $\ext$ be the output of Algorithm \ref{alg:gen_stab_ext}, and assume for contradiction that $\ext'\neq\ext$ is some other stable extension of $DG\pair{T^+}$. Let $A'_1\in\ext'-\ext$. There is an argument $A_2\in\ext-\ext'$ such that $A_2\defeat A'_1$. There is an argument $A'_3\in\ext'-\ext$ such that $A'_3\defeat A_2$... and so on. We therefore construct a defeat chain of defeasible arguments
\begin{align}\label{eq:defeat_path}
\cdots\defeat A'_5\defeat A_4\defeat A'_3\defeat A_2\defeat A'_1,
\end{align}
where all primed arguments belong to $\ext'$ and all unprimed arguments are in $\ext$.

In general, suppose $A\defeat B$, then by Equation \ref{eq:ASPIC+_general_defeat}, $A\defeat C\subarg B$ for some $C$, and $A\not\prec_{SP}C$. By Corollary \ref{cor:larger_args_less_pref}, $B\precsim_{SP}C$. Assume for contradiction that $A\prec_{SP}B$, then by Lemma \ref{lem:SP_total_preorder_when_total}, $A\prec_{SP}C$ -- contradiction, so $A\not\prec_{SP}B$.

Equation \ref{eq:defeat_path} thus becomes $\cdots\not\prec_{SP} A'_5\not\prec_{SP} A_4\not\prec_{SP} A'_3\not\prec_{SP} A_2\not\prec_{SP} A'_1$. By Lemma \ref{lem:SP_total_preorder_when_total}, this is equivalent to $A'_1\precsim_{SP}A_2\precsim_{SP}A'_3\precsim_{SP}A_4\precsim_{SP}A'_5\precsim_{SP}\cdots$. By Equation \ref{eq:SP_arg_pref_non_strict} and Lemma \ref{lem:rules_in_stb_ext}, none of the adjacent arguments in this chain can have the same defeasible rules. This implies $A'_1\prec_{SP}A_2\prec_{SP}A'_3\prec_{SP}A_4\prec_{SP}A'_5\prec_{SP}\cdots$. The corresponding chain for defeasible rules is, by Equation \ref{eq:SP_arg_pref}, $DR\pair{A'_1}\ordneq_{SP}DR\pair{A_2}\ordneq_{SP}DR\pair{A'_3}\ordneq_{SP}DR\pair{A_4}\ordneq_{SP}\cdots$. As $\relsymb_d$ is a finite set, there are only finitely many possible sets of defeasible rules. This strictly ascending chain must therefore be finite, say of length $n$. Equation \ref{eq:defeat_path} must therefore be of finite length, terminating at an undefeated argument $B$, which may or may not be strict.
\begin{align*}
&B\defeat A'_{n-1}\defeat\cdots\defeat A_2\defeat A'_1\text{ or }B\defeat A_{n-1}\defeat\cdots\defeat A_2\defeat A'_1,
\end{align*}
for some $n\in\nat^+$. In the first case, $B\in\ext-\ext'$ is an undefeated argument, so $\ext'$ is not a stable extension -- contradiction. In the second case, by similar reasoning, $\ext$ is not a stable extension -- contradiction. There cannot be another stable extension $\ext'$ of $DG\pair{T^+}$, so $\ext$ is the unique stable extension of $DG\pair{T^+}$.
\end{proof}

\noindent The defeat graphs $DG\pair{T^+}$ of LPDTs $T^+$ thus have a unique stable extension.

\subsubsection{The Representation Theorem: Statement and Proof}

\noindent In this section we state and prove the representation theorem which relates the stable extension of $DG\pair{T^+}:=\ang{\alg,\defeat}$ with the extension of the corresponding LPDT $T^+:=\ang{D,\:W,\:<^+}$.

\begin{thm}\label{thm:rep_thm}
(The Representation Theorem) Let $AG\pair{T^+}$ be the attack graph corresponding to an LPDT $T^+$ with defeat graph $DG\pair{T^+}$ under $\precsim_{SP}$.
\begin{enumerate}
\item Let $E$ be the extension of $T^+$, which is unique (Section \ref{sec:rev_PDL}). Then there exists a unique stable extension $\ext\subseteq\alg$ of $DG\pair{T^+}$ such that $Conc\pair{\ext}=E$.
\item Let $\ext\subseteq\alg$ be the unique stable extension of $DG\pair{T^+}$ by Theorem \ref{thm:total_still_has_unique_stable_extension}, then $Conc(\ext)$ is the extension of $T^+$.
\end{enumerate}
\end{thm}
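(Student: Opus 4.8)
The plan is to reduce both parts to the single identity $Conc(\ext)=E$ for the unique stable extension $\ext$ of $DG(T^+)$, and then to establish this by showing that the defeasible rules occurring in $\ext$ correspond exactly to the non-blocked defaults of $E$. First I would observe that by Theorem \ref{thm:total_still_has_unique_stable_extension} the defeat graph $DG(T^+)$ has exactly one stable extension $\ext$, namely the output of Algorithm \ref{alg:gen_stab_ext}; hence ``there exists a unique stable extension with $Conc(\ext)=E$'' (part 1) and ``the unique stable extension satisfies $Conc(\ext)=E$'' (part 2) collapse into the same claim. Next, since $NBD(E)$ depends only on $E$ (Equation \ref{eq:lem_NBD}) and not on the linearisation used to build it, Lemma \ref{lem:LPDT_SP_order_keeps_extension_same} lets me assume without loss of generality that the PDL priority is $<_{SP}$ itself, so that the order in which PDL applies defaults and the order in which Algorithm \ref{alg:gen_stab_ext} processes rules coincide. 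I write $a_1>_{SP}\cdots>_{SP}a_N$ for the rules and identify each $a_k$ with the default $f^{-1}(a_k)=\frac{\theta_k:\phi_k}{\phi_k}$.

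The core step is to prove $f^{-1}(DR(\ext))=NBD(E)$ by induction along $<_{SP}$, tracking the partial state $S_i=Args(R_i)$ of Algorithm \ref{alg:gen_stab_ext} after it has considered $a_1,\dots,a_i$. The invariant I maintain is that $R_i$ consists exactly of those $a_k$ with $k\le i$, $\theta_k\in Conc(S_i)$ and $\neg\phi_k\notin Conc(S_i)$, together with $Conc(S_i)=Th(W\cup\{\phi_k : a_k\in R_i\})$. When $a_k$ is processed there are three cases. If $\theta_k$ is not yet derivable then, by the construction of $Args(\cdot)$, no argument in $S_{k-1}\oplus a_k$ can actually apply $a_k$, so the step is a no-op and $a_k\notin DR(\ext)$; here the $<_{SP}$ construction (Equation \ref{eq:SP_ord_def}) is precisely what guarantees that $\theta_k$ becomes derivable from the kept higher-priority rules exactly when the PDL default becomes active. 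If $\theta_k$ is derivable and $\neg\phi_k$ is not, then adding $a_k$ creates no rebut (Equation \ref{eq:attack}) because $Conc(S_{k-1})$ is consistent (Lemma \ref{lem:acf_S_consistent}), so $a_k$ is kept, whether it is a generating default ($\phi_k$ new) or semi-active ($\phi_k$ already present), matching $a_k\in NBD(E)$. If instead $\neg\phi_k$ is already derivable, then the argument for $\phi_k$ rebuts, the no-attack test of Algorithm \ref{alg:gen_stab_ext} fails, and $a_k$ is discarded, matching $a_k\notin NBD(E)$. The subargument- and strict-closure of $Args(R)$ (Lemma \ref{lem:Args(R)_props}) together with Lemma \ref{lem:rules_in_stb_ext} keep the two descriptions of membership aligned through the induction.

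Having the correspondence, I would conclude as follows. Every rule in $DR(\ext)$ has derivable antecedent, since its antecedent-subarguments lie in $\ext$ by subargument closure, so using that $Conc(\ext)$ is deductively closed I get $Conc(\ext)=Cl_{\relsymb_s}\big(W\cup\{\phi_k : a_k\in DR(\ext)\}\big)=Th(W\cup\{Cons(d):d\in NBD(E)\})$. On the PDL side $E=Th(W\cup\{Cons(d):d\in GD(<_{SP})\})$, and since the semi-active defaults $SAD(<_{SP})$ contribute only consequents already in $E$, this equals $Th(W\cup\{Cons(d):d\in NBD(E)\})$ by Equations \ref{eq:NBD}--\ref{eq:lem_NBD}. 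Hence $Conc(\ext)=E$, which establishes both parts.

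The main obstacle is the inductive correspondence, and within it the subtle monotonicity point that a default blocked in the final $E$ must already be blocked at the stage Algorithm \ref{alg:gen_stab_ext} processes it; this is not immediate from ``$\neg\phi_k\in E$'' alone. I would argue it from the consistency of $E$ together with the $<_{SP}$ ordering: were $\neg\phi_k$ to enter $E$ only through a lower-priority default applied after $\phi_k$, then $\phi_k$ and $\neg\phi_k$ would both lie in the consistent set $E$, which is impossible, so the negation of any blocked consequent is settled before $a_k$ is reached. Correctly accommodating the semi-active defaults (which $GD(<_{SP})$ omits but $NBD(E)$ retains, and which Algorithm \ref{alg:gen_stab_ext} keeps because they create no attack) is the other place where care is needed.
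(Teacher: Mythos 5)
Your proposal takes a genuinely different route from the paper: the paper proves part 1 by building $\ext:=Args\pair{f\pair{NBD(E)}}$ \emph{directly from the PDL side} and verifying conflict-freeness and stability (for an excluded rule $r$ it uses Equation \ref{eq:lem_NBD} to split into ``$\neg\phi\in E$'', where a defeater is assembled from the PDL derivation of $\neg\phi$, and ``$\theta\notin E$'', handled by infinite descent), and proves part 2 by separate bidirectional inclusions; you instead run a single induction that synchronises Algorithm \ref{alg:gen_stab_ext} with the PDL construction. Unfortunately your Case 1 contains a genuine gap. You claim that Equation \ref{eq:SP_ord_def} ``guarantees that $\theta_k$ becomes derivable from the kept higher-priority rules exactly when the PDL default becomes active''. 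It does not: Equation \ref{eq:SP_ord_def} tests derivability of $Ante(a_k)$ over $Args\pair{\set{a_1,\ldots,a_{k-1}}}$, i.e.\ over \emph{all} $<_{SP}$-greater rules, including rules the algorithm later discards as blocked, whereas your invariant needs derivability from the \emph{kept} rules $R_{k-1}$. Concretely, let $W=\es$ and $r_1=(\Rightarrow\neg q\wedge p)$, $r_2=(p\Rightarrow q)$, $r_3=(q\vee s\Rightarrow t)$, $r_4=(\Rightarrow s)$ with $r_4<_Dr_3<_Dr_2<_Dr_1$. Equation \ref{eq:SP_ord_def} yields $r_4<_{SP}r_3<_{SP}r_2<_{SP}r_1$ (so $<_{SP}=<_D$ and your WLOG step is vacuous here), because $Ante(r_3)=q\vee s$ is derivable via the consequent of $r_2$. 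On the PDL side, $d_2$ is blocked by $d_1$, then $d_4$ is applied, and only \emph{then} does $d_3$ become active; hence $E=Th\pair{\set{\neg q\wedge p,\:s,\:t}}$ and $NBD(E)=\set{d_1,d_3,d_4}$. In your induction, however, when $a_3=r_3$ is processed, $q\vee s$ is not derivable from the kept rules $\set{r_1}$, so by your Case 1 the step is a no-op and $r_3\notin DR\pair{\ext}$; the rule is never revisited when $r_4$ subsequently makes $q\vee s$ derivable, so your invariant already fails at the next step, and the identity $f^{-1}\pair{DR\pair{\ext}}=NBD(E)$ that your induction is supposed to deliver is false for the set it actually computes ($\set{r_1,r_4}$ versus $\set{d_1,d_3,d_4}$).

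The underlying point is that $<_{SP}$ can place a rule \emph{above} the very rules PDL needs in order to activate it (here $r_3$ above $r_4$), because Equation \ref{eq:SP_ord_def} is blind to blocking; so even after your WLOG, PDL's dynamic ``greatest active'' selection and a single static sweep in $<_{SP}$ order do not visit defaults in the same order (Example \ref{eg:always_blocked_default} already warns that $<_{SP}$ need not match the PDL application order). You identified and handled the dual timing issue for blocked \emph{consequents} (your Case 3) via consistency of $E$, but the antecedent-side timing issue is the one your synchronised induction cannot absorb: a no-op'd rule would have to be re-enabled later, and neither your invariant nor Algorithm \ref{alg:gen_stab_ext} ever does that. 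The paper's proof of part 1 is structured precisely so as not to lean on this synchronisation: $r_3\in f\pair{NBD(E)}$ is present in $\ext$ from the outset, and exclusion of a rule is analysed through $E$ itself rather than through the stage at which the algorithm meets it. Two further remarks: your WLOG additionally needs (but does not prove) that the SP transformation is idempotent, i.e.\ that recomputing Equation \ref{eq:SP_ord_def} from $<_{SP}$ returns $<_{SP}$; and note that your Case 1 reading of the algorithm (unused rules are forgotten, since $DR(S)$ only records rules occurring in arguments of $S$) is exactly the reading under which the premise ``every rule outside $DR\pair{\ext}$ was rejected because it caused an attack'', used in Lemma \ref{lem:alg1_output_defs_all_outside}, becomes problematic --- so a repair of your argument requires either modifying the algorithm to accumulate and re-test inapplicable rules, or abandoning the synchronised induction in favour of the paper's $NBD(E)$-based construction.
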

\begin{proof}
\textbf{Proof of part 1:} To prove the first statement we construct $\ext$ in terms of $E$ and show $\ext$ is a stable extension of $\ang{\alg,\defeat}$. By Theorem \ref{thm:total_still_has_unique_stable_extension}, this stable extension is unique. We finally show $Conc(\ext)=E$.

Given $E$, we let $\ext:=Args\pair{f\pair{NBD(E)}}\subseteq\alg$, where $NBD(E)$ is defined in Equation \ref{eq:lem_NBD} and $Args\pair{\:\cdot\:}$ is defined by Equation \ref{eq:def_Args(R)}. This set is unique from the properties of $Args$. For notational convenience we let $R:=f\pair{NBD(E)}$. We show $\ext$ is a stable extension.

Assume for contradiction that $\ext$ is not cf, which means there are arguments $A,B\in\ext$ such that $A\defeat B$, which means $A\attk B$ by Equation \ref{eq:ASPIC+_general_defeat}. Let $a:=Conc(A)$. As $A\in\ext$, Equation \ref{eq:def_Args(R)} and the definition of $\ext$ means that $DR(A)\subseteq R$. This means from $W$ and the defaults of $f^{-1}\pair{DR(A)}\subseteq D$ ($f$ is defined by Equation \ref{eq:bij_defaults_def_rules}), which are non-blocked defaults in $E$, it follows that $a\in E$. As $E$ is deductively closed, this means $\neg\neg a\in E$. Now let $B'\subarg B$ be the argument such that $TopRule(B')=\pair{b\Rightarrow \neg a}$ for some appropriate intermediate conclusion $b\in Conc\pair{Sub(B)}$. As $B\in\ext$, this means $\pair{b\Rightarrow \neg a}\in R$. By Equation \ref{eq:lem_NBD}, this means $\neg\neg a\notin E$ -- contradiction. Therefore, $\ext$ is cf.

To show $Args(R)$ defeats all other arguments, let $B\notin Args(R)$ be arbitrary. Let $r\in DR(B)-R$ be some rule. Let $B'\subarg B$ be such that $TopRule(B')=r$. The rule $r$ corresponds to a default $f^{-1}(r)=\frac{\theta:\phi}{\phi}\notin NBD(E)$ (Equation \ref{eq:bij_defaults_def_rules}). By Equation \ref{eq:lem_NBD}, we have two cases: either $\theta\notin E$ or $\neg\phi\in E$.

Case 1: If $\neg\phi\in E$, then we now show there exists an argument $A\in Args(R)$ such that $A\defeat B'$ and hence $A\defeat B$, under $\precsim_{SP}$. By Equations \ref{eq:ext_base} and \ref{eq:ext_ind}, there is some $i\in\nat$ such that $\neg\phi\in E_i$. Suppose $i=0$ then $W\models\neg\phi$. Compactness means there is some $W'\subseteq_\text{fin} W$ such that $W'\models\neg\phi$. We can construct an argument $A$ such that $Prem(A)=W'$ and $Conc(A)=\neg\phi$ as there will be appropriate combinations of strict rules in $\relsymb_s$ and premises in $\mathcal{K}_n$, so $A\rightharpoonup B$. As $DR(A)=\es\subseteq R$, we must have $A\in Args(R)$. As $A$ is strict, $A\hookrightarrow B$ is guaranteed by Corollary \ref{cor:empty_set_is_top} of $\precsim_{SP}$.

Now suppose that $i>0$, then $\neg\phi\in E_j$ where $j>0$ is the witness for $i$. Let $d_j\in D$ be the default that is $<^+$-greatest active in the layer $E_j$, so the set of defaults that are used in concluding $\neg\phi$ (up to the application of deductive rules) is $S:=\set{d_0,\:\ldots,\:d_{j-1}}\subseteq GD_{j-1}\pair{<^+}\subseteq NBD(E)$. We can construct an argument $A$ such that $Prem(A)\subseteq W$, $Conc(A)=\neg\phi$ and $DR(A)=f(S)$. Clearly, $DR(A)=f(S)\subseteq f\pair{NBD(E)}=:R$ and hence $A\in Args\pair{R}$. It is clear that $A\attk B$, so we need to show $A\not\prec_{SP} B$.

Given that $\neg\phi\in E_j$, it must be the case that $\phi\notin E_j$. Therefore, $f^{-1}\pair{r}$ is not $<^+$-greatest active for all extension layers $E_0,\:\ldots,\:E_{j-1}$. Suppose for contradiction that there is some rule $s\in DR(A)$ such that $s<_{SP} r$. Then by Equation \ref{eq:SP_ord_def}, $r$ must be $<^+$-greatest active at some $E_k$ for $k<j-1$, which would then result in $\phi$ in $E_{k+1}$, therefore preventing $\neg\phi\in E_j$ -- contradiction.\footnote{For this $s\in DR(A)$, the assumption that $s<_{SP}r$ means that the antecedent of $f^{-1}(r)$ is already in the appropriate extension layer.} Therefore, $r$ is $<_{SP}$-smaller than all rules in $DR(A)$. By Equation \ref{eq:SP_arg_pref}, we must have $B\prec_{SP} A$, and hence $A\not\prec_{SP} B$, so $A\defeat B$. Therefore, for the case of $\neg\phi\in E$, $Args(R)$ defeats all arguments outside it. Therefore, in all cases, if $\neg\phi\in E$, there is some argument $A\in\ext$ that defeats $B$.

Case 2: We will assume $\theta\notin E$, and show that it leads to a contradiction with the method of infinite descent.

If $\theta\notin E$, then there is some proper subargument $B''\subset_\text{arg} B'$ such that $Conc(B'')=\theta$. Since $\theta\notin E$ then it is the case that neither $DR\pair{B''}=\es$ nor in $Args\pair{f\pair{NBD\pair{E}}}$. This is because if $DR\pair{B''}=\es$ then $W\supseteq Prem(B'')\models\theta$ so $\theta\in E_0\subseteq E$ by Equation \ref{eq:ext_base}. Further, if $B''\in Args\pair{f\pair{NBD\pair{E}}}$ then $f^{-1}\pair{DR(B)}\subseteq NBD(E)$, so by Equation \ref{eq:lem_NBD}, we have $\theta\in E$.

Therefore, as $B''$ is neither strict nor in $Args\pair{f\pair{NBD\pair{E}}}$ there is some other defeasible rule $s\in DR\pair{B''}-f\pair{NBD\pair{E}}$. We can repeat the above reasoning with the rule $s$ instead of $r$: suppose $s=\frac{\theta':\phi'}{\phi'}$, then either $\theta'\notin E$ or $\neg\phi'\in E$. In the latter case we can construct an argument $A'\in\ext$ concluding $\neg\phi'$ which then defeats $B''$ as in the case when we assumed $\neg\phi\in E$. In the former case we repeat the reasoning in the previous paragraph, but we cannot do this indefinitely as arguments are well-founded. We will end up with either a proper subargument of $B''$ or an argument in $Args\pair{R}$ concluding $\theta$. In both cases $\theta\in E$ is true, so assuming $\theta\notin E$ will lead to contradiction by the method of infinite descent.

Therefore, the only reason for $r\notin R$ is because $\neg\phi\in E$. We have shown there is an argument $A$ that defeats any argument containing the rule $r$. As $r$ belongs to some arbitrary $B\notin\ext$, this means $\ext:=Args(R)$ defeats all arguments outside of it and hence it is a stable extension.

To show that $E=Conc\pair{\ext}$, we show $E\subseteq Conc\pair{\ext}$ and $Conc\pair{\ext}\subseteq E$. In the first case, let $\theta\in E$ so there is some $i\in\nat$ such that $\theta\in E_i$ by Equations \ref{eq:ext_base} and \ref{eq:ext_ind}.

If $\theta\in E_0$, we have $W\models\theta$ so by compactness there is some $\Delta\subseteq_\text{fin}W$ where $\Delta\models\theta$. Given that $\relsymb_s$ has all rules of proof we can construct a strict argument $A$ such that $Prem_n(A)=\Delta$ and $Conc(A)=\theta$. As strict arguments are undefeated, $A\in\ext$ so $\theta\in Conc\pair{\ext}$.

If $\theta\in E_k$ for some $k\in\nat^+$, we can construct a defeasible argument $A$ concluding $\theta$ such that $DR(A)\subseteq R$ and hence $A\in\ext$, so $\theta\in Conc\pair{\ext}$. Specifically, we construct an argument whose defeasible rules correspond to the defaults added to $E$ up to $E_k$.

Conversely, if $\theta\in Conc(\ext)$ there is an argument in $\ext$ concluding $\theta$. If this argument is strict then $\theta\in E_0\subseteq E$, else, as the defeasible rules are in $R$ then $\theta\in E_k\subseteq E$ for some $k\in\nat^+$ that indicates when all of the appropriate defaults needed to conclude $\theta$ are included. 

\textbf{Proof of part 2:} We show $Conc(\ext)\subseteq E$ and $E\subseteq Conc(\ext)$. For the former, if $\theta\in Conc(\ext)$ then there is some $A\in\ext$ concluding $\theta$. If $A$ is strict then $\theta\in E_0\subseteq E$. If $A$ is defeasible, then say $DR(A)=\set{r_i}_{i=1}^k$ for some $k\in\nat^+$. These defeasible rules do not introduce any inconsistency to $\ext$ by Lemma \ref{lem:acf_S_consistent}. Consider the set of corresponding defaults $\set{d_i}_{i=1}^k\subseteq D$ to $DR(A)$. We can choose the smallest index $j\in\nat$ such that all of the conclusions of these defaults are included in $E_j\subseteq E$. This is because either $\set{d_i}_{i=1}^k\subseteq GD(E)$, or there is some $d_i\in SAD(E)$, but that would mean $cons(d_i)\in E$ so for some $l\in\nat$, $cons(d_i)\in E_l$. Therefore, under deductive closure, $\theta\in E_{j+1}\subseteq E$, so $\theta\in E$. This shows that $Conc\pair{\ext}\subseteq E$.

Conversely, let $\theta\in E$, so there is some $i\in\nat$ such that $\theta\in E_i$. If $i=0$, then there is a strict argument $A$, necessarily in $\ext$ as it is undefeated, that concludes $\theta$ so $\theta\in Conc\pair{\ext}$. If $i>0$, then from $\theta\in E_i$ we can consider the defaults added to $E_i$, and use the corresponding defeasible rules to construct an argument $A$ such that $Prem(A)\subseteq W$, $Conc(A)=\theta$ and $DR(A)$ to contain exactly those defeasible rules. By definition of $<_{SP}$ and Algorithm \ref{alg:gen_stab_ext}, these defeasible rules are all present in $\ext$, so $DR(A)\subseteq DR\pair{\ext}$. By Lemma \ref{lem:rules_in_stb_ext}, $A\in\ext$. Therefore, $\theta\in Conc\pair{\ext}$. This shows that $E\subseteq Conc\pair{\ext}$.
\end{proof}

\noindent Theorem \ref{thm:rep_thm} means that PDL, where the default priority $<$ is a total order, is sound and complete with respect to its argumentation semantics; the inferences of PDL can be formally seen as the conclusions of justified arguments.

Given the definition of $\prec_{SP}$ and the translation of a LPDT to its defeat graph as described in Section \ref{sec:instantiation_choice_of_variables}, we can visualise the representation theorem in the following diagram:
\[
\begin{diagram}
\node{T^+} \arrow{e,b,<>}{\text{translation}} \arrow{s,l,T}{\text{PDE}}
\node{DG\pair{T^+}} \arrow{s,r,T}{\text{stable extension}}\\
\node{E} \arrow{e,b,<>}{Conc\pair{\ext}=E}
\node{\ext}
\end{diagram}
\]

\subsection{Satisfaction of Rationality Postulates}\label{sec:normative_rationality_of_PDL_inst}

In this section, we prove \textit{directly} that our instantiation of ASPIC$^+$ to PDL satisfies the Caminada-Amgoud rationality postulates and hence is normatively rational. We do this by investigating some properties of the stable extension of this defeat graph. Notice that we do not appeal to the sufficient conditions articulated by ASPIC$^+$ that, if satisfied, will guarantee normative rationality. We will discuss why in Section \ref{sec:discussion_conclusions}.

\subsubsection{The Stable Extension is Grounded}

\begin{lem}\label{lem:def^n_contains_stable_ext}
Let $T^+$ be an LPDT with attack graph $AG\pair{T^+}$ and defeat graph $DG\pair{T^+}$. Let $\chi:\pow\pair{\alg}\to\pow\pair{\alg}$ be the characteristic function. Let $\ext\subseteq\alg$ be the stable extension of $DG\pair{T^+}$. Then $\pair{\exists n\in\nat^+}\ext\subseteq\chi^n\pair{\es}$, where $\chi^n$ denotes the $n^\text{th}$ iterate of $\chi$.
\end{lem}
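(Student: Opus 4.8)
The plan is to stratify $\ext$ by the $<_{SP}$-rank of the defeasible rules an argument uses and to induct on this rank, showing that an argument of rank $i$ is defended by the $i$-th iterate of $\chi$. Recall from Lemma~\ref{lem:rules_in_stb_ext} that $\ext = Args(R)$ with $R := DR(\ext)$, so $\ext$ is subargument closed by Lemma~\ref{lem:Args(R)_props}. Writing $a_1,\dots,a_N$ (with $N:=\abs{\relsymb_d}$) for the enumeration of $\relsymb_d$ induced by Equation~\ref{eq:SP_ord_def}, define for $A\in\ext$ the quantity $\rho(A):=\max\set{i\:\vline\:a_i\in DR(A)}$, the rank of the $<_{SP}$-smallest defeasible rule of $A$, with $\rho(A):=0$ when $A$ is strict. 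Since $\relsymb_d$ is finite, $\rho$ takes values in $\set{0,1,\dots,N}$. I would prove by strong induction on $\rho$ that $A\in\chi^{\rho(A)+1}(\es)$ for all $A\in\ext$. Because $\chi$ is monotone and $\es\subseteq\chi(\es)$, the iterates $\chi^k(\es)$ are increasing, so this immediately yields $\ext\subseteq\chi^{N+1}(\es)$, and $n:=N+1\in\nat^+$ witnesses the lemma.

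For the base case $\rho(A)=0$, the argument $A$ is strict; in this instantiation strict arguments are never attacked (there are no ordinary premises, no names, and a strict argument has no defeasible subargument to rebut), hence never defeated, so $\es$ vacuously defends $A$ and $A\in\chi(\es)=\chi^1(\es)$.

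For the inductive step, fix $A\in\ext$ with $\rho(A)=i\geq 1$ and an arbitrary defeater $B\defeat A$ (if $A$ has no defeater the claim is vacuous). It suffices to produce a defender $C\in\ext$ with $C\defeat B$ and $\rho(C)<\rho(A)$: then the induction hypothesis gives $C\in\chi^{\rho(C)+1}(\es)\subseteq\chi^{\rho(A)}(\es)$, and since $B$ was arbitrary, $\chi^{\rho(A)}(\es)$ defends $A$, so $A\in\chi^{\rho(A)+1}(\es)$. By Equation~\ref{eq:ASPIC+_general_defeat} there is $A'\subarg A$ with $B\attk A'$ and $B\not\prec_{SP}A'$; subargument closure gives $A'\in\ext$, hence $DR(A')\subseteq R$ and $DR(A')\subseteq DR(A)$. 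Since $\ext$ is conflict-free and $B\defeat A\in\ext$, we have $B\notin\ext$, i.e. $DR(B)\not\subseteq R$. By totality of $\precsim_{SP}$ (Lemma~\ref{lem:SP_total_preorder_when_total}), $B\not\prec_{SP}A'$ yields $A'\precsim_{SP}B$; the case $DR(A')=DR(B)$ is impossible as it would force $B\in Args(R)=\ext$, so by Equations~\ref{eq:SP_arg_pref_non_strict} and \ref{eq:SP_set_comparison} we get $DR(A')\ordneq_{SP}DR(B)$. Thus there is $x\in DR(A')-DR(B)$ with $x<_{SP}y$ for every $y\in DR(B)-DR(A')$. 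In particular $x\in DR(A')\subseteq DR(A)$, so $\mathrm{rank}(x)\leq\rho(A)$, and because $DR(A')\subseteq R$ every rule of $B$ lying outside $R$ belongs to $DR(B)-DR(A')$ and therefore satisfies $x<_{SP}r$.

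Finally I would invoke the construction in the proof of Lemma~\ref{lem:alg1_output_defs_all_outside}: as $B\notin\ext$, the set $\ext$ defeats $B$ through a defender $C\in\ext$ whose defeasible rules all lie in $\set{r'\in R\:\vline\:r<_{SP}r'}$ for some $r\in DR(B)-R$ (when $C$ is strict this is vacuous). By the previous paragraph $x<_{SP}r$, so every rule $r'$ of $C$ satisfies $x<_{SP}r<_{SP}r'$ and hence has rank strictly below $\mathrm{rank}(x)\leq\rho(A)$; therefore $\rho(C)<\rho(A)$, completing the induction. I expect the main obstacle to be exactly this rank bookkeeping in the inductive step: one must make the two ingredients interlock, namely that the disjoint-elitist comparison at the defeated subargument forces all of $B$'s rules outside $R$ to lie $<_{SP}$-above a witness $x$ actually occurring in $A$, while Lemma~\ref{lem:alg1_output_defs_all_outside} forces the defender's rules $<_{SP}$-above one such offending rule of $B$. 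Care is required because the particular offending rule $r$ returned by the algorithm is not under our control, so the bound must hold uniformly for every $r\in DR(B)-R$.
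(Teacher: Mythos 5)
Your proof is correct, and it rests on the same core mechanism as the paper's: stratify $\ext$ along $<_{SP}$, use the unattackability of strict arguments as the base case, and strong-induct using the fact that any defeater $B$ of an argument $A\in\ext$ is itself defeated by an argument of $\ext$ whose defeasible rules all lie strictly $<_{SP}$-above an offending rule of $B$, so defenders always sit at an earlier stage of the induction. The difference is the decomposition. The paper partitions $\relsymb_d$ into intervals $I_{k+1},P_k,I_k,\ldots,P_1,I_1$ around the $k$ rules excluded from $DR\pair{\ext}$, defines the argument sets $AI_j$ and $AP_j$ accordingly, and inducts on the interval index, giving the sharper bound $\ext\subseteq\chi^{k+1}\pair{\es}$; you instead rank each argument by the index of its $<_{SP}$-smallest rule and induct on that rank, giving $\ext\subseteq\chi^{N+1}\pair{\es}$, which is coarser but entirely sufficient since the lemma only asserts existence of some $n$. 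Your route has one genuine advantage: the step forcing $\rho(C)<\rho(A)$, obtained by combining the disjoint-elitist witness $x\in DR(A')-DR(B)$ at the defeated subargument with the uniform observation that every rule of $B$ outside $R$ lies $<_{SP}$-above $x$, is made fully explicit, whereas the paper compresses the corresponding step into the bare assertion that a defeater of $C\in AP_{j_0}$ exists in $\bigcup_{s=0}^{j_0}AI_s$ ``by definition of $AP_{j_0}$''. One point to tighten: like the paper, you invoke the internal construction in the proof of Lemma \ref{lem:alg1_output_defs_all_outside} rather than its statement (which only says $\ext$ defeats all outside arguments), and this presupposes identifying $\ext$ with the output of Algorithm \ref{alg:gen_stab_ext}, which is exactly the content of Theorem \ref{thm:total_still_has_unique_stable_extension}; you should cite that identification explicitly, or better, extract the stronger form of Lemma \ref{lem:alg1_output_defs_all_outside} (the defender is strict or has all its rules in $\set{r'\in R\:\vline\:r<_{SP}r'}$ for the offending $r\in DR(B)-R$) as a standalone claim before using it.
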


Before we prove Lemma \ref{lem:def^n_contains_stable_ext}, we first establish some notation that will be used in the proof. Recall \textit{interval notation}: for any toset $\ang{T,<}$ and $a,b\in T$, define the subsets:
\begin{align*}
(a,b)&:=\set{x\in T\:\vline\: a < x < b},\:[a,b]:=\set{x\in T\:\vline\:a\leq x\leq b},\\
[a,b)&:=\set{x\in T\:\vline\:a\leq x<b}\text{ and }(a,b]:=\set{x\in T\:\vline\:a< x\leq b}.
\end{align*}
Recall that if $b\leq a$ then $(a,b)=[a,b)=(a,b]=\es$, and if $b<a$ then $[a,b]=\es$. As $D$ is finite let $N:=\abs{\relsymb_d}$. Given the LPDT $T^+$ with defeat graph $DG\pair{T^+}$ and stable extension $\ext$, define the set, for $0\leq k\leq N$,
\begin{align}\label{eq:blocked_defeasible_rules}
\relsymb_d-DR\pair{\ext}=:\set{r_1,\ldots, r_k},
\end{align}
where $k=0$ means $DR\pair{\ext}=\relsymb_d$, and $k=N$ means $DR\pair{\ext}=\es$. WLOG, we arrange the indices for these rules such that $r_{l+1}<_{SP} r_l$, for $1\leq l\leq k-1$. This is the set containing the rules that do not feature in $\ext$.

Given the strict toset $\ang{\relsymb_d,<_{SP}}$, denote $r_{\max}:=\max_{<_{SP}}\relsymb_d$ and similarly for $r_{\min}$. Both $r_{\max}$ and $r_{\min}$ are uniquely defined as $<_{SP}$ is total. To isolate the defeasible rules that do not make up the arguments in $\ext$, we partition $\relsymb_d$ from smallest to largest in $<_{SP}$ as follows:
\begin{align}\label{eq:partition}
[r_{\min},r_k),\set{r_k},\pair{r_k,r_{k-1}},\set{r_{k-1}},\ldots,\set{r_2},\pair{r_2,r_1},\set{r_1},(r_1,r_{\max}].
\end{align}
This places the defeasible rules that do not feature in $\ext$ into their own singleton sets along the chain $<_{SP}$. These singleton sets contain precisely the defeasible rules skipped over by Algorithm \ref{alg:gen_stab_ext} when constructing $\ext$. Note the first and last of these sets may be empty, e.g. when $r_k=r_{\min}$. We name these sets: for $1\leq i\leq k$, $P_i:=\set{r_i}$. Similarly, for $2\leq i\leq k$, $I_i:=\pair{r_i,r_{i-1}}\subseteq\relsymb_d$. We also define $I_1:=(r_1,r_{\max}]\subseteq\relsymb_d$ and $I_{k+1}:=[r_{\min},r_k)\subseteq\relsymb_d$. Equation \ref{eq:partition} can be written as:
\begin{align}\label{eq:partition_named_sets}
I_{k+1},\:P_k,\:I_k,\:P_{k-1},\:\ldots,P_2,\:I_2,\:P_1,\:I_1.
\end{align}
Notice from Equation \ref{eq:partition} that $\bigcup_{j=1}^{k+1}I_j=\relsymb_d-\set{r_k,r_{k-1},\ldots, r_2,r_1}=DR\pair{\ext}$.

We define the following counterpart sets of arguments to those in Equation \ref{eq:partition_named_sets}. For $1\leq i\leq k$, define
\begin{align}\label{eq:APi}
AP_i:=\set{A\in\alg\:\vline\:DR(A)\cap P_i\neq\es}\subseteq\alg.
\end{align}
These are the sets of defeasible arguments that have at least one rule excluded from $\ext$. By Lemma \ref{lem:rules_in_stb_ext} and the definition of the $P_i$ sets, it is easily shown that for all $1\leq i\leq k$, $AP_i\cap\ext=\es$. Further, for $1\leq i\leq k+1$,
\begin{align}\label{eq:AIi_def}
AI_i:=\set{A\in\alg\:\vline\:DR(A)\cap I_i\neq\es\text{ and }DR(A)\subseteq\bigcup_{j=1}^i I_j}.
\end{align}
These are the sets of defeasible arguments where the arguments only have rules from these intervals (as subsets of $DR\pair{\ext}$), with at least one such rule from the $<_{SP}$-lowest ranked interval $I_i$. We also define
\begin{align}\label{eq:strict_arguments_alternative_notation}
AI_0:=\set{A\in\alg\:\vline\:DR(A)=\es}=Args\pair{\es},
\end{align}
which by Equation \ref{eq:def_Args(R)} is the set of all strict arguments. Clearly $AI_0\subseteq\chi\pair{\es}$.


\begin{lem}\label{lem:strong_induction_base_case}
It is the case that $AI_1\subseteq\chi\pair{\es}$.
\end{lem}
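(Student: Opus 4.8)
The plan is to show that every argument in $AI_1$ is undefeated, since $\chi(\es)$ is precisely the set of undefeated arguments: $\es$ defends $A$ iff there is no $B$ with $B\defeat A$, because ``$\es$ defeats $B$'' is always false. The first step is to record two structural facts about $AI_1$. By Equation \ref{eq:AIi_def}, every $A\in AI_1$ satisfies $\es\neq DR(A)\subseteq I_1$; and since $I_1=(r_1,r_{\max}]$ is one of the intervals whose union is $DR(\ext)$, we have $I_1\subseteq DR(\ext)$, so $DR(A)\subseteq DR(\ext)$ and hence $A\in\ext$ by Lemma \ref{lem:rules_in_stb_ext}. More generally $Args(I_1)\subseteq\ext$. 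Because the output $\ext$ of Algorithm \ref{alg:gen_stab_ext} contains no attacks at all---its test on Line \ref{alg_line:cond_start} excludes attacks, not merely defeats, as noted after Lemma \ref{lem:alg1_output_is_cf}---the subset $Args(I_1)$ is itself attack-free. This attack-freeness of the ``top interval'' $Args(I_1)$ is the engine of the argument.

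Next I would argue by contradiction. Suppose $B\defeat A$ for some $A\in AI_1$. By Equation \ref{eq:ASPIC+_general_defeat} the defeat lands on a subargument $A'\subarg A$ whose defeasible top rule $r$ is rebutted by $B$ (so $Conc(B)=\overline{Cons(r)}$), with $B\not\prec_{SP}A'$. Since $A'\subarg A$ we have $DR(A')\subseteq DR(A)\subseteq I_1$, so $r\in I_1$ and $A'\in Args(I_1)$. The proof then splits on whether $B$ lies in $Args(I_1)$.

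If $B\in Args(I_1)$---which subsumes the case that $B$ is strict, as $Args(\es)\subseteq Args(I_1)$---then $B$ and $A'$ both belong to the attack-free set $Args(I_1)$, contradicting $B\attk A'$. If instead $B\notin Args(I_1)$, then $DR(B)\not\subseteq I_1$; since $<_{SP}$ is total and $I_1$ consists exactly of the rules $<_{SP}$-above $r_1$, the set $DR(B)$ must contain a rule $\leq_{SP}r_1$ and in particular is nonempty. Letting $x$ be the $<_{SP}$-least rule of $DR(B)$, we get $x\leq_{SP}r_1<_{SP}y$ for every $y\in DR(A')\subseteq I_1$; thus $x\in DR(B)-DR(A')$ witnesses $DR(B)\ordneq_{SP}DR(A')$, i.e.\ $B\prec_{SP}A'$ by Equations \ref{eq:SP_set_comparison} and \ref{eq:SP_arg_pref}, contradicting $B\not\prec_{SP}A'$. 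Either branch is contradictory, so $A$ is undefeated and $A\in\chi(\es)$; hence $AI_1\subseteq\chi(\es)$.

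I expect the delicate point to be the preference comparison in the second branch: the whole argument turns on the fact that $A'$ uses only rules strictly $<_{SP}$-above $r_1$ whereas $B$, lying outside $Args(I_1)$, is forced to carry a rule at or below $r_1$, so its $<_{SP}$-least rule furnishes the witness required by the disjoint-elitist order $\ordneq_{SP}$. A secondary point worth stating explicitly is that the first branch genuinely relies on $\ext$ being attack-free rather than merely conflict-free, which is precisely the strengthening of Lemma \ref{lem:alg1_output_is_cf} recorded in the accompanying remark.
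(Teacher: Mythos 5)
Your proof is correct, and its engine is the same as the paper's: assume some $A\in AI_1$ is defeated by $B$, and refute the defeat by exhibiting (in the hard case) a rule of $DR(B)$ that is $<_{SP}$-below every rule of the attacked argument, so that the disjoint elitist order forces $B\prec_{SP}A'$, contradicting the defeat. The differences are in the decomposition. The paper splits on $B\in\ext$ versus $B\notin\ext$: the first case is dispatched by conflict-freeness of $\ext$ (since $A\in\ext$ by Lemma \ref{lem:rules_in_stb_ext}), and in the second case Lemma \ref{lem:rules_in_stb_ext} supplies an excluded rule $r_i\in DR(B)\cap P_i$ as the witness. You split instead on $B\in Args\pair{I_1}$ versus $B\notin Args\pair{I_1}$: your first case invokes attack-freeness of the output of Algorithm \ref{alg:gen_stab_ext} (stronger than needed --- conflict-freeness suffices, since $B\attk A'$ together with $B\not\prec_{SP}A'$ already yields $B\defeat A'$ inside $\ext$), and your second case covers \emph{more} defeaters than the paper's, including those in $\ext-Args\pair{I_1}$, by taking the $<_{SP}$-least rule of $DR(B)$ as the witness and using that every rule outside $I_1=(r_1,r_{\max}]$ lies $\leq_{SP}r_1$. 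One point where your write-up is tighter than the paper's: you make the preference comparison at the attacked subargument $A'$, which is what Equation \ref{eq:ASPIC+_general_defeat} demands, whereas the paper concludes $B\prec_{SP}A$ and declares the defeat impossible --- a step that is only licit because its witness $r_i$ works uniformly for every subargument of $A$ (as $DR(A')\subseteq DR(A)\subseteq I_1$). You also omit the paper's preliminary proof that $AI_1$ is conflict-free, which is harmless: membership in $\chi\pair{\es}$ is exactly undefeatedness, so that part is not needed for the stated inclusion.
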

\begin{proof}
The set $AI_1$ is cf (conflict free): assume for contradiction that $A,B\in AI_1$ such that $A\defeat B$. Then as $DR(A),\:DR(B)\subseteq I_1\subseteq DR\pair{\ext}$, by Equations \ref{eq:blocked_defeasible_rules} and \ref{eq:partition} and Lemma \ref{lem:rules_in_stb_ext}, $A,B\in\ext$ -- contradiction, because $\ext$ is cf. Now assume for contradiction that $A\in AI_1$ is defeated, so there is some $B\in\alg$ such that $B\defeat A$. Clearly if $B\in\ext$, then $\ext$ is not cf because $A\in\ext$. Therefore, $B\notin\ext$. By Lemma \ref{lem:rules_in_stb_ext}, $DR(B)\not\subseteq DR\pair{\ext}$ so for some $1\leq i\leq k$, $DR(B)\cap P_i\neq\es$. However, by Equation \ref{eq:partition}, there is a rule in $DR(B)-DR(A)$, namely $r_i$, such that it is $<_{SP}$-smaller than all rules in $DR(A)$ (and hence $<_{SP}$-smaller than all rules in $DR(A)-DR(B)$). Therefore, $B\prec_{SP}A$, and hence $B$ cannot defeat $A$. Therefore, all arguments in $AI_1$ are also undefeated, so $AI_1\subseteq\chi\pair{\es}$.
\end{proof}

For the purposes of the proof of Lemma \ref{lem:def^n_contains_stable_ext}, we define, for $i>k+1$,
\begin{align}\label{eq:AI_set_stabilise}
AI_i=AI_{k+1}.
\end{align}
Note that the sets of arguments in Equation \ref{eq:partition_named_sets} do not partition $\alg$, as an argument can conceivably have rules from two or more of the $I_j$ sets. We now apply these ideas to prove Lemma \ref{lem:def^n_contains_stable_ext}.

\begin{proof}
(Proof of Lemma \ref{lem:def^n_contains_stable_ext}) Given our setup, let $\ext$ be the stable extension of $DG\pair{T^+}$. By Theorem \ref{thm:total_still_has_unique_stable_extension}, $\ext$ is unique and could only have been constructed by Algorithm \ref{alg:gen_stab_ext}. Algorithm \ref{alg:gen_stab_ext} begins with $Args\pair{\es}$, then adds rules in $\relsymb_d$ from $<_{SP}$-largest to $<_{SP}$-smallest as long as the resulting set with the rule contains no arguments attacking each other. From the above notation, it is exactly the rules in the sets $P_i$ that, when included, create arguments that attack each other. This is why these rules in $P_i$ do not feature in $\ext$.

We use strong induction to show that for $i\in\nat^+$, $AI_i\subseteq\chi^i\pair{\es}$. The base case, $i=1$, follows from Lemma \ref{lem:strong_induction_base_case}. For the strong inductive step, assume $AI_j\subseteq\chi^j\pair{\es}$ for all $1\leq j\leq i$. We will show that $AI_{i+1}\subseteq\chi^{i+1}\pair{\es}$. Let $A\in AI_{i+1}$ be arbitrary. This means $DR(A)\subseteq\bigcup_{j=1}^{i+1}I_j$ and $DR(A)\cap I_{i+1}\neq\es$ by Equation \ref{eq:AIi_def}. Either $A$ is defeated by an argument in $\bigcup_{j=1}^i AP_j$ or it is not, where $AP_j$ is defined in Equation \ref{eq:APi}. If $A$ is not defeated by an argument in $\bigcup_{j=1}^i AP_j$, then $A\in\chi\pair{\es}$ as it is undefeated; $A$ cannot be defeated by some argument $B$ in $AP_j$, for $j>i$, because in that case $B\prec_{SP}A$. As $\chi$ is $\subseteq$-monotonic, $A\in\chi^{i+1}\pair{\es}$. Otherwise, if $\bigcup_{j=1}^i AP_j\defeat A$, then there is some $1\leq j\leq i$ such that $AP_j\defeat A$. Call the witness to $j$ $j_0$, so $AP_{j_0}\defeat A$. Say the defeating argument is $C\in AP_{j_0}$. But by definition of $AP_{j_0}$,
\begin{align*}
&\pair{\exists B\in\bigcup_{s=0}^{j_0}AI_s}B\defeat C\text{, $s=0$ is included as $B$ may be strict,}\\
\Leftrightarrow&\pair{\exists 0\leq s\leq j_0}\pair{\exists B\in AI_s}B\defeat C\\
\Rightarrow&\pair{\exists 0\leq s\leq j_0}\pair{\exists B\in \chi^s\pair{\es}}B\defeat C\text{ by our strong inductive hypothesis.}
\end{align*}
This means $A$ is defended by $\chi^s\pair{\es}$, so $A\in \chi^{s+1}\pair{\es}$, for some $0\leq s\leq j_0$. As $s\leq j_0\leq i$, this means $s\leq i$ and hence $s+1\leq i+1$. By $\subseteq$-monotonicity of $\chi$, $A\in\chi^{i+1}\pair{\es}$. This establishes the inductive step.

However, this proof by induction proves this for all $i\in\nat^+$. What happens when $i>k+1$? If $i>k+1$, then by Equation \ref{eq:AI_set_stabilise}, $AI_i=AI_{k+1}\subseteq\chi^{k+1}\pair{\es}\subseteq\chi^{i}\pair{\es}$ and we have no more defeasible rules to add. As the sequence $AI_i$ stabilises the result holds for all $i\in\nat^+$ trivially.

Now, as $\es,\:\chi\pair{\es},\:\chi^2\pair{\es}\ldots$ form an $\subseteq$-increasing sequence in $\pow\pair{\alg}$, we can take the union of the equations $AI_i\subseteq\chi^i\pair{\es}$ and invoke monotonicity of $\chi$:\footnote{Strictly speaking the union should be over all $i\in\nat$ but because the $AI_i$ sequence stabilises we only have to care about $0\leq i\leq k+1$.}
\begin{align}
\bigcup_{i=1}^{k+1}AI_i\subseteq\bigcup_{i=1}^{k+1}\chi^i\pair{\es}=\chi^{k+1}\pair{\es}.\nonumber
\end{align}
We then take the union of both sides with the set of all strict arguments. The left hand side becomes $\ext$. This is because the union of the $AI_j$ sets from $j=0$ to $k+1$ means any argument in that set cannot have any rules in the $P_k$ sets, and therefore $DR(A)\subseteq DR\pair{\ext}$ and hence $A\in\ext$ by Lemma \ref{lem:rules_in_stb_ext}.

As the set of all strict arguments is contained in $\chi\pair{\es}$ because they are undefeated, the right hand side stays the same. Therefore, we obtain $\ext\subseteq\chi^{k+1}\pair{\es}$, where $0\leq k\leq N\in\nat$ is the number of defeasible rules blocked from $\ext$, which is a natural number.\footnote{Notice if $k=0$, there is no conflict, all arguments in $\ext$ are undefeated, so $\ext\subseteq\chi\pair{\es}$.} This shows the result.
\end{proof}

\begin{lem}\label{lem:grounded_contains_def^n_empty}
Let $\ang{A,\to}$ be an abstract argumentation framework and $\chi$ its characteristic function. Let $G\subseteq A$ be the grounded extension. Then $\pair{\forall n\in\nat}\chi^n\pair{\es}\subseteq G$.
\end{lem}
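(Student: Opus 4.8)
The plan is to argue by induction on $n\in\nat$, leaning on two standard properties of the characteristic function $\chi$: that it is $\subseteq$-monotone, and that the grounded extension $G$ is one of its fixed points. First I would record monotonicity. Suppose $S\subseteq S'$ and $S$ defends some $A$; then for every $B$ with $B\to A$ there is a $C\in S\subseteq S'$ with $C\to B$, so $S'$ also defends $A$. Hence $A\in\chi\pair{S}$ implies $A\in\chi\pair{S'}$, i.e. $\chi\pair{S}\subseteq\chi\pair{S'}$. Second, since $G$ is by definition a complete extension (indeed the $\subseteq$-least one), it satisfies $G=\chi\pair{G}$, so $G$ is a fixed point of $\chi$.

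With these two facts in hand the induction is immediate. For the base case $n=0$ we have $\chi^0\pair{\es}=\es\subseteq G$ trivially. For the inductive step, suppose $\chi^n\pair{\es}\subseteq G$. Applying $\subseteq$-monotonicity of $\chi$ to this inclusion yields $\chi^{n+1}\pair{\es}=\chi\pair{\chi^n\pair{\es}}\subseteq\chi\pair{G}$, and since $\chi\pair{G}=G$ we conclude $\chi^{n+1}\pair{\es}\subseteq G$. This completes the induction and establishes $\chi^n\pair{\es}\subseteq G$ for all $n\in\nat$.

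I expect no serious obstacle here: the statement is the classical fact that the Kleene iterates of $\chi$ starting from $\es$ approximate the least fixed point from below, and it is exactly these two properties (monotonicity, and $G$ being a fixed point) that carry the whole argument. The only point requiring a little care is making explicit that $G=\chi\pair{G}$; this is purely definitional, following from the excerpt's definition of the grounded extension as the $\subseteq$-least complete extension together with the definition of a complete extension as a set satisfying $S=\chi\pair{S}$. Everything else reduces to the one-line monotonicity argument above, so the proof is routine once these ingredients are stated.
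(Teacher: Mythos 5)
Your proof is correct and is exactly the paper's argument: the paper's own proof is the one-line remark ``Immediate by induction on $n$: $\chi$ is $\subseteq$-monotonic and $G$ is complete,'' which is precisely the induction you carry out, with monotonicity and the fixed-point identity $G=\chi\pair{G}$ spelled out explicitly.
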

\begin{proof}
Immediate by induction on $n$: $\chi$ is $\subseteq$-monotonic and $G$ is complete.
\end{proof}

\noindent We now instantiate the abstract framework $\ang{A,\attk}$ in Lemma \ref{lem:grounded_contains_def^n_empty} to the defeat graph $\ang{\alg,\defeat}$ of a LPDT.

\begin{cor}\label{cor:grounded_is_stable}
Suppose we have an LPDT $T^+$ with attack graph $AG(T^+):=\ang{\alg,\attk,\precsim_{SP}}$ and defeat graph $DG(T^+):=\ang{\alg,\defeat}$. The characteristic function $\chi$ is as usual. The stable extension $\ext\subseteq\alg$ of $DG(T^+)$ is grounded.
\end{cor}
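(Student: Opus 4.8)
The plan is to prove $\ext = G$, where $G$ is the grounded extension of $DG(T^+)$, by establishing the two inclusions $\ext \subseteq G$ and $G \subseteq \ext$ separately. The two immediately preceding lemmas do almost all of the work, so the proof is essentially a sandwiching argument.

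First I would obtain the inclusion $\ext \subseteq G$. By Lemma \ref{lem:def^n_contains_stable_ext} there is some $n \in \nat^+$ with $\ext \subseteq \chi^n\pair{\es}$. Instantiating the abstract framework $\ang{A,\to}$ of Lemma \ref{lem:grounded_contains_def^n_empty} to the defeat graph $\ang{\alg,\defeat}$ of $T^+$ — exactly as the remark following that lemma prescribes — gives $\chi^n\pair{\es} \subseteq G$ for this same $n$. Chaining the two inclusions yields $\ext \subseteq \chi^n\pair{\es} \subseteq G$, hence $\ext \subseteq G$.

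For the reverse inclusion $G \subseteq \ext$ I would appeal directly to the Dung semantics recalled in Section \ref{sec:rev_ASPIC+}. By definition a stable extension is complete, so $\ext$ is a complete extension of $DG(T^+)$; and by definition the grounded extension $G$ is the $\subseteq$-least complete extension. Minimality of $G$ among complete extensions therefore gives $G \subseteq \ext$. Combining this with $\ext \subseteq G$ yields $\ext = G$, i.e.\ the stable extension $\ext$ coincides with the grounded extension and is thus grounded.

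There is no real obstacle here: the content is entirely carried by Lemmata \ref{lem:def^n_contains_stable_ext} and \ref{lem:grounded_contains_def^n_empty}, and the corollary just glues them together. The only point requiring mild care is making explicit that a stable extension is complete (which holds by the paper's definition of ``stable''), since it is this fact that lets the minimality of the grounded extension supply the second inclusion.
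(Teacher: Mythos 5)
Your proposal is correct and is essentially identical to the paper's own proof: both sandwich via $G \subseteq \ext \subseteq \chi^n\pair{\es} \subseteq G$, using Lemma \ref{lem:def^n_contains_stable_ext} for the upper bound, Lemma \ref{lem:grounded_contains_def^n_empty} instantiated to $DG\pair{T^+}$ for $\chi^n\pair{\es}\subseteq G$, and the fact that stable extensions are complete (so $G\subseteq\ext$ by minimality of the grounded extension). No differences worth noting.
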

\begin{proof}
From Lemma \ref{lem:def^n_contains_stable_ext}, there exists some $n\in\nat$ such that $\ext\subseteq\chi^n\pair{\es}$. From Lemma \ref{lem:grounded_contains_def^n_empty}, we have $\chi^n\pair{\es}\subseteq G$, where $G\subseteq\alg$ is the grounded extension. But by definition, $G\subseteq\ext$ because stable extensions are complete. Therefore, we have, for this $n\in\nat$, $G\subseteq\ext\subseteq\chi^n\pair{\es}\subseteq G$, so $\ext=G$.
\end{proof}

\subsubsection{The Trivialisation and Rationality Theorems}

The trivialisation theorem states that if the underlying default priority is total, then all of Dung's argumentation semantics are equivalent.

\begin{thm}\label{thm:trivialisation_theorem}
(The Trivialisation Theorem) The defeat graph $\ang{\alg,\defeat}$ of an LPDT $T^+$ has a unique complete extension that is grounded, preferred and stable.
\end{thm}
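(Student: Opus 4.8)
The plan is to leverage the three facts already established: that $DG\pair{T^+}$ has a unique stable extension $\ext$ (Theorem \ref{thm:total_still_has_unique_stable_extension}), that this $\ext$ coincides with the grounded extension $G$ (Corollary \ref{cor:grounded_is_stable}), and the standard Dung-theoretic fact that the grounded extension is the $\subseteq$-least complete extension. Given these, the bulk of the work is to show that \emph{every} complete extension collapses onto $\ext$; once uniqueness of the complete extension is secured, the ``grounded, preferred and stable'' qualifications follow almost immediately.

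First I would fix an arbitrary complete extension $F\subseteq\alg$ of $DG\pair{T^+}$. Since $G$ is the $\subseteq$-least complete extension and $\ext=G$ by Corollary \ref{cor:grounded_is_stable}, we have $\ext\subseteq F$. The key step is the reverse inclusion. Suppose for contradiction that $F\neq\ext$, so there is some argument $A\in F-\ext$. Because $\ext$ is stable (Theorem \ref{thm:total_still_has_unique_stable_extension}), it defeats every argument outside itself, so there is some $B\in\ext$ with $B\defeat A$. But $\ext\subseteq F$ gives $B\in F$, so $F$ contains the defeat $B\defeat A$, contradicting that $F$ is conflict-free. Hence $F=\ext$, and the complete extension is unique.

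Finally, I would read off the three semantic labels from this uniqueness. Being the unique complete extension, $\ext$ is vacuously the $\subseteq$-maximal complete extension, hence preferred, and the $\subseteq$-least complete extension, hence grounded (as already noted, $\ext=G$); and it is stable by Theorem \ref{thm:total_still_has_unique_stable_extension}. The only point requiring any care is the collapsing step, and even that is short: it rests entirely on the interplay between stability, which forces $\ext$ to attack everything outside it, and conflict-freeness of $F$, which forbids $F$ from containing both an argument and a defeater of it. I do not anticipate a genuine obstacle here, since the substantive difficulties --- establishing uniqueness of the stable extension and its coincidence with the grounded extension --- have already been discharged in the preceding results.
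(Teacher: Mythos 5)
Your proposal is correct and follows essentially the same route as the paper's own proof: both start from $\ext = G \subseteq F$ for any complete extension $F$ (via Corollary \ref{cor:grounded_is_stable} and leastness of the grounded extension), then use stability of $\ext$ together with conflict-freeness of $F$ to rule out any argument in $F - \ext$, and finally read off the three semantic labels from uniqueness. No gaps; the argument matches the paper's step for step.
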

\begin{proof}
Let $C$ be any complete extension of $\ang{\alg,\defeat}$, which means $C$ is cf and $\chi\pair{C}=C$. Let $G$ be the grounded extension, then $G\subseteq C$ by definition. As the (unique) stable extension $\ext$ is grounded by Corollary \ref{cor:grounded_is_stable}, we have $G=\ext$, therefore $\ext\subseteq C$. This means either $\ext=C$ or $\ext\subset C$. In the latter case, there will be some $B\notin\ext$ such that $B\in C$, but as $\ext\subset C$ is stable, we must have some $A\in\ext$ (so $A\in C$) such that $A\defeat B$. Therefore, $C$ is not cf -- contradiction. Therefore, $C=\ext$. As $\ext$ is unique, $C$ is unique. Therefore, $\ang{\alg,\defeat}$ has a unique complete extension that is grounded, stable and hence preferred.
\end{proof}

We now prove that this instantiation of ASPIC$^+$ to PDL satisfies the requirements for normative rationality \cite{Caminada:07}. Recall that when instantiated to FOL, $Cl_{\relsymb_s}$ becomes deductive closure.

\begin{thm}\label{thm:PDL_is_rational_direct}
(The Rationality Theorem) Let $\ang{\alg,\:\attk,\:\precsim_{SP}}$ be the ASPIC$^+$ attack graph of PDL and let $\ext$ be any of the complete extensions of the corresponding defeat graph $\ang{\alg,\defeat}$. Our instantiation satisfies the Caminada-Amgoud rationality postulates.
\end{thm}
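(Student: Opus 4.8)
The plan is to exploit the two main theorems already established and reduce the postulates to facts about a single extension. By the Trivialisation Theorem (Theorem \ref{thm:trivialisation_theorem}), the defeat graph $\ang{\alg,\defeat}$ has a \emph{unique} complete extension, which coincides with the stable (equivalently grounded and preferred) extension $\ext$. Hence the phrase ``any of the complete extensions'' in the statement refers to this single $\ext$, and it suffices to verify the three Caminada--Amgoud postulates for it. Two facts about $\ext$ will do all the work: from the construction in the proof of the Representation Theorem (Theorem \ref{thm:rep_thm}), $\ext=Args\pair{R}$ where $R:=f\pair{NBD(E)}\subseteq\relsymb_d$ and $E$ is the unique extension of $T^+$; and $Conc\pair{\ext}=E$.

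For subargument closure (postulate 1) I would simply note that $\ext=Args\pair{R}$ is subargument closed by Lemma \ref{lem:Args(R)_props}, so nothing further is required.

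For closure under strict rules (postulate 2), recall that $Cl_{\relsymb_s}$ instantiates to deductive closure and that $Args\pair{R}$ is closed under strict extensions (again Lemma \ref{lem:Args(R)_props}). The inclusion $Conc\pair{\ext}\subseteq Cl_{\relsymb_s}\sqbra{Conc\pair{\ext}}$ is immediate from extensivity of the closure operator. For the converse I would argue directly: if $\theta_1,\ldots,\theta_m\in Conc\pair{\ext}$ and $(\theta_1,\ldots,\theta_m\to\phi)\in\relsymb_s$, pick arguments $A_1,\ldots,A_m\in\ext$ with these conclusions, append the strict rule to obtain $B:=[A_1,\ldots,A_m\to\phi]$, and observe that $DR(B)=\bigcup_{i}DR(A_i)\subseteq R$, so $B\in Args\pair{R}=\ext$ by Equation \ref{eq:def_Args(R)} and hence $\phi\in Conc\pair{\ext}$. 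This yields $Conc\pair{\ext}=Cl_{\relsymb_s}\sqbra{Conc\pair{\ext}}$. Finally, consistency (postulate 3) follows from the Representation Theorem: $Conc\pair{\ext}=E$, and $E$ is consistent because $W$ is assumed consistent (Section \ref{sec:rev_PDL}); alternatively one could invoke Lemma \ref{lem:acf_S_consistent} after checking that $\ext$ carries no attacks, but routing through $E$ is cleaner.

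I do not expect a genuine obstacle: the theorem is essentially a corollary of the Trivialisation and Representation theorems. The only care required is the bookkeeping in postulate 2 --- verifying that appending a strict rule introduces no new defeasible rules and therefore keeps us inside $Args\pair{R}$ --- and remembering that consistency should be inherited from the underlying PDL extension rather than re-proved from scratch.
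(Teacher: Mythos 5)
Your proposal is correct and takes essentially the same approach as the paper: reduce to the unique stable extension $\ext$ via Theorem \ref{thm:trivialisation_theorem}, get subargument closure from $\ext$ having the form $Args\pair{R}$, and get consistency from $Conc\pair{\ext}=E$ together with the consistency of $W$. The only cosmetic difference is postulate 2, where the paper simply cites the deductive closedness of $E$ via the Representation Theorem while you additionally verify closure by hand, appending strict rules inside $Args\pair{R}$ --- both are valid.
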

\begin{proof}
By Theorem \ref{thm:total_still_has_unique_stable_extension}, $\ang{\alg,\defeat}$ has a unique stable extension $\ext$, which is a complete and an admissible extension. It is sufficient to prove the postulates for $\ext$ because by Theorem \ref{thm:trivialisation_theorem}, $\ang{\alg,\defeat}$ only has $\ext$ as its sole complete extension.
\begin{enumerate}
\item To show that $\ext$ is subargument closed, recall that Algorithm \ref{alg:gen_stab_ext} gives an explicit construction of $\ext$, which is of the form $Args(R)$ for some $R\subseteq\relsymb_d$ which is subargument closed (Equation \ref{eq:def_Args(R)}).
\item The representation theorem states that $Conc\pair{\ext}=E$ and as $E$ is deductively closed, $Conc\pair{\ext}$ is closed under strict rules.
\item As $W$ is consistent and $Conc\pair{\ext}$ is the extension, $Conc\pair{\ext}$ must also be consistent and its deductive closure is consistent.
\end{enumerate}
This shows the result.
\end{proof}

\noindent The rationality theorem establishes that this instantiation of ASPIC$^+$ to PDL satisfies all of the Caminada-Amgoud rationality postulates and is a normatively rational instantiation of ASPIC$^+$.

Finally, the consistency of $\ext$ on the side of PDL allows us to establish a stronger notion of cf for $\ext$ on the side of argumentation. This is already implicit in Algorithm \ref{alg:gen_stab_ext} Line \ref{alg_line:cond_start}.

\begin{cor}\label{cor:stb_ext_acf}
Let $\ang{D,W,<^+}$ be an LPDT with attack graph $\ang{\alg,\attk,\precsim_{SP}}$ and corresponding defeat graph $\ang{\alg,\defeat}$ that has a unique stable extension $\ext$. We have $\ext^2\cap\attk=\es$, i.e. no two arguments in $\ext$ attack each other.
\end{cor}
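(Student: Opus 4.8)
The plan is to argue by contradiction, converting an attack \emph{inside} $\ext$ into an inconsistency of $Conc\pair{\ext}$, and then invoking the consistency of $Conc\pair{\ext}$ that has already been established. Concretely, suppose for contradiction that there exist $A,B\in\ext$ with $A\attk B$. Because our instantiation admits neither undermining nor undercutting attacks (Section \ref{sec:instantiation_choice_of_variables}), the only available attack is a rebut, so by Equation \ref{eq:attack} there is a subargument $B'\subarg B$ of the form $B'=\sqbra{B''\Rightarrow\overline{Conc(A)}}$, whence $Conc(B')=\overline{Conc(A)}$.

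The next step is to promote $B'$ into the extension. Since $\ext$ is a complete extension (it is the unique stable extension, which is complete), the Rationality Theorem (Theorem \ref{thm:PDL_is_rational_direct}, postulate 1) tells us that $\ext$ is subargument closed; together with $B'\subarg B$ and $B\in\ext$ this gives $B'\in\ext$. Hence both $Conc(A)$ and $Conc(B')=\overline{Conc(A)}$ lie in $Conc\pair{\ext}$. By the choice of contrary function in this instantiation, $\overline{\theta}$ is exactly the classical negation of $\theta$ (reading off $\neg\theta$, or $\phi$ when $\theta$ has the form $\neg\phi$), so $Conc\pair{\ext}$ contains a formula together with its negation and is therefore inconsistent in FOL.

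This contradicts the consistency of $Conc\pair{\ext}$: by the Representation Theorem (Theorem \ref{thm:rep_thm}) we have $Conc\pair{\ext}=E$, which is consistent because $W$ is, and this is recorded as postulate 3 in the Rationality Theorem (Theorem \ref{thm:PDL_is_rational_direct}). The contradiction forces $\ext^2\cap\attk=\es$. Note that this corollary is precisely the converse of Lemma \ref{lem:acf_S_consistent} (which passes from the absence of attacks to consistency of the conclusion set), and the only point requiring care is the middle step: one must use subargument closure of $\ext$ to bring the rebutted subargument $B'$ --- rather than $B$ itself --- into $\ext$, since it is $B'$, and not in general $B$, whose conclusion is the contrary of $Conc(A)$.
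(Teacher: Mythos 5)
Your proof is correct and follows essentially the same route as the paper's own argument: assume an attack inside $\ext$, use Equation \ref{eq:attack} together with subargument closure (Rationality Theorem, postulate 1) to place a formula and its contrary in $Conc\pair{\ext}$, and derive a contradiction with the consistency postulate. The paper compresses the promotion of the rebutted subargument $B'$ into a ``WLOG'' step, whereas you spell it out explicitly, but the underlying reasoning is identical.
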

\begin{proof}
Given the hypotheses, assume for contradiction that $A\attk B$ for $A,B\in\ext$. WLOG we can assume that $Conc(A)=\theta$ and $Conc(B)=\neg\theta$ with $TopRule(B)\in\relsymb_d$, by Equation \ref{eq:attack} and that $\ext$ is subargument closed. Therefore, $Conc\pair{\ext}$ is inconsistent, because $\theta,\neg\theta\in Conc\pair{\ext}$. This violates the Rationality Theorem -- contradiction. Therefore, no two arguments in $\ext$ attack each other.
\end{proof}

\subsubsection{Inconsistent Arguments}\label{sec:inconsistent_args}

We have stated in Section \ref{sec:rev_ASPIC+} that arguments are constrcted freely from the premises and rules. In this instantiation, it is possible to construct arguments that are inconsistent, either in their intermediate conclusions or their conclusion.

\begin{eg}
Consider the rules $\pair{\Rightarrow a}$ and $\pair{\Rightarrow\neg a}$ and arguments $A=\sqbra{\Rightarrow a}$, $B=\sqbra{\Rightarrow\neg a}$ and $C=\sqbra{A,B\to\bot}$. Then $C$ is inconsistent in its conclusion. Further, for any $c\in\LSent$ the argument $A^+:=\sqbra{A\to\pair{a\vee c}}$, so given that our strict rules are the rules of proof in FOL, we can construct the argument $D=\sqbra{A^+,B\to c}$ for any $c$. The intermediate conclusions of $D$ are inconsistent.
\end{eg}

We call an argument \textit{inconsistent} iff $Conc\pair{Sub(A)}\subseteq\LForm$ is an inconsistent set in FOL. It is possible to construct such arguments in $\alg$. We can ignore these arguments by focussing on $\alg-\set{A\in\alg\:\vline\:A\text{ is inconsistent}}$ and restricting $\attk$ in the usual way, but this seems inelegant especally when the process of argumentation is meant to resolve inconsistencies. If we do include inconsistent arguments, the very least is that they should not be justified. By Theorems \ref{thm:total_still_has_unique_stable_extension} and \ref{thm:trivialisation_theorem}, there is only one way of justifying arguments: an argument $A$ is \textit{justified} iff $A\in\ext$. Lemma \ref{lem:acf_S_consistent} and Corollary \ref{cor:stb_ext_acf} ensure that if $A$ is inconsistent, then $A\notin\ext$.

In summary, although it is possible to have inconsistent arguments in $\alg$, they can never be justified and we do not need to be concerned with them.

\subsection{Summary}

In this section, we have provided an argumentative characterisation of PDL inference that is sound, complete and normatively rational, in the case where our default priority is a strict total order. We can construct ASPIC$^+$ arguments and attacks (Section \ref{sec:instantiation_choice_of_variables}). The structure-preference relation, $\precsim_{SP}$, takes into account both the default priority $<$ and the logical structure of arguments. This is motivated by how PDL adds defaults when constructing extensions (Section \ref{sec:pref}). The representation theorem states that under $\precsim_{SP}$, the PDL extension and the conclusion set of the stable extension correspond exactly (Section \ref{sec:rep_thm}, Theorem \ref{thm:rep_thm}). We can prove directly that the stable extension of interest satisfies the Caminada-Amgoud rationality postulates (Section \ref{sec:normative_rationality_of_PDL_inst}, Theorem \ref{thm:PDL_is_rational_direct}). As this is the only complete extension of our defeat graphs, our instantiation satisfies the postulates. Finally, we do not need to explicitly prevent the construction of inconsistent arguments, because they are never justified (Section \ref{sec:inconsistent_args}).

\section{On Lifting the Assumption of a Total Order Default Priority}\label{sec:lift_total_assumption}



In Section \ref{sec:ASPIC+_to_PDL}, we have provided an argumentative characterisation of PDL inference where the default priority $<$ is a strict total order. It seems that we have lost generality but this is not the case because calculating an extension in PDL always presupposes a linearisation $<^+$ of $<$ (Section \ref{sec:rev_PDL}), and Theorem \ref{thm:rep_thm} shows that for \emph{any} such linearisation the correspondence of inferences between PDL and its argumentation semantics holds.

But argumentation can also define argument preference relations based on an underlying \textit{partial} order. We now investigate how to lift the assumption that $<$ is total for the LPDT $T^+$, such that the resulting multiple stable extensions each correspond to an extension of the underlying PDT $T$. Our underlying representation of PDL in ASPIC$^+$ is the same as in Section \ref{sec:instantiation_choice_of_variables}, but now $<_D$ is a strict \textit{partial} order.

\subsection{The Argument Preference Relation based on Partial Order Default Priorities}\label{sec:partial_order_pref_part2}


In Section \ref{sec:pref}, we devised the structure-preference (SP) argument preference relation $\precsim_{SP}$ which captures the PDL idea of adding the ``$<^+$-greatest active'' default (Equation \ref{eq:ext_ind}). If we translate a PDT $T$ directly into an argument graph $AG\pair{T}$ without first linearising $<$, the generalised version of $\precsim_{SP}$ should take into account the incomparabilities of rules while still respecting their logical structure. We formalise this idea by defining a string representation of the rules that will be used in an algorithm to calculate $<_D\:\mapsto\:<_{SP}$ for partial $<_D$. We will give a variation of the Penguin Triangle (Example \ref{eg:penguin_triangle}) as a running example.

\subsubsection{A Representation of Rules and their Ordering using Strings}\label{sec:string_representation}


Let $rulenames$ be a set of characters, with as many characters as there are rules in $\relsymb_d$. Let $g:\relsymb_d\to rulenames$ be a bijection such that each $r\in\relsymb_d$ has a single-character name\footnote{By ``name'' we do not mean the naming function $n:\relsymb_d\to\lang$ in Section \ref{sec:rev_ASPIC+} (page \pageref{sec:rev_ASPIC+}), which is still undefined ($n\equiv *$) in this case, but just what we label the rules with, e.g. the defeasible rule $r_7=(a\Rightarrow b)$ has the label or name $r_7$.} $g(r)$. Let $\star$ denote the Kleene star and $*$ denote string concatenation,\footnote{This is abuse of notation as we had earlier stated $*$ refers to undefined quantities (Section \ref{sec:notation}). But there are few undefined quantities and the meaning of $*$ will be clear from context.} and $len:rulenames^\star\to\nat$ returns the number of letters of the string. We will also assume that in each string $\sigma\in rulenames^\star$ there is an index assocated with each letter starting from $0$ and ending in $len\pair{\sigma}-1$. To iterate over the letters $l$ of the string $\sigma$ we will write $l\in\sigma$, which starts from the letter at index 0 and terminates at the letter at index $len\pair{\sigma}-1$. The \textit{empty string} is $\infl:=\text{``''}$ with $len\pair{\infl}=0$. We may put quotation marks around strings to emphasise that they are strings.

For $R\subseteq\relsymb_d$ such that $R:=\set{r_1,\ldots,r_k}$, we can form the string $g(r_1)*g(r_2)*\cdots*g(r_k)$, written $g(r_1)g(r_2)\cdots g(r_k)$. Notice that forming a string from a set imposes an order on the elements.

\begin{eg}\label{eg:ad_nauseam_encoding_strings}
(Example \ref{eg:not_disj_eli_WLP} continued) Suppose we have $r_i:=f\pair{d_i}$ for $f$ as in Equation \ref{eq:bij_defaults_def_rules}. Suppose $g\pair{r_1}=\text{``}a\text{''},\:g\pair{r_2}=\text{``}b\text{''},\:g\pair{r_3}=\text{``}c\text{''},\:g\pair{r_4}=\text{``}d\text{''}\text{ and }g\pair{r_5}=\text{``}e\text{''}$. Then for the set $S=\set{r_1,r_2,r_5}$ we can form the strings (e.g.) $\text{``}abe\text{''}$ or $\text{``}bea\text{''}$, depending on which order we choose the rules to be in.
\end{eg}

\noindent As $g$ is a bijection we can define the reverse process. Suppose we have a string $\sigma$. We define the set of rules that are encoded by the letters of $\sigma$ as follows:
\begin{align}\label{eq:string_to_set}
\bigcup_{l\in\sigma}\set{g^{-1}\pair{l}}.
\end{align}
\noindent Notice that we lose the information about the index, but we will see that it does not matter. Notice also that if $\sigma=\infl$ then we have the empty union so the set of rules encoded by $\infl$ is $\es$.

\begin{eg}
(Example \ref{eg:ad_nauseam_encoding_strings} continued) Suppose we want to find the set of the string ``$ace$''. Applying Equation \ref{eq:string_to_set}, we get the set $\set{r_1,r_3,r_5}$.
\end{eg}

Lastly, we can transform strings into total orders: for $\sigma=\sigma_1\sigma_2\ldots\sigma_n$, where for $1\leq i\leq n$ the letter $\sigma_i$ has index $i-1$.\footnote{Here, $\sigma_i$ denotes one letter; the subscript $i$ is not a separate letter to $\sigma$ itself.} We can transform  $\sigma$ to the set
\begin{align}\label{eq:string_to_total_order}
\set{\pair{\sigma_1,\sigma_2},\pair{\sigma_1,\sigma_3},\ldots,\pair{\sigma_1,\sigma_n},\pair{\sigma_2,\sigma_3}\ldots,\pair{\sigma_{n-1},\sigma_n}}
\end{align}
using two nested loops ranging over the letters of $\sigma$ such that the pair of letters $(\sigma_i,\sigma_j)$ is added to the set iff $i<j$. Intuitively, given a selected letter $\sigma_i$ of a string, letters to the right of $\sigma_i$ are larger than $\sigma_i$, and letters to the left of $\sigma_i$ are smaller than $\sigma_i$.

\subsubsection{Algorithm and Example Calculation}\label{sec:alg_defined_for_generalised_SP}

We want to generalise the mapping $<_D\:\mapsto\:<_{SP}$, defined in Section \ref{sec:pref} for the case where $<_D$ is total, to arbitrary partial orders $<_D$. Furthermore, we want to simultaneously capture all possible linearisations of $<_D$. We now present the algorithm that calculates the generalised mapping $<_D\:\mapsto\:<_{SP}$ in two parts.

The first stage of the algorithm is a non-recursive depth first search algorithm that returns the set of all strings representing the rules chosen in accordance with both the preference and the structure as described in Section \ref{sec:string_representation}. This is articulated in Algorithm \ref{alg:part1}, which defines the function $\textsc{StructurePreference1}$. This function takes $\ang{\relsymb_d,<_D}$ as input and returns this set of strings.

The second stage of the algorithm is to turn the output of Algorithm \ref{alg:part1} into $<_{SP}$. This is done by translating each string in the output of Algorithm \ref{alg:part1} into a strict total order on $\relsymb_d$, and then taking their intersection. This is articulated in Algorithm \ref{alg:part2}, which defines a function $\textsc{StructurePreference2}$, which takes as input a set of strings, and calculates $<_{SP}$.

\begin{algorithm}[h]
\begin{algorithmic}[1]
\Function{StructurePreference1}{$\ang{\relsymb_d,\:<_D}$}
  \State $S_0\gets\set{\infl}$\label{alg_line:base}
  \State $N\gets\abs{\relsymb_d}$\label{alg_line:def_N}
  \For{$i=0$ to $N$}\label{alg_line:iterate_0_to_N}
    \State $S_{i+1}\gets\es$
    \For{$\lambda\in S_i$}
      \State $T^\lambda\gets\bigcup_{l\in\lambda}\set{g^{-1}\pair{l}}$\label{alg_line:previous_iteration}
      \State $R^\lambda\gets\max_{<_D}\sqbra{\set{s\in\relsymb_d\:\vline\:Ante(s)\subseteq Conc\pair{Args\pair{T^\lambda}}}-T^\lambda}$\label{alg_line:input_here}
      \If{$R^\lambda=\es$}
        \Return $S_i$
      \EndIf
      \For{$t\in g\pair{R^\lambda}$}\label{alg_line:loop_over_choices}
        \State $S_{i+1}\gets S_{i+1}\cup\set{t*\lambda}$\label{alg_line:concatenate}
      \EndFor
    \EndFor
  \EndFor
\EndFunction
\end{algorithmic}
\caption{Calculating $<_{SP}$ from $<_D$ on $\relsymb_d$, Part 1 -- generate a set of strings, each string is an order of the choice of rules from least preferred (the first letter, on the left) to the most preferred (the last letter, on the right), which essentially corrects $<_D$ for the argument structure and then linearises. Recall that in Line \ref{alg_line:concatenate}, $*$ refers to string concatenation. As $\relsymb_d$ is finite, Algorithm \ref{alg:part1} terminates. Throughout, $S_i$ is a set of strings, while $R^\lambda,\:T^\lambda\subseteq\relsymb_d$.}
\label{alg:part1}
\end{algorithm}

The intuition of Algorithm \ref{alg:part1} is as follows: we initialise the algorithm (Line \ref{alg_line:base}) and iterate $N+1$ times (Line \ref{alg_line:iterate_0_to_N}), where $N=\abs{\relsymb_d}$ (Line \ref{alg_line:def_N}). At each iteration Algorithm \ref{alg:part1} chooses all of the most preferred applicable rules at that stage. Each choice may render more rules active for the next iterations. It iterates over all such possibilities and repeats this process until the $N^\text{th}$ iteration, where there are no more rules to be chosen and the algorithm terminates. The result is a set of strings, which are read from right to left, where the right-most letter is the first choice of most preferred applicable defeasible rule, and the left-most letter is the last choice, which usually corresponds to a blocked default.

\begin{algorithm}[ht]
\begin{algorithmic}[1]
\Function{StructurePreference2}{$S$}
  \State $orders\gets\es$
  \For{$\lambda\in S$}
    \State $<_{SP}^+\gets\es$\label{alg_line:start_forming_chain}
    \For{$r\in\lambda$}
      \For{$s\in\lambda$}
        \If{$index(r)<index(s)$}
          \State $<_{SP}^+\gets<_{SP}^+\cup\set{(r,s)}$\label{alg_line:end_forming_chain}
        \EndIf
      \EndFor
    \EndFor
    \State $orders\gets orders\cup\set{<_{SP}^+}$
  \EndFor
  \Return $\bigcap orders$
\EndFunction
\end{algorithmic}
\caption{Calculating $<_{SP}$ from $<_D$ on $\relsymb_d$, Part 2 -- from the set of strings generated from Algorithm \ref{alg:part1}, we turn each string into a strict total order on $\relsymb_d$, and then take their intersection to return $<_{SP}$.}
\label{alg:part2}
\end{algorithm}

The intuition of Algorithm \ref{alg:part2} is that upon input of this set of strings, the algorithm turns each string into a strict total order over $\relsymb_d$ as described in the end of Section \ref{sec:string_representation} (Equation \ref{eq:string_to_total_order}), and then takes the intersection of all such orders to return $<_{SP}$. The intersection returns the ``core'' strict partial order which is the ``smallest'' change to the original $<_D$ that is compatible with all argument structures. Given $\ang{\relsymb_d,<_D}$, we define:
\begin{align}\label{eq:order_mapping_new_SP}
<_{SP}:=&\textsc{StructurePreference2(StructurePreference1$\sqbra{\ang{\relsymb_d,<_D}}$)}\nonumber\\
:=& F\pair{<_D}.
\end{align}

\noindent This is our method for calculating $\ang{\relsymb_d,<_{SP}}$ from $\ang{\relsymb_d,<_D}$ where $<_D$ is partial.

\begin{eg}\label{eg:penguin_triangle}
(Modified Penguin Triangle) Let $W=\es$,
\begin{align}
D=\set{d_1:=\frac{:a}{a},d_2:=\frac{a:b}{b},d_3:=\frac{:\neg b}{\neg b}}\nonumber
\end{align}
and consider the default priority $<=\set{\pair{d_3,d_2}}$.\footnote{Notice this is not the ``usual'' partial order priority that respects the specificity principle.} There are three possible linearisations of $<$ giving two possible extensions:
\begin{align*}
E_1:=&Th\pair{\set{a,b}}\text{ from }d_3<^+d_2<^+d_1\text{ and }d_3<^+d_1<^+d_2\text{, and}\\
E_2:=&Th\pair{\set{a,\neg b}}\text{ from }d_1<^+ d_3<^+ d_2.
\end{align*}
Let $r_i:=f\pair{d_i}$ for $i=1,2,3$ (Equation \ref{eq:bij_defaults_def_rules}). We illustrate these arguments in Figure \ref{figure:penguin}.


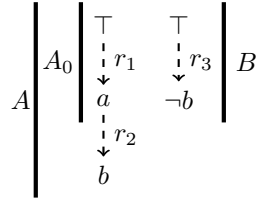
\begin{figure}[h]
\begin{center}
\begin{tikzpicture}[scale = 1]
\node (T1) at (0,0) {$ \top $};
\node (a) at (0,-1) {$ a $};
\draw [thick, dashed, ->] (T1) -- (a);
\node (r1) at (0.3,-0.5) {$r_1$};
\node (b) at (0,-2) {$ b $};
\draw [thick, dashed, ->] (a) -- (b);
\node (r2) at (0.3,-1.5) {$r_2$};
\draw [ultra thick] (-0.3,0.3)--(-0.3,-1.3);
\node (A0) at (-0.6,-0.5) {$A_0$};
\draw [ultra thick] (-0.9,0.3)--(-0.9,-2.3);
\node (A) at (-1.1,-1) {$A$};
\node (T2) at (1,0) {$ \top $};
\node (nb) at (1,-1) {$ \neg b $};
\draw [thick, dashed, ->] (T2) -- (nb);
\node (r3) at (1.3,-0.5) {$r_3$};
\draw [ultra thick] (1.6,0.3)--(1.6,-1.3);
\node (A) at (1.9,-0.5) {$B$};
\end{tikzpicture}
\caption{The arguments of Example \ref{eg:penguin_triangle}.}
\label{figure:penguin}
\end{center}
\end{figure}

Clearly $A$ and $B$ attack each other at their conclusions. Putting $<_D=\set{\pair{r_3,r_2}}$ into Equation \ref{eq:order_mapping_new_SP}, we get the following:
\begin{itemize}
\item For Algorithm \ref{alg:part1}, we have $S_0=\set{\infl}$, $N=3$ and $i=0,1,2,3$.

When $i=0$, $S_1=\es$, $\lambda=\infl$, $T^\lambda=\es$ and $R^\lambda=\set{r_1,r_3}\neq\es$. Therefore, $S_1=\set{\text{``}r_1\text{''},\text{``}r_3\text{''}}$. Notice $r_3$ is a $<_D$-maximal applicable rule, because even though $r_3<_D r_2$, $r_2$ is not applicable until $r_1$ is applied.

When $i=1$, $S_2=\es$, and either $\lambda=\text{``}r_1\text{''}$ or $\lambda=\text{``}r_3\text{''}$. The former gives $S_2=\set{\text{``}r_2 r_1\text{''}}$ and the latter gives $S_2=\set{\text{``}r_2 r_1\text{''},\text{``}r_1 r_3\text{''}}$.

When $i=2$, $S_3=\es$, $\lambda=\text{``}r_2r_1\text{''}$ or $\lambda=\text{``}r_1r_3\text{''}$. In the former case, $S_3=\set{\text{``}r_3r_2r_1\text{''}}$, and in the latter case, $S_3=\set{\text{``}r_3r_2r_1\text{''},\text{``}r_2r_1r_3\text{''}}$.

When $i=3$, we get $T^\lambda=\relsymb_d$ so $R^\lambda=\es$, halting the algorithm with output $S_3$.

\item Given $S_3$ as input to Algorithm \ref{alg:part2}, we get the intersection of the words ``$r_3r_2r_1$'' and ``$r_2r_1r_3$'' when converted to chains, giving $<_{SP}=\set{\pair{r_2,r_1}}$.
\end{itemize}
Therefore, Equation \ref{eq:order_mapping_new_SP}, returns $<_{SP}=F\pair{<_D}=\set{\pair{r_2,r_1}}$. If $<_D$ is arbitrary,\footnote{There are 19 partial orders on a set of three labelled elements.} we can repeat the above calculation and obtain the values of Equation \ref{eq:order_mapping_new_SP},\footnote{\label{fn:abbreviate_order} We abbreviate the total order $r_3<_D r_2<_D r_1$ as 321, $<_D:=\set{\pair{r_1,r_2},\pair{r_3,r_2}}$ as (12,32), and $<_D=\set{\pair{r_1,r_3}}$ as 13... etc. and the same applies to $<_{SP}$.} which are shown in Table \ref{table:values_of_SP_for_penguin}.

\begin{table}[h]
\begin{center}
\begin{tabular}{|c||c|}\hline
Values of input $<_D$ & Output $<_{SP}=F\pair{<_D}$\\\hline
(12,32), 32, 12, $\es$, 21 & 21 \\ \hline
123, 132, (13,23), (12,13), 13, 213 & 213 \\ \hline
31, (21,31) & (21,31) \\\hline
23, (21,23) & (21,23) \\\hline
312, (31,32), 321 & 321 \\\hline
231 & 231 \\\hline
\end{tabular}
\caption{The values of $<_{SP}$ given all possible $<_D$ of Example \ref{eg:penguin_triangle}.}
\label{table:values_of_SP_for_penguin}
\end{center}
\end{table}
\end{eg}

\vspace{-1cm}

\subsubsection{Properties of the Generalised SP Order}\label{sec:generalised_SP_properties}

We prove some properties of $F$ (Equation \ref{eq:order_mapping_new_SP} and Algorithms \ref{alg:part1} and \ref{alg:part2}) that will be useful in proving the representation theorem for the case where $<_D$ is a partial order (Theorem \ref{thm:rep_thm_nonlinear}). It can easily be shown that $F$ is a well-defined function from $PO\pair{\relsymb_d}$ to itself, where $PO\pair{\relsymb_d}$ is the set of all strict partial orders on $\relsymb_d$ (Section \ref{sec:notation}). We show that the function $F:PO\pair{\relsymb_d}\to PO\pair{\relsymb_d}$ indeed generalises the definitions in Section \ref{sec:pref}. Recall that $TO(X)$ is the set of all strict total orders on the set $X$.

\begin{lem}\label{lem:total_input_total_output}
If $<_D\in TO\pair{\relsymb_d}$ then we recover $<_{SP}$ defined in Section \ref{sec:pref}.
\end{lem}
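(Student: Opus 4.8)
The plan is to show that when $<_D$ is total, both Algorithms \ref{alg:part1} and \ref{alg:part2} collapse into the deterministic construction of Equations \ref{eq:SP_ord_def} and \ref{eq:SP_order}. The crucial observation is that a finite non-empty totally ordered set has a unique maximum, so at each iteration of Algorithm \ref{alg:part1} the set $R^\lambda$ computed in Line \ref{alg_line:input_here} is a singleton. Consequently every $S_i$ contains exactly one string, the inner loop over $g\pair{R^\lambda}$ (Line \ref{alg_line:loop_over_choices}) runs exactly once, and Algorithm \ref{alg:part1} returns a single string. This use of uniqueness of the $<_D$-maximum is precisely where totality enters.

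First I would prove by induction on $i$ that the unique string in $S_i$ is $g\pair{a_i}g\pair{a_{i-1}}\cdots g\pair{a_1}$, where $a_1,\ldots,a_i$ are the rules defined by Equation \ref{eq:SP_ord_def}. The base case is $S_0=\set{\infl}$, matching the empty product. For the inductive step, if the unique $\lambda\in S_i$ is $g\pair{a_i}\cdots g\pair{a_1}$, then $T^\lambda=\bigcup_{l\in\lambda}\set{g^{-1}\pair{l}}=\set{a_1,\ldots,a_i}=\bigcup_{k=1}^{i}\set{a_k}$ (Line \ref{alg_line:previous_iteration}), since $g$ is a bijection and the string records exactly these rules. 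Hence the set maximised in Line \ref{alg_line:input_here},
\[
\set{s\in\relsymb_d\:\vline\:Ante(s)\subseteq Conc\sqbra{Args\pair{T^\lambda}}}-T^\lambda,
\]
is literally the set appearing in Equation \ref{eq:SP_ord_def} for index $i+1$, whose unique $<_D$-greatest element is $a_{i+1}$. Therefore $R^\lambda=\set{a_{i+1}}$, and Line \ref{alg_line:concatenate} prepends $g\pair{a_{i+1}}$ to $\lambda$, giving the single string $g\pair{a_{i+1}}g\pair{a_i}\cdots g\pair{a_1}\in S_{i+1}$. The loop halts at $i=N:=\abs{\relsymb_d}$ because then $T^\lambda=\relsymb_d$ forces $R^\lambda=\es$, so Algorithm \ref{alg:part1} outputs the single string $\sigma:=g\pair{a_N}g\pair{a_{N-1}}\cdots g\pair{a_1}$.

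Then I would feed this singleton set into Algorithm \ref{alg:part2}. Since the input contains exactly one string $\sigma$, the returned intersection is just the strict total order obtained from $\sigma$ via Equation \ref{eq:string_to_total_order}. In $\sigma$ the letter $g\pair{a_j}$ sits to the right of $g\pair{a_i}$ precisely when $j<i$, and by the right-is-larger convention of Equation \ref{eq:string_to_total_order} this yields $a_i<_{SP}a_j\Leftrightarrow j<i$, which is exactly Equation \ref{eq:SP_order}. Thus $F\pair{<_D}$ coincides with the $<_{SP}$ of Section \ref{sec:pref}.

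The main obstacle is bookkeeping rather than conceptual: carefully matching the set maximised in Line \ref{alg_line:input_here} with the set in Equation \ref{eq:SP_ord_def} through the bijection $g$, and reconciling the two ordering conventions — the strings are read right-to-left (rightmost letter most preferred) while Equation \ref{eq:SP_order} is written as a dual order — so that the direction of $<_{SP}$ comes out correctly. Once the singleton-at-each-step fact is established from totality, the remainder is a routine verification that the one surviving string reproduces the single-chain construction of Section \ref{sec:pref}.
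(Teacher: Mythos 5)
Your proof is correct and follows essentially the same route as the paper's: totality makes $R^\lambda$ at Line \ref{alg_line:input_here} a singleton at every iteration, so Algorithm \ref{alg:part1} outputs the single string $g\pair{a_N}g\pair{a_{N-1}}\cdots g\pair{a_1}$ built from the rules of Equation \ref{eq:SP_ord_def}, which Algorithm \ref{alg:part2} then converts into exactly the order of Equation \ref{eq:SP_order}. Your explicit induction identifying the set maximised in Line \ref{alg_line:input_here} with the set in Equation \ref{eq:SP_ord_def} merely spells out details the paper's proof leaves implicit.
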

\begin{proof}
If $<_D$ is total, then $R^\lambda$ on Algorithm \ref{alg:part1} Line \ref{alg_line:input_here} is singleton. This means Line \ref{alg_line:loop_over_choices} has only one choice in $R^\lambda$, so $S_i$ for all $0\leq i\leq N$ is singleton. Using the notation of Equation \ref{eq:SP_ord_def}, Algorithm \ref{alg:part1} returns $\set{\text{``}a_Na_{N-1}\ldots a_2a_1\text{''}}$. This gets transformed into Equation \ref{eq:SP_order} through Algorithm \ref{alg:part2}, which is $<_{SP}$ for the case of $<_D$ total.
\end{proof}

\begin{eg}\label{eg:penguin_triangle4}
(Example \ref{eg:penguin_triangle} continued) By restricting $F$ to $TO\pair{\relsymb_d}$, we obtain the following subtable of Table \ref{table:values_of_SP_for_penguin}, which indeed generalises the calculation for the case where the input $<_D$ is total.

\begin{table}[h]
\begin{center}
\begin{tabular}{|c||c|}\hline
Values of input $<_D$ & Output $<_{SP}=F\pair{<_D}$\\\hline
123, 132, 213 & 213 \\ \hline
312, 321 & 321 \\\hline
231 & 231 \\\hline
\end{tabular}
\caption{The values of $<_{SP}$ given the linear $<_D$ of Example \ref{eg:penguin_triangle4}.}
\label{table:values_of_SP_for_penguin_total}
\end{center}
\end{table}

\end{eg}


The next result shows that given the input $<_D$ in Algorithm \ref{alg:part1}, each string in the output set of Algorithm \ref{alg:part1}, when transformed into its corresponding total order on $\relsymb_d$, is the output of Equation \ref{eq:order_mapping_new_SP} for some linearisation $<^+_D$ of $<_D$.

\begin{thm}\label{thm:each_string_is_linearisation}
Consider Algorithm \ref{alg:part1} with input $<_D\in PO\pair{\relsymb_d}$. For each string $\sigma$ in the output set of Algorithm \ref{alg:part1}, let $<_\text{out}$ denote $\sigma$ transformed into a strict total order on $\relsymb_d$ (Algorithm, \ref{alg:part2}). For each such $<_\text{out}$ there exists a linearisation $<_D^+$ of $<_D$ such that $<_\text{out}=F\pair{<^+_D}$.
\end{thm}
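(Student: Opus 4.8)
The plan is to reduce the statement to the total-order case already settled by Lemma \ref{lem:total_input_total_output}. Fix a string $\sigma$ in the output of Algorithm \ref{alg:part1} on input $<_D$, and let $c_1,c_2,\ldots$ be the rules chosen along the branch that produced $\sigma$, indexed in the order they were appended (so $c_1$ is the right-most letter, chosen first as a most-preferred applicable rule, and each subsequent $c_j$ is prepended on the left). Transforming $\sigma$ into the strict total order $<_\text{out}$ via Equation \ref{eq:string_to_total_order} makes $c_1$ the $<_\text{out}$-greatest element. Since running Algorithm \ref{alg:part1} on a \emph{total} order returns a single greedy string whose associated total order is precisely $F$ of that order (Lemma \ref{lem:total_input_total_output}), and since distinct full-length strings yield distinct total orders, it suffices to produce a linearisation $<_D^+$ of $<_D$ whose greedy run reproduces the choice sequence $c_1,c_2,\ldots$; then $F\pair{<_D^+}=<_\text{out}$, as required.

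First I would record, for each step $j$, the set $U_j$ of rules applicable-but-not-yet-chosen, namely $U_j:=\set{s\:\vline\:Ante(s)\subseteq Conc\pair{Args\pair{\set{c_1,\ldots,c_{j-1}}}}}-\set{c_1,\ldots,c_{j-1}}$. The crucial structural fact is that $U_j$ depends only on the \emph{set} $\set{c_1,\ldots,c_{j-1}}$ and not on the order in which those rules were chosen, because applicability is defined through $Conc\pair{Args\pair{\cdot}}$; moreover applicability is monotone, so every rule applicable at step $j$ is chosen eventually. By the algorithm, $c_j\in\max_{<_D}U_j$. For a linear order $<_D^+$ to make the greedy run pick $c_j$ at step $j$ it is necessary and sufficient that $c_j$ be $<_D^+$-greatest in $U_j$, i.e. that every pair in $P:=\set{\pair{s,c_j}\:\vline\:1\le j,\ s\in U_j-\set{c_j}}$ hold in $<_D^+$ (reading $\pair{s,c_j}$ as $s<_D^+c_j$). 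I would then take $<_D^+$ to be any linear extension of $<_D\cup P$ and verify by induction on $j$, using the order-independence of $U_j$, that greedy on $<_D^+$ yields $c_1,c_2,\ldots$ and hence the string $\sigma$.

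The main obstacle is the remaining step: showing $<_D\cup P$ is acyclic, so that a common linear extension exists. No single constraint in $P$ opposes $<_D$, since $s\in U_j-\set{c_j}$ and $c_j\in\max_{<_D}U_j$ give $\neg\pair{c_j<_D s}$, so adding $s<_D^+c_j$ is locally consistent; the difficulty is entirely global, because the availability constraints can force a rule to be chosen \emph{before} a $<_D$-greater rule that is not yet applicable, and one must exclude such reorderings chaining into a cycle together with $<_D$. My approach here would be to assume a minimal cycle, collapse maximal runs of same-type edges using transitivity of $<_D$ and of the choice order $<_\text{out}$ (which contains the transitive closure of $P$), and then examine the vertex of extremal choice-index on the cycle: an incoming/outgoing edge analysis, together with the maximality guarantee $\neg\pair{c_j<_D s}$ and the monotonicity of applicability, should contradict either the $<_D$-maximality of a chosen rule or the index inequalities forced by the $P$-edges. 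This interplay between the priority order $<_D$ and the availability-induced precedences is the delicate heart of the argument, and is exactly where I would concentrate the effort.
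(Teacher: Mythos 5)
Your reduction is faithful, and it is in essence a rigorous formalisation of the paper's own proof: the paper likewise interprets each branching choice as ``rank the chosen rule above the alternative choices'' and then simply asserts that doing this ``in all cases'' assembles into a linearisation of $<_D$. The step the paper leaves implicit is exactly the one you isolate --- joint consistency of $<_D$ with the accumulated constraints $P$ --- and that step is not merely delicate: it is false, so the gap cannot be closed. Take the paper's own Example \ref{eg:penguin_triangle}, with $r_1=(\Rightarrow a)$, $r_2=(a\Rightarrow b)$, $r_3=(\Rightarrow\neg b)$, $W=\es$, and input $<_D=\set{\pair{r_1,r_2}}$ (``12'' in the notation of Footnote \ref{fn:abbreviate_order}). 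At the first step $U_1=\set{r_1,r_3}$ and \emph{both} rules are $<_D$-maximal there, because $r_2$ is not yet applicable; so Algorithm \ref{alg:part1} may choose $c_1=r_1$. Then $U_2=\set{r_2,r_3}$, again with both rules maximal, so it may choose $c_2=r_3$, and finally $c_3=r_2$. Hence the string ``$r_2r_3r_1$'' (order 231) is in the output set. Your constraint set for this string is $P=\set{\pair{r_3,r_1},\pair{r_2,r_3}}$, and $<_D\cup P$ contains the cycle $r_1<r_2<r_3<r_1$. Concretely, no linearisation reproduces the string: by Table \ref{table:values_of_SP_for_penguin_total} the three linearisations 123, 132, 312 of 12 map under $F$ to 213, 213 and 321 only, never to 231. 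So for this output string there is no $<_D^+$ with $F\pair{<_D^+}=<_\text{out}$, and Theorem \ref{thm:each_string_is_linearisation} is false as stated.

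It is worth seeing which ingredient of your planned cycle analysis gives way. A $<_D$-edge $\pair{c_i,c_j}$ with $i<j$ (here $r_1<_D r_2$) only tells you that $c_j$ was \emph{not yet applicable} at step $i$; a $P$-edge only tells you that its source \emph{was} applicable at the target's step. Around a cycle these constraints are mutually satisfiable: $r_2$ is unavailable when $r_1$ is chosen, becomes available afterwards, and is then deferred behind the $<_D$-incomparable $r_3$, which was itself passed over at step 1. No $<_D$-maximality is violated anywhere, which is precisely how the algorithm's freedom to postpone a high-priority-but-late-arriving rule escapes every linearisation; your hoped-for contradiction at the extremal-index vertex never materialises. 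Note that only this direction of the correspondence breaks: the converse --- every $F\pair{<_D^+}$ for a linearisation $<_D^+$ of $<_D$ occurs as an output string, since a $<_D^+$-greatest applicable rule is always $<_D$-maximal --- is sound, and that is the direction Theorem \ref{thm:linearisation_square} needs. But Corollary \ref{cor:intersection_of_lin_of_input}, whose proof cites Theorem \ref{thm:each_string_is_linearisation}, loses its justification: without the theorem one only gets $F\pair{<_D}\subseteq\bigcap_{<_D^+\supseteq<_D}F\pair{<_D^+}$ for free. In short: your proof is incomplete exactly where you said it was, but the fault is not yours to repair --- making the paper's hidden assumption explicit reveals that the statement itself needs to be weakened or the algorithm's branching restricted.
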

\begin{proof}
If $<_D$ is itself total, then the output of Algorithm \ref{alg:part1} is singleton, which when converted to a chain by Algorithm \ref{alg:part2} gives $<_\text{out}=<_{SP}$. Therefore, there exists a linearisation of $<_D$, namely itself, such that $<_\text{out}=F\pair{<_D}$.

If $<_D$ is not total, then incomparable rules will cause $R^\lambda$ (Algorithm \ref{alg:part1} Line \ref{alg_line:input_here}) to not be singleton. Each element of $R^\lambda$ will form a distinct element of the output set of Algorithm \ref{alg:part1}. Choosing a given rule $r$ in $R^\lambda$ to append to the string can also be interpreted as a resolution of this incomparability of $<_D$ through a linearisation $<_D^+$ of $<_D$ that ranks $r$ higher than the alternative choices. Reasoning in this way in all cases whenever $R^\lambda$ is not singleton, we obtain a linearisation $<_D^+$ of $<_D$ such that $F\pair{<_D^+}$ corresponds to one of the elements in the output $S_N$ of Algorithm \ref{alg:part1}.
\end{proof}

\noindent It follows that Equation \ref{eq:order_mapping_new_SP} incorporates all possible linearisations of $<_D$ in the following manner.

\begin{cor}\label{cor:intersection_of_lin_of_input}
The output of Equation \ref{eq:order_mapping_new_SP} is equal to
\begin{align}
<_{SP}=F\pair{<_D}=\bigcap_{<^+_D\supseteq<_D\text{ total}}F\pair{<_D^+},\nonumber
\end{align}
where the intersection ranges over all linearisations of $<_D$.
\end{cor}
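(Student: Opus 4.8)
The plan is to unfold the definition of $F$ from Equation \ref{eq:order_mapping_new_SP} and then argue that the family of total orders intersected inside \textsc{StructurePreference2} (Algorithm \ref{alg:part2}) coincides, as a \emph{set} of orders, with the family $\set{F\pair{<_D^+}\:\vline\:<_D^+\supseteq<_D\text{ total}}$ on the right-hand side; once the two families are shown equal, their intersections are trivially equal and the corollary follows. By Equation \ref{eq:order_mapping_new_SP} and Algorithm \ref{alg:part2}, I would write $F\pair{<_D}=\bigcap_{\sigma\in S_N}<_\text{out}^\sigma$, where $S_N$ is the output set of Algorithm \ref{alg:part1} and $<_\text{out}^\sigma$ denotes the strict total order obtained from the string $\sigma$ via Equation \ref{eq:string_to_total_order}. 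The whole proof then reduces to comparing the two families of orders.

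One inclusion is immediate from Theorem \ref{thm:each_string_is_linearisation}: each $<_\text{out}^\sigma$ equals $F\pair{<_D^+}$ for some linearisation $<_D^+$ of $<_D$, so $\set{<_\text{out}^\sigma\:\vline\:\sigma\in S_N}\subseteq\set{F\pair{<_D^+}\:\vline\:<_D^+\supseteq<_D}$. Since intersecting over a larger family of relations yields a smaller relation, I obtain $\bigcap_{<_D^+\supseteq<_D}F\pair{<_D^+}\subseteq\bigcap_{\sigma\in S_N}<_\text{out}^\sigma=F\pair{<_D}$, which is one of the two desired inclusions.

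The converse inclusion is the substantive part: I would show that every linearisation-image $F\pair{<_D^+}$ already occurs among the strings in $S_N$. The key observation is that if $<_D^+\supseteq<_D$ is total and $r$ is the $<_D^+$-greatest element of a set $U\subseteq\relsymb_d$, then $r$ is $<_D$-maximal in $U$; otherwise some $s\in U$ would satisfy $r<_D s$, hence $r<_D^+ s$, contradicting maximality. I would then induct on the iteration index of Algorithm \ref{alg:part1}. Running the algorithm on the total order $<_D^+$ makes the set $R^\lambda$ on Line \ref{alg_line:input_here} singleton (Lemma \ref{lem:total_input_total_output}), selecting at each stage the unique $<_D^+$-greatest active rule; by the observation, this rule lies in $\max_{<_D}$ of the same active set, i.e.\ in the set $R^\lambda$ computed during the run on $<_D$. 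Because the active set $\set{s\in\relsymb_d\:\vline\:Ante(s)\subseteq Conc\pair{Args\pair{T^\lambda}}}$ depends only on the \emph{set} $T^\lambda$ of rules already chosen and not on the order in which they were chosen, following the matching branch in the $<_D$-run reproduces, letter for letter, the single string produced by the $<_D^+$-run. Hence $F\pair{<_D^+}$ equals one of the orders $<_\text{out}^\sigma$, giving $\set{F\pair{<_D^+}\:\vline\:<_D^+\supseteq<_D}\subseteq\set{<_\text{out}^\sigma\:\vline\:\sigma\in S_N}$ and therefore $F\pair{<_D}=\bigcap_{\sigma}<_\text{out}^\sigma\subseteq\bigcap_{<_D^+}F\pair{<_D^+}$.

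Combining the two inclusions shows the two families of total orders are identical, so their intersections coincide and the corollary is proved. I expect the main obstacle to be exactly the converse inclusion, and within it the bookkeeping that the branch traced by a fixed linearisation remains available through all $N$ iterations of Algorithm \ref{alg:part1}; this rests entirely on the maximality observation together with the order-independence of the active set, and once both are in place the inductive matching of strings is routine.
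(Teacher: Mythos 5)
Your proof is correct and takes essentially the same route as the paper, whose own proof simply declares the corollary ``immediate from the definition of Algorithm \ref{alg:part2} and Theorem \ref{thm:each_string_is_linearisation}'' -- i.e.\ exactly your reduction to comparing the family of string-induced total orders with the family $\set{F\pair{<_D^+}\:\vline\:<_D^+\supseteq<_D\text{ total}}$. The only difference is rigour: you explicitly supply the converse inclusion (that every $F\pair{<_D^+}$ occurs among the output strings, via the observation that the $<_D^+$-greatest active rule is $<_D$-maximal and that the active set depends only on the set $T^\lambda$ of rules already chosen), which the paper's one-line proof leaves implicit and only really argues later, inside the proof of the linearisation square (Theorem \ref{thm:linearisation_square}).
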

\begin{proof}
This is immediate from the definition of Algorithm \ref{alg:part2} and Theorem \ref{thm:each_string_is_linearisation}.
\end{proof}

 
\begin{eg}\label{eg:penguin_triangle5}
(Example \ref{eg:penguin_triangle4} continued) Consider $<_D=23$, which abbreviates $<_D=\set{\pair{r_2,r_3}}$. 23 has linearisations abbreviated as 123, 213 and 231 (see Footnote \ref{fn:abbreviate_order}, page \pageref{fn:abbreviate_order}). By Table \ref{table:values_of_SP_for_penguin_total}, these input linearisations returns, respectively, 213, 213 and 231. By Corollary \ref{cor:intersection_of_lin_of_input}, we get $<_{SP}$ to be the intersection of the sets representing the total orders 213 and 231. This gives $(21,23)$, which abbreviates $<_{SP}=\set{\pair{r_2,r_1},\pair{r_2,r_3}}$. This is consistent with Table \ref{table:values_of_SP_for_penguin_total}.
\end{eg}

\noindent We now relate the linearisations of the inputs and outputs of Table \ref{table:values_of_SP_for_penguin}.

\begin{thm}\label{thm:linearisation_square}
(The linearisation square) Let $<_D\:\in\: PO\pair{\relsymb_d}$ and $<_D^+$ be a linearisation of $<_D$. $F\pair{<_D^+}=:<^+_{SP}$ is a linearisation of $F\pair{<_D}=:<_{SP}$.
\end{thm}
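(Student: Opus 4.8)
The plan is to verify the two defining properties of a linearisation separately: that $<_{SP}^+ := F\pair{<_D^+}$ is a strict total order on $\relsymb_d$, and that it extends $<_{SP} := F\pair{<_D}$. Both follow almost immediately from results already in hand, so the work is really just in assembling them correctly.

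First I would establish totality. Since $<_D^+$ is by hypothesis a strict total order on $\relsymb_d$, Lemma \ref{lem:total_input_total_output} tells us that $F\pair{<_D^+}$ coincides with the structure preference order $<_{SP}$ constructed in Section \ref{sec:pref} for a total input, which was shown (in the discussion following Equation \ref{eq:SP_order}) to be a strict total order on $\relsymb_d$. Hence $<_{SP}^+ = F\pair{<_D^+}\in TO\pair{\relsymb_d}$, and the totality requirement is dispatched.

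Next I would show the extension $F\pair{<_D}\subseteq F\pair{<_D^+}$ by invoking Corollary \ref{cor:intersection_of_lin_of_input}, which expresses $F\pair{<_D}$ as the intersection $\bigcap F\pair{<_D'}$ taken over all linearisations $<_D'$ of $<_D$. Because the given $<_D^+$ is itself one such linearisation, it appears among the terms of this intersection, and any single term of an intersection contains the intersection; therefore $F\pair{<_D} = \bigcap_{<_D'} F\pair{<_D'} \subseteq F\pair{<_D^+}$. Combining the two observations, $<_{SP}^+$ is a strict total order extending $<_{SP}$, which is precisely the claim that $<_{SP}^+$ is a linearisation of $<_{SP}$.

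I do not expect a genuine obstacle: the substantive content has effectively been front-loaded into Corollary \ref{cor:intersection_of_lin_of_input}, so the only steps are reading the intersection formula correctly and recalling that $F$ is well-defined as a map $PO\pair{\relsymb_d}\to PO\pair{\relsymb_d}$ (noted before Lemma \ref{lem:total_input_total_output}), so that $F\pair{<_D}$ is a bona fide strict partial order and the word ``extension'' makes sense. If anything needs more care, it is simply stating explicitly the elementary fact that a term of an intersection dominates the intersection, which justifies the inclusion step; everything else is a direct appeal to the two cited results.
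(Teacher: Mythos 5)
Your first step (totality) is fine and coincides with the paper's: Lemma \ref{lem:total_input_total_output} gives that $F\pair{<_D^+}$ is the strict total order of Section \ref{sec:pref}. The gap is in the second step. The inclusion you need, $F\pair{<_D}\subseteq F\pair{<_D^+}$, is exactly the ``$\subseteq$'' half of the equality asserted by Corollary \ref{cor:intersection_of_lin_of_input}, and that half \emph{is} Theorem \ref{thm:linearisation_square} (modulo the totality part you have already handled). What the paper's stated justification of the Corollary actually yields --- ``immediate from the definition of Algorithm \ref{alg:part2} and Theorem \ref{thm:each_string_is_linearisation}'' --- is only the reverse inclusion: Theorem \ref{thm:each_string_is_linearisation} assigns to \emph{each output string} of Algorithm \ref{alg:part1} on input $<_D$ some linearisation of $<_D$ whose $F$-image is the total order obtained from that string, so the intersection over \emph{all} linearisations is contained in the intersection over the output strings, i.e. $\bigcap_{<^+_D\supseteq<_D\text{ total}}F\pair{<_D^+}\subseteq F\pair{<_D}$. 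It does not say that every linearisation's $F$-image arises from an output string, which is what your step needs. So your proof is circular in substance: the fact you extract from the Corollary is the very fact the theorem asserts, and in the paper's actual logical development that fact is established inside the proof of Theorem \ref{thm:linearisation_square}, with Corollary \ref{cor:intersection_of_lin_of_input} properly a consequence of it rather than a premise for it.

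The missing idea, which is the entire content of the paper's proof, is a branch argument about Algorithm \ref{alg:part1}. Run the algorithm on $<_D^+$: at every stage the chosen rule is the $<_D^+$-greatest applicable one, and since $<_D\:\subseteq\:<_D^+$, any $<_D^+$-greatest element of a set is also $<_D$-maximal in it; hence at every stage that rule lies in the set $R^\lambda$ computed on Line \ref{alg_line:input_here} of the run on $<_D$. Consequently the single string output by the run on $<_D^+$ is traced out as one branch of the run on $<_D$ and appears in its output set. Algorithm \ref{alg:part2} then intersects over all strings in that output set, so $F\pair{<_D}$ is contained in the total order determined (via Equation \ref{eq:string_to_total_order}) by this particular string, which is $F\pair{<_D^+}=<_{SP}^+$. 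Adding this argument (or using it to first prove the ``$\subseteq$'' half of Corollary \ref{cor:intersection_of_lin_of_input} honestly) closes the gap.
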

\begin{proof}
Let $<_{SP}$ be as given. Let $<_D^+$ be a linearisation of $<_D$. Suppose $<_D^+$ is the input of Algorithm \ref{alg:part1}. This will give an output set consisting of a single string that Algorithm \ref{alg:part2} translates into some strict linear order $<_{SP}^+$ (say), by Theorem \ref{lem:total_input_total_output}. Upon input $<_D$ to Algorithm \ref{alg:part1}, the string corresponding to $<_{SP}^+$ will appear in the output set of Algorithm \ref{alg:part1}, because we can choose the rules in $R^\lambda$ (Line \ref{alg_line:input_here}) in accordance with the ranking of $<_D^+\:\supseteq\:<_D$. By Algorithm \ref{alg:part2}, $<_{SP}^+\:\supseteq F\pair{<_D}=<_{SP}$.
\end{proof}

\noindent The linearisation square can be expressed in the following commutative diagram:

\[
\begin{diagram}
\node{<_D} \arrow{e,b,J}{\text{linearisation}} \arrow{s,l,T}{\text{Equation \ref{eq:order_mapping_new_SP}}}
\node{<^+_D} \arrow{s,r,T}{\text{Equation \ref{eq:order_mapping_new_SP}}}\\
\node{F\pair{<_D}} \arrow{e,b,J}{\text{linearisation}}
\node{F\pair{<_D^+}}
\end{diagram}
\]

\noindent The linearisation square states that the function $F:PO\pair{\relsymb_d}\to PO\pair{\relsymb_d}$ preserves linearisations.

\begin{eg}\label{eg:penguin_triangle6}
(Example \ref{eg:penguin_triangle5} continued) Consider Table \ref{table:values_of_SP_for_penguin} again. Let $<_D=31$. Let $<_D^+=321$. We know that $<_{SP}=(21,31)$. We also know that $<_{SP}^+=321$. Clearly, 321 is a linearisation of $(21,31)$.
\end{eg}




\subsubsection{The Generalised Argument Preference Relation for \texorpdfstring{$<_D$}{<D} Partial}\label{sec:generalised_SP_order_on_args}

The following result states that changing the partial order PDL default priority $<$ to respect the logical dependencies of defaults while following the preference does not change the PDL extension. This generalises Lemma \ref{lem:LPDT_SP_order_keeps_extension_same} (page \pageref{lem:LPDT_SP_order_keeps_extension_same}) to the case where $<_D$ is not necessarily total.

\begin{lem}\label{lem:PDT_SP_order_keeps_extension_same}
Let $T:=\ang{D,W,<}$ and $T':=\ang{D,W,<'}$ be two PDTs such that $<\:\cong\:<_D$ and $F\pair{<_D}\:\cong\:<'$, then both PDTs have the same extensions
\end{lem}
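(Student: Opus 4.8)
The plan is to compare the two theories at the level of their \emph{families} of prioritised default extensions. Since the bijection $f$ (Equation \ref{eq:bij_defaults_def_rules}) identifies $<$ with $<_D$ and $<'$ with $<_{SP}=F\pair{<_D}$, the extensions of $T$ are exactly the sets generated by the linearisations of $<_D$, and those of $T'$ are exactly the sets generated by the linearisations of $<_{SP}$ (Section \ref{sec:rev_PDL}). Writing $E_\tau$ for the unique PDE of the LPDT with strict total priority $\tau$, it therefore suffices to prove that $\set{E_\tau\:\vline\:\tau\text{ linearises }<_D}=\set{E_\sigma\:\vline\:\sigma\text{ linearises }<_{SP}}$. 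Throughout I would lean on the total-order case, Lemma \ref{lem:LPDT_SP_order_keeps_extension_same}, which gives $E_\tau=E_{F\pair{\tau}}$ for every strict total order $\tau$ on $\relsymb_d$.

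For the inclusion from $T$ into $T'$, let $<_D^+$ be a linearisation of $<_D$ generating $E=E_{<_D^+}$. By the linearisation square (Theorem \ref{thm:linearisation_square}), $F\pair{<_D^+}$ is a linearisation of $F\pair{<_D}=<_{SP}$, hence corresponds via $f$ to a linearisation of $<'$; and by Lemma \ref{lem:LPDT_SP_order_keeps_extension_same}, $E_{<_D^+}=E_{F\pair{<_D^+}}$. Thus $E$ is the PDE of $T'$ generated by that linearisation, so every extension of $T$ is an extension of $T'$. This direction is clean and uses only the two named results.

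The reverse inclusion is where the real work lies, and it is the step I expect to be the main obstacle. The naive route -- take a linearisation $\sigma$ of $<_{SP}$, apply $F$, and use Theorem \ref{thm:each_string_is_linearisation} to write $F\pair{\sigma}=F\pair{<_D^+}$ for some linearisation $<_D^+$ of $<_D$ -- does not go through directly, because the set $\set{F\pair{<_D^+}\:\vline\:<_D^+\text{ linearises }<_D}$ of output orders of Algorithm \ref{alg:part1} is in general a \emph{proper} subset of the linearisations of $<_{SP}$. Indeed, Algorithm \ref{alg:part2} forms $<_{SP}$ as the \emph{intersection} of these output orders (Corollary \ref{cor:intersection_of_lin_of_input}), and the intersection of a family of total orders typically admits linearisations lying outside the family. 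Consequently a linearisation $\sigma$ of $<_{SP}$, and even $F\pair{\sigma}$, need not be one of the output orders, so one cannot match \emph{orders}; one must match \emph{extensions}.

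To close the reverse inclusion I would therefore argue at the level of the extension-building process (Equation \ref{eq:ext_ind}). Given $\sigma$ generating $E'$, the goal is a linearisation $<_D^+$ of $<_D$ with $E_{<_D^+}=E'$, obtained by reproducing the firing (generating and semi-active) defaults of $E'$ and, for every conflict among them, the same winner. The key point to establish is that any conflict resolution realisable by a linearisation of $<_{SP}$ is also realisable by a linearisation of $<_D$: when $\sigma$ ranks a structurally-late default above a $<_D$-greater one, this only reflects a difference in the \emph{timing} at which antecedents become derivable, and $F$ was designed precisely so that $<_{SP}$ records exactly those reorderings that some linearisation of $<_D$ achieves through such timing. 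Concretely, I would show that the additional linearisations of $<_{SP}$ differ from genuine output orders $F\pair{<_D^+}$ only by permuting pairs of defaults whose relative order does not affect which defaults fire (mutually consistent defaults, or a default already forced inactive at the relevant layer), so that they generate no new extension. Verifying this invariance -- that all conflict-relevant orderings are already fixed inside the intersection $<_{SP}$, while the surplus freedom only reshuffles conflict-irrelevant pairs -- is the technical heart of the argument, and it generalises the $NBD$/generating-default bookkeeping (Equation \ref{eq:lem_NBD}) already used in the total case of Lemma \ref{lem:LPDT_SP_order_keeps_extension_same}.
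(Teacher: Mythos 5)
Your first inclusion ($Ext\pair{T}\subseteq Ext\pair{T'}$) coincides exactly with the paper's proof: linearise $<_D$, apply the linearisation square (Theorem \ref{thm:linearisation_square}), and transfer the extension with Lemma \ref{lem:LPDT_SP_order_keeps_extension_same}. For the reverse inclusion, your criticism of the ``naive route'' is the most substantive part of your attempt, because that route is precisely what the paper does: its $(\Leftarrow)$ step takes an arbitrary linearisation $<_{SP}^+$ of $<_{SP}$ and cites Theorem \ref{thm:each_string_is_linearisation} to obtain a linearisation $<_D^+$ of $<_D$ with $F\pair{<_D^+}=<_{SP}^+$. As you observe, that theorem only covers the orders arising from strings in the output set of Algorithm \ref{alg:part1}, and since Corollary \ref{cor:intersection_of_lin_of_input} defines $<_{SP}$ as the \emph{intersection} of those orders, $<_{SP}$ can have linearisations that are not output orders. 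The paper's own running example witnesses this: for $<_D=\set{\pair{r_3,r_2}}$ (Example \ref{eg:penguin_triangle}) the output orders are $321$ and $213$, giving $<_{SP}=\set{\pair{r_2,r_1}}$; the total order $231$ linearises $<_{SP}$ but lies outside the image $\set{321,213}$ of $F$ on the linearisations of $<_D$ (Table \ref{table:values_of_SP_for_penguin}). So the existential step in the paper's proof fails for $231$, and the lemma survives there only because $231$ happens to generate $Th\pair{\set{a,\neg b}}$, the same extension as the output order $213$ --- i.e.\ one must, as you put it, match extensions rather than orders.

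However, your proposal does not close this gap; it only names it. The assertion that the surplus linearisations of $<_{SP}$ differ from genuine output orders only in conflict-irrelevant pairs, and hence generate no new extensions, is exactly the missing lemma, and you offer no argument for it beyond the intention that $F$ ``was designed'' to make it true. Until that invariance claim (equivalently: every extension of $T'$ is generated by at least one linearisation of $<'$ lying in the image of $F$ on the linearisations of $<_D$) is actually proved, the reverse inclusion remains open in your write-up, just as it is unsupported in the paper. A concrete way to finish would be to adapt the construction of Lemma \ref{lem:each_stb_ext_has_lin_gen_it}: from the sequence of defaults fired by the given linearisation of $<'$ via Equation \ref{eq:ext_ind}, build an explicit linearisation of $<_D$ that fires the same defaults in a $<_D$-compatible order, and verify layer by layer that it yields the same extension. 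As it stands, your attempt matches the paper on one direction, makes a sound and checkable objection to the paper's proof of the other direction, but replaces the paper's unjustified step with an unproven claim rather than a proof.
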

\begin{proof}
Denote $Ext(T)$ and $Ext\pair{T'}$ to be the sets of extensions of the respective PDTs. We show that $Ext(T)=Ext\pair{T'}$.

$(\Rightarrow)$ Let $E\in Ext\pair{T}$ be arbitrary. This means $E$ is the unique extension of some LPDT $T^+:=\ang{D,W,<^+}$, where $<^+\:\supseteq\:<$ is a strict total order. Therefore, $<^+\:\cong\:<^+_D\:\supseteq\:<_D\:\cong\:<$, where $<^+_D$ is a linearisation of $<_D$. By the linearisation square (Theorem \ref{thm:linearisation_square}), $F\pair{<_D^+}=:<_{SP}^+$ is a linearisation of $F\pair{<_D}=:<_{SP}$. As $<'\:\cong\:<_{SP}$, then $<_{SP}^+\:\cong\:<^{'+}$, where $<^{'+}$ is some linearisation of $<'$. By Lemma \ref{lem:LPDT_SP_order_keeps_extension_same}, $E$ is also the unique extension of the LPDT $\ang{D,W,<^{'+}}$, which means $E$ is an extension of $T'=\ang{D,W,<'}$. Therefore, $E\in Ext\pair{T'}$.

$(\Leftarrow)$ Let $E\in Ext\pair{T'}$ be arbitrary. This means $E$ is the unique extension of some LPDT $T^+:=\ang{D,W,<^{'+}}$, where $<^{'+}\:\supseteq\:<'$ is a strict total order. As $<^{'+}\:\cong\:<_{SP}^+$, which is a linearistaion of $<_{SP}$, then by Theorem \ref{thm:each_string_is_linearisation}, there exists a linearisation $<_D^+$ of $<_D$ such that $<_D^+\:\mapsto\:<_{SP}^+$, given that $<_D\:\mapsto\:<_{SP}$. By Lemma \ref{lem:LPDT_SP_order_keeps_extension_same}, $E$ is the unique extension of $T^+:=\ang{D,W,<^+}$, where $<^+\:\cong\:<_D^+$, which means $E$ is an extension of $T$. Therefore, $E\in Ext\pair{T}$. Therefore, $Ext\pair{T}=Ext\pair{T'}$.
\end{proof}

\begin{eg}\label{eg:penguin_triangle8}
(Example \ref{eg:penguin_triangle6} continued) Recall the setup of Example \ref{eg:penguin_triangle}, where $W=\es$ and
\begin{align}
D=\set{d_1:=\frac{:a}{a},d_2:=\frac{a:b}{b},d_3:=\frac{:\neg b}{\neg b}}\nonumber.
\end{align}
Consider two strict partial orders on $D$, $<$ and $<'$, where $d_1<d_2$ only and $d_2<'d_1$ only. This gives us two PDTs $T=\ang{D,W,<}$ and $T'=\ang{D,W,<'}$. Let $<_D\:\cong\:<$. By Table \ref{table:values_of_SP_for_penguin}, $F\pair{<_D}=21$ so $F\pair{<_D}\:\cong\:<'$. Both PDTs $T$ and $T'$ have the same extensions. In the case of $T$, we have linearisations $312$, $132$ and $123$, with the first linearisation giving $E_1:= Th\pair{\set{a,b}}$ and the latter two linearisations giving $E_2:=Th\pair{\set{a,\neg b}}$. In the case of $T'$, we have linearisations 321, 231 and 213, with the first linearisation giving $E_1$ and the latter two linearisations giving $E_2$. Therefore, both $T$ and $T'$ have the same extensions.
\end{eg}

We can now define the associated set comparison relation from this new $<_{SP}$ just like Equation \ref{eq:SP_set_comparison}: for $\Gamma,\:\Gamma'\subseteq_\text{fin}\relsymb_d$,
\begin{align}\label{eq:new_SP_set_comparison}
\Gamma\ordneq_{SP}\Gamma'\Leftrightarrow\pair{\exists x\in\Gamma-\Gamma'}\pair{\forall y\in\Gamma'-\Gamma}x<_{SP}y,
\end{align}
where given the partial order default priority $<$, order isomorphic to $<_D$ (Equation \ref{eq:bij_defaults_def_rules}), $<_{SP}=F\pair{<_D}$ is the output of Equation \ref{eq:order_mapping_new_SP}. The associated strict argument preference is, for $A,B\in\alg$,
\begin{align}\label{eq:generalised_argument_preference_SP}
A\prec_{SP}B\Leftrightarrow DR(A)\ordneq_{SP} DR(B).
\end{align}
The associated non-strict argument preference is
\begin{align}\label{eq:generalised_argument_preference_SP_non_strict}
A\precsim_{SP}B\Leftrightarrow\sqbra{DR(A)\ordneq_{SP}DR(B)\text{ or }DR(A)=DR(B)},
\end{align}
These equations are the same as Equations \ref{eq:SP_arg_pref} and \ref{eq:SP_arg_pref_non_strict} respectively.\footnote{It can be shown that $\precsim_{SP}$ in the partial order case is \textit{not} transitive, unlike in the total case (Lemma \ref{lem:SP_total_preorder_when_total}, also recall Footnote \ref{fn:acyclic}), but is acyclic. We will discuss this in future work.}

\begin{eg}\label{eg:penguin_stables}
(Example \ref{eg:penguin_triangle} continued) Suppose we define $\prec_{SP}$ by Equation \ref{eq:generalised_argument_preference_SP} with this new $<_{SP}$. We have both $A\not\prec_{SP}B$ and $B\not\prec_{SP}A$. This means there are two stable extensions: $\ext_1$ which contains $A_0$ and $A$, and $\ext_2$ which contains $A_0$ and $B$. The conclusion set of these stable extensions correspond respectively to $E_1$ and $E_2$. 

\end{eg}


So given a PDT $T$ where the default priority $<$ is not necessarily total, we construct the set of arguments $\alg$ and define the attack relation $\attk$ as in Section \ref{sec:lift_total_assumption}. We define the non-strict argument preference relation $\precsim_{SP}$ as in Equations \ref{eq:generalised_argument_preference_SP_non_strict}, \ref{eq:generalised_argument_preference_SP}, \ref{eq:new_SP_set_comparison} and \ref{eq:order_mapping_new_SP}, given $<\:\cong\:<_D$. The \textit{attack graph of the PDT $T$} is the structure $AG(T):=\ang{\alg,\attk,\precsim_{SP}}$. The \textit{defeat graph of the PDT $(T)$} is the structure $DG(T):=\ang{\alg,\defeat}$, where $\defeat$ is defined as in Equation \ref{eq:ASPIC+_general_defeat} under the argument preference relation $\precsim_{SP}$.

\subsection{The Representation Theorem for Partial Order Default Priorities}\label{sec:rep_thm_nonlinear}

We now generalise Theorem \ref{thm:rep_thm} to the case where $<_D$ is a partial order. Our proof strategy is to leverage as much of Theorem \ref{thm:rep_thm} as possible. The difference here is that our default priority $<\:\cong\:<_D$ is now partial. In the previous section, we saw how the linearisation square (Theorem \ref{thm:linearisation_square}) related the lift $<_D\:\mapsto\:<_{SP}$ to the lift $<_D^+\:\mapsto\:<_{SP}^+$ where $<_D^+$ is a linearisation of $<_D$. We now apply the linearisation square to relate partial order $<_D$ with their linearisations $<_D^+$ in the case of the defeat graphs generated and their stable extensions. Specifically, if $<_D^+$ is a linearisation of $<_D$, then the defeat graph of the former is a spanning subgraph of the latter. Further, the unique stable extension in the former case is still a well-defined stable extension in the latter case. The next two sections establish these results, which will then be used to prove the generalised representation theorem.

\subsubsection{Linearisation of the Argument Preference Relation and Spanning Subgraphs}

Recall from graph theory that $G':=\ang{V,E'}$ is a \textit{spanning subgraph} of $G:=\ang{V,E}$ iff $E'\subseteq E$, and we write $G'\subseteq_\text{span}G$. For spanning argument sub-frameworks, stable extensions are preserved as long as you do not add conflicts between arguments in the stable extension.

\begin{lem}\label{lem:stable_extensions_of_spanning_subgraphs}
Let $AF:=\ang{A,\to}$ be an abstract argumentation framework. Let $AF':=\ang{A,\to'}$ be a spanning subgraph of $AF$. If $\ext$ is a stable extension of $AF'$ and $\ext^2\cap\to=\es$, then $\ext$ is also a stable extension of $AF$.
\end{lem}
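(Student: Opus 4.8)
The plan is to verify directly the two conditions defining a stable extension of $AF$: that $\ext$ is conflict-free with respect to $\to$, and that $\ext$ defeats, with respect to $\to$, every argument in $A-\ext$. Recall that a conflict-free set defeating all arguments outside itself is automatically complete, hence stable: it is admissible because any attacker of a member of $\ext$ must lie outside $\ext$ by conflict-freeness and is therefore defeated by $\ext$, and it is complete because an argument defended by $\ext$ yet lying outside $\ext$ would be defeated by some member of $\ext$, whose own defeater in $\ext$ would contradict conflict-freeness. So it suffices to establish conflict-freeness and the defeat-all-outside property for $AF$, and I would invoke this standard equivalence to match the paper's definition of stable as ``complete and defeats all outside''.

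Conflict-freeness with respect to $\to$ is immediate, since it is exactly the hypothesis $\ext^2\cap\to=\es$. This is the one place where the extra hypothesis is genuinely needed: conflict-freeness does \emph{not} transfer from the sparser $AF'$ to the denser $AF$, because although $\ext$ is conflict-free with respect to $\to'$, the additional edges in $\to\:\supseteq\:\to'$ could in principle introduce a conflict internal to $\ext$. The hypothesis $\ext^2\cap\to=\es$ is precisely the assumption that they do not.

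For the defeat-all-outside property, let $B\in A-\ext$ be arbitrary. Since $\ext$ is a stable extension of $AF'$, there is some $C\in\ext$ with $C\to' B$. Because $AF'$ is a spanning subgraph of $AF$ we have $\to'\:\subseteq\:\to$, so $C\to B$ and hence $\ext$ defeats $B$ in $AF$. As $B$ was arbitrary, $\ext$ defeats every argument outside itself in $AF$. This direction requires no extra assumption precisely because enlarging the edge set can only create more defeats, so the outward-defeat property is inherited for free under passing to a supergraph on the same vertex set.

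Combining the two, $\ext$ is conflict-free with respect to $\to$ and defeats all of $A-\ext$, hence is a stable extension of $AF$. I do not expect a real obstacle here; the only conceptual point worth flagging is the asymmetry of the two conditions under edge addition — the defeat-all-outside property is inherited automatically, whereas conflict-freeness is the fragile condition that must be hypothesised separately. Identifying that $\ext^2\cap\to=\es$ is exactly the needed repair for this one failure of monotonicity is the whole content of the lemma.
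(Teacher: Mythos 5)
Your proof is correct and takes essentially the same route as the paper's: conflict-freeness in $AF$ comes exactly from the hypothesis $\ext^2\cap\to\:=\es$, and the defeat-all-outside property transfers because $\to'\:\subseteq\:\to$. You are in fact somewhat more careful than the paper's own (very terse) proof, since you explicitly justify why conflict-freeness plus defeating everything outside yields completeness and hence stability under the paper's definition, a step the paper leaves implicit.
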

\begin{proof}
By assumption, $\ext$ is cf because it is a stable extension. Let $b\notin\ext$, then $\ext\to' b$, but as $\to'\:\subseteq\:\to$ by definition, we also have $\ext\to b$. Therefore, $\ext$ is a stable extension of $AF$.
\end{proof}

\noindent Linearising the structure preference order $<_{SP}$ on the rules also linearises the set comparison relation $\ordneq_{SP}$ and the argument preference $\precsim_{SP}$ by Lemma \ref{lem:disj_eli_is_transitive_over_chain}.

\begin{lem}\label{lem:linearisation_of_SP_order_linearises_arg_pref}
Let $<_{SP}$ be the output of Equation \ref{eq:order_mapping_new_SP} for some input $<_D$. Let $<_{SP}^+$ be a linearisation of $<_{SP}$. Let the binary relations on $\powfin\pair{\relsymb_d}$, $\ordneq_{SP}$ and $\ordneq_{SP}^+$, be obtained by applying Equation \ref{eq:new_SP_set_comparison} to $<_{SP}$ and $<_{SP}^+$ respectively. Then
\begin{enumerate}
\item $\ordneq_{SP}\:\subseteq\:\ordneq_{SP}^+$,
\item $\prec_{SP}\:\subseteq\:\prec_{SP}^+$, where $\prec_{SP}$ is the strict part of Equation \ref{eq:generalised_argument_preference_SP} on $\ordneq_{SP}$ and analogously for $\prec_{SP}^+$ on $\ordneq_{SP}^+$, and
\item $\precsim_{SP}\:\subseteq\:\precsim_{SP}^+$.
\end{enumerate}
\end{lem}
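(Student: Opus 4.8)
The plan is to reduce all three containments to a single observation: since $<_{SP}^+$ is a linearisation of $<_{SP}$, we have the set inclusion $<_{SP}\:\subseteq\:<_{SP}^+$ as sets of ordered pairs on $\relsymb_d$, and the disjoint elitist order of Equation \ref{eq:new_SP_set_comparison} is monotone in its underlying order relation. Once this monotonicity is isolated, Parts 2 and 3 are immediate unwindings of the definitions in Equations \ref{eq:generalised_argument_preference_SP} and \ref{eq:generalised_argument_preference_SP_non_strict}.

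For Part 1, I would fix $\Gamma,\:\Gamma'\subseteq_\text{fin}\relsymb_d$ with $\Gamma\ordneq_{SP}\Gamma'$ and unfold Equation \ref{eq:new_SP_set_comparison}: there is a witness $x\in\Gamma-\Gamma'$ with $x<_{SP}y$ for every $y\in\Gamma'-\Gamma$. The crucial structural point is that the quantifier domains $\Gamma-\Gamma'$ and $\Gamma'-\Gamma$ are independent of the chosen order, and the order occurs only in the positively-placed atom $x<_{SP}y$. Since $<_{SP}\:\subseteq\:<_{SP}^+$, the \emph{same} witness $x$ satisfies $x<_{SP}^+y$ for all $y\in\Gamma'-\Gamma$, so $\Gamma\ordneq_{SP}^+\Gamma'$. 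This gives $\ordneq_{SP}\:\subseteq\:\ordneq_{SP}^+$.

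For Part 2, I would simply chase Equation \ref{eq:generalised_argument_preference_SP}: if $A\prec_{SP}B$ then $DR(A)\ordneq_{SP}DR(B)$, whence Part 1 yields $DR(A)\ordneq_{SP}^+DR(B)$, i.e. $A\prec_{SP}^+B$. For Part 3, I would use Equation \ref{eq:generalised_argument_preference_SP_non_strict} and split into two cases: if $DR(A)\ordneq_{SP}DR(B)$ then Part 1 gives $DR(A)\ordneq_{SP}^+DR(B)$; if instead $DR(A)=DR(B)$ then this equality is preserved verbatim. In either case $A\precsim_{SP}^+B$, so $\precsim_{SP}\:\subseteq\:\precsim_{SP}^+$.

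There is no substantial obstacle here; the statement is essentially immediate modulo the monotonicity of the disjoint elitist order. The only point requiring a moment of care is in Part 1, namely that the witness $x$ need not be re-selected when passing from $<_{SP}$ to its linearisation: because the quantifier ranges are untouched and the order atom appears only positively under the $\exists x\,\forall y$ prefix, enlarging the underlying order can only enlarge the induced set-comparison relation. I would phrase the argument so that this positivity is made explicit, since it is what makes the single inclusion $<_{SP}\:\subseteq\:<_{SP}^+$ propagate cleanly to $\ordneq_{SP}$, $\prec_{SP}$ and $\precsim_{SP}$.
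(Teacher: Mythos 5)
Your proposal is correct and takes essentially the same route as the paper's own proof: unfold Equation \ref{eq:new_SP_set_comparison}, observe that the quantifier domains $\Gamma-\Gamma'$ and $\Gamma'-\Gamma$ do not depend on the order so the same witness survives the inclusion $<_{SP}\:\subseteq\:<_{SP}^+$, and then obtain Parts 2 and 3 by unwinding Equations \ref{eq:generalised_argument_preference_SP} and \ref{eq:generalised_argument_preference_SP_non_strict}. The only difference is presentational: you make the positivity of the order atom and the two-case split in Part 3 explicit, where the paper dismisses Part 3 as trivial.
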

\begin{proof}
(1) Let $\Gamma,\:\Gamma'\in\powfin\pair{\relsymb_d}$ be arbitrary. Assume $\Gamma\ordneq_{SP}\Gamma'$. Then by Equation \ref{eq:new_SP_set_comparison}, this is equivalent to $\pair{\exists x\in\Gamma-\Gamma'}\pair{\forall y\in\Gamma'-\Gamma}x<_{SP} y$, which by our assumption implies that $\pair{\exists x\in\Gamma-\Gamma'}\pair{\forall y\in\Gamma'-\Gamma}x<^+_{SP} y$, and hence $\Gamma\ordneq^+_{SP}\Gamma'$. (2) Let $A,B\in\alg$ be arbitrary. We have $A\prec_{SP}B\Leftrightarrow DR(A)\ordneq_{SP}DR(B)$. From the first result, $DR(A)\ordneq_{SP}DR(B)$ then $DR(A)\ordneq_{SP}^+DR(B)$. (3) This follows trivially from Equation \ref{eq:generalised_argument_preference_SP}.
\end{proof}

We now prove the converse of Lemma \ref{lem:linearisation_of_SP_order_linearises_arg_pref}.

\begin{lem}\label{lem:spanning_defeat_graphs}
Let $T$ be a PDT, $<\:\cong\:<_D$ and $<_{SP}=F\pair{<_D}$. Let $<_{SP}^+$ be a linearisation of $<_{SP}$. Let $\precsim_{SP}$ and $\precsim_{SP}^+$ be the lift of $<_{SP}$ and $<_{SP}^+$ respectively from $\relsymb_d$ to an argument preference relation on $\alg$ in the usual way (Equations \ref{eq:new_SP_set_comparison} and \ref{eq:generalised_argument_preference_SP}). Let $DG(T):=\ang{\alg,\defeat}$ and $DG^+\pair{T}:=\ang{\alg,\defeat^+}$ be the respective defeat graphs of the attack graphs $AG(T):=\ang{\alg,\attk,\precsim_{SP}}$ and $AG^+\pair{T}:=\ang{\alg,\attk,\precsim_{SP}^+}$. Then $DG^+\pair{T}\subseteq_\text{span} DG(T)$.
\end{lem}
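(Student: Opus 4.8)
The plan is to unwind the definition of spanning subgraph and reduce the claim to a single inclusion of relations. Since $DG(T)$ and $DG^+\pair{T}$ share the same vertex set $\alg$, showing $DG^+\pair{T}\subseteq_\text{span}DG(T)$ amounts to proving $\defeat^+\:\subseteq\:\defeat$ as subsets of $\alg^2$. I would first note that the two attack graphs $AG(T)$ and $AG^+\pair{T}$ differ only in their argument preference relation ($\precsim_{SP}$ versus $\precsim_{SP}^+$), while the attack relation $\attk$ is literally the same in both; so both defeat relations are obtained from the common $\attk$ via Equation \ref{eq:ASPIC+_general_defeat}, the sole difference being which strict preference is substituted for $\prec$.

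Next comes the core step. Suppose $A\defeat^+B$. By Equation \ref{eq:ASPIC+_general_defeat} there is a subargument $B'\subarg B$ with $A\attk B'$ and $A\not\prec_{SP}^+B'$. The key input is Lemma \ref{lem:linearisation_of_SP_order_linearises_arg_pref}(2), which gives $\prec_{SP}\:\subseteq\:\prec_{SP}^+$. Reading this containment in contrapositive form on the pair $\pair{A,B'}$: if $A\prec_{SP}B'$ then $A\prec_{SP}^+B'$, so from $A\not\prec_{SP}^+B'$ we conclude $A\not\prec_{SP}B'$. Thus the same $B'$ witnesses $A\attk B'$ and $A\not\prec_{SP}B'$, which by Equation \ref{eq:ASPIC+_general_defeat} yields $A\defeat B$. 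Hence $\defeat^+\:\subseteq\:\defeat$ and therefore $DG^+\pair{T}\subseteq_\text{span}DG(T)$.

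The point to watch — which is really the only subtlety rather than a genuine obstacle — is the direction of the inclusion. Because defeat is defined through the negation $A\not\prec B'$, enlarging the strict preference relation (passing from $\prec_{SP}$ to its linearisation $\prec_{SP}^+$) can only destroy defeats, never create them; so more preferences yield fewer defeats, and the inclusion runs $\defeat^+\:\subseteq\:\defeat$, i.e. the linearised defeat graph is the spanning subgraph of the partial-order one, exactly as claimed. Everything else is a direct unpacking of the definitions, with Lemma \ref{lem:linearisation_of_SP_order_linearises_arg_pref} doing the essential work.
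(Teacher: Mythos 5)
Your proposal is correct and follows essentially the same route as the paper's own proof: both reduce the claim to showing $\defeat^+\:\subseteq\:\defeat$ over the common vertex set and common attack relation, and both obtain $A\not\prec_{SP}B'$ from $A\not\prec_{SP}^+B'$ via the containment $\prec_{SP}\:\subseteq\:\prec_{SP}^+$ of Lemma \ref{lem:linearisation_of_SP_order_linearises_arg_pref}. Your closing remark on why the inclusion runs in this direction (enlarging the strict preference can only destroy defeats) is a faithful gloss on the same mechanism.
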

\begin{proof}
Clearly both $DG(T)$ and $DG^+(T)$ have the same vertex set $\alg$. We show that $\defeat^+\:\subseteq\:\defeat$. Let $A,B\in\alg$ be arbitrary such that $A\defeat^+ B$. Suppose $B'\subarg B$ is the argument defeated by $A$ at its top rule. By Equation \ref{eq:ASPIC+_general_defeat}, $A\attk B'$ and $A\not\prec_{SP}^+ B'$. It is sufficient to show that $A\not\prec_{SP} B'$. By Lemma \ref{lem:linearisation_of_SP_order_linearises_arg_pref}, we have $\prec_{SP}\:\subseteq\:\prec_{SP}^+$ meaning that if $A\not\prec_{SP}^+B'$ then $A\not\prec_{SP}B'$. Hence, $A\defeat B'$ and so $A\defeat B$. It follows that $\defeat^+\:\subseteq\:\defeat$.
\end{proof}

\subsubsection{Existence of Stable Extensions in the Partial Order Case}

For a PDT $T$, its defeat graph $DG\pair{T}$ has stable extensions that do not have to be unique.

\begin{thm}\label{thm:stb_ext_exists_nonlinear_case}
Let $T$ be a PDT, with attack graph $AG(T)$ and defeat graph $DG(T)$ where, as usual, $<\:\cong\:<_D$, $<_{SP}=F\pair{<_D}$ by Equation \ref{eq:order_mapping_new_SP}, and $\precsim_{SP}$ is defined from $<_{SP}$ using Equations \ref{eq:new_SP_set_comparison} and \ref{eq:generalised_argument_preference_SP}. The defeat graph $DG(T)$ has a stable extension that is not in general unique.
\end{thm}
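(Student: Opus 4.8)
The plan is to reduce the partial-order case to the totally-ordered case already settled by Theorem \ref{thm:total_still_has_unique_stable_extension}, transporting a stable extension upward along a spanning-subgraph inclusion. First I would fix any linearisation $<_D^+$ of $<_D$ and form the associated structure-preference order $<_{SP}^+ := F\pair{<_D^+}$. Since $<_D^+$ is total, Lemma \ref{lem:total_input_total_output} guarantees that $<_{SP}^+$ is exactly the total structure-preference order of Section \ref{sec:pref}; in particular it is a strict total order on $\relsymb_d$, and by the linearisation square (Theorem \ref{thm:linearisation_square}) it is a linearisation of $<_{SP} = F\pair{<_D}$. The crucial point here is that $<_{SP}^+$ is precisely the structure-preference order of the LPDT $T^+ := \ang{D, W, <^+}$ with $<^+\:\cong\:<_D^+$, so its defeat graph coincides with the graph $DG^+\pair{T}$ of Lemma \ref{lem:spanning_defeat_graphs}.

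Next I would apply the LPDT results to this $DG^+\pair{T}$. By Theorem \ref{thm:total_still_has_unique_stable_extension}, $DG^+\pair{T}$ has a unique stable extension $\ext$, and by Corollary \ref{cor:stb_ext_acf} this $\ext$ satisfies $\ext^2 \cap \attk = \es$; that is, no two arguments in $\ext$ even attack one another. The key observation licensing the transfer is that $\attk$ depends only on argument structure and not on any preference, so it is literally the same relation in both $DG^+\pair{T}$ and $DG\pair{T}$ — only the two defeat relations differ.

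I would then transfer $\ext$ along the spanning inclusion. By Lemma \ref{lem:spanning_defeat_graphs}, $DG^+\pair{T} \subseteq_\text{span} DG\pair{T}$. Since $\defeat\:\subseteq\:\attk$ as relations, from $\ext^2 \cap \attk = \es$ we obtain $\ext^2 \cap \defeat = \es$, where $\defeat$ is now the defeat relation of the larger graph $DG\pair{T}$. Lemma \ref{lem:stable_extensions_of_spanning_subgraphs} then applies directly (instantiating $\to' = \defeat^+$ and $\to = \defeat$): $\ext$, being a stable extension of the spanning subgraph that is additionally conflict-free in the larger graph, is a stable extension of $DG\pair{T}$. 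This establishes existence. Non-uniqueness is already witnessed by Example \ref{eg:penguin_stables}, where the modified Penguin Triangle yields two distinct stable extensions $\ext_1$ and $\ext_2$ whose conclusion sets correspond to the two PDL extensions.

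I expect the main obstacle to be the bookkeeping that ensures the chosen $<_{SP}^+$ genuinely arises as the structure-preference order of a legitimate LPDT, so that the uniqueness and strong conflict-freeness results (Theorem \ref{thm:total_still_has_unique_stable_extension} and Corollary \ref{cor:stb_ext_acf}) may be invoked, and that this LPDT defeat graph is exactly the $DG^+\pair{T}$ appearing in Lemma \ref{lem:spanning_defeat_graphs}. This alignment is precisely what the linearisation square and Lemma \ref{lem:total_input_total_output} deliver; once it is in place, everything else is a routine chaining of $\defeat\:\subseteq\:\attk$ with the two spanning-subgraph lemmas.
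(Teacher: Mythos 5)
Your proposal is correct and follows essentially the same route as the paper's proof: linearise the priority, invoke the linearisation square and the LPDT results (uniqueness of the stable extension plus Corollary \ref{cor:stb_ext_acf} giving $\ext^2\cap\attk=\es$), transfer along the spanning-subgraph inclusion via Lemmata \ref{lem:spanning_defeat_graphs} and \ref{lem:stable_extensions_of_spanning_subgraphs}, and exhibit a counterexample for non-uniqueness. The only cosmetic differences are that the paper anchors the construction to a specific PDL extension $E$ of $T$ (citing Theorem \ref{thm:rep_thm} rather than Theorem \ref{thm:total_still_has_unique_stable_extension} directly) and uses a fresh two-default example for non-uniqueness instead of Example \ref{eg:penguin_stables}, both of which are immaterial.
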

\begin{proof}
The PDT $T$ has some extension $E$, which is the unique stable extension of an LPDT $T^+:=\ang{D,W,<^+}$, where $<^+$ is the linearisation of $<$ that generates $E$. Given that $<\:\cong\:<_D$ and $<^+\:\cong\:<_D^+$, we know that $<_D^+$ is also a linearisation of $<_D$. By the linearisation square (Theorem \ref{thm:linearisation_square}), $F\pair{<_D^+}=:<_{SP}^+$ is a linearisation of $F\pair{<_D}=:<_{SP}$.

The LPDT $T^+$ has an attack graph $AG\pair{T^+}:=\ang{\alg,\attk,\precsim_{SP}^+}$, where $\precsim_{SP}^+$ is calculated from $<_{SP}^+$. As $<_{SP}^+$ is a linearisation of $<_{SP}$, the defeat graph of $T^+$, $DG\pair{T^+}:=\ang{\alg,\defeat^+}$ is a spanning subgraph of $DG(T)$ by Lemma \ref{lem:spanning_defeat_graphs}.

The LPDT $T^+$ has $E$ as its unique PDE. By the representation theorem for LPDTs (Theorem \ref{thm:rep_thm}), there exists a unique stable extension $\ext$ of $DG\pair{T^+}$ such that $Conc\pair{\ext}=E$. By Corollary \ref{cor:stb_ext_acf}, $\ext^2\:\cap\:\attk=\es$. $DG(T)$ differs from $DG\pair{T^+}$ by their argument preference relations, as the attack relation is the same for both. As no attacks are introduced to $\ext$, $\ext$ is also cf in $DG(T)$ by Equation \ref{eq:ASPIC+_general_defeat}. Therefore, by Lemma \ref{lem:stable_extensions_of_spanning_subgraphs}, $\ext$ is also a stable extension of $DG(T)$. Therefore, $DG(T)$ also has $\ext$ as a stable extension.

To show that this stable extension is not in general unique, consider the PDT $\ang{\set{d_1:=\frac{:a}{a},d_2:=\frac{:\neg a}{\neg a}},\es,\es}$. We can construct the arguments $A:=[\Rightarrow a]$ and $B:=[\Rightarrow\neg a]$, which symmetrically attack each other on their conclusions. As $<=\es$, we have $<_D=<_{SP}=\es$ from Equation \ref{eq:order_mapping_new_SP}. Therefore, $A\not\prec_{SP} B$ and $B\not\prec_{SP} A$ and we have two stable extensions: one where $A$ is justified (and $B$ is not justified), and the other where $B$ is justified (and $A$ is not justified).
\end{proof}

\subsubsection{Proof of the Representation Theorem}

We prove the representation theorem in this section. Our technique is to relate a partial order on the defaults $<$ with one of its possible linearisations $<^+$, and invoke the first representation theorem (Theorem \ref{thm:rep_thm}) for this linearisation. We know given an extension $E$ of some PDT $T$ there is a linearisation $<^+$ of $<$ generating $E$. We now establish an analogous result on the side of argumentation: for every stable extension $\ext$ of $DG\pair{T}$, there is a linearisation $<_{SP}^+$ of $<_{SP}$ on $\relsymb_d$ such that $<_{SP}^+$ constructs $\ext$ via Algorithm \ref{alg:gen_stab_ext}.

We show $<_{SP}^+$ exists given $<_{SP}$ by construction, which will make use of a \textit{partial linearisation} of an order $<$ on $P$. Let $\ang{P,<}$ be a poset and $U\subseteq P$. Let $<_U:=<\cap U^2$ be the partial order $<$ restricted to $U$. Let $<_U^+$ be a linearisation of $<_U$ on $U$. We define
\begin{align}\label{eq:par_lin_U}
<_U^\text{par}:=TrCl\pair{<_U^+\cup <},
\end{align}
where $TrCl$ denotes the transitive closure. It can be shown that given $\ang{P,<}$ and $U$, $<_U^\text{par}$ is a strict partial order on $P$ extending $<$, which is linear when restricted to the set $U$. Further, $<_U^\text{par}$ is not unique because there could be many possible linearisations of $<$ over $U$.

\begin{lem}\label{lem:each_stb_ext_has_lin_gen_it}
Let $T$ be a PDT with defeat graph $DG(T)$ where $<_{SP}$ lifts to $\precsim_{SP}$. Let $\ext$ be a stable extension of $DG(T)$. There exists a linearisation $<_{SP}^+$ of $<_{SP}$ such that $\ext$ is the output of Algorithm \ref{alg:gen_stab_ext} with $<_{SP}^+$ as input.
\end{lem}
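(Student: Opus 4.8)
The plan is to reduce the partial-order statement to the total-order machinery already developed, by producing a linearisation of $<_{SP}$ under which $\ext$ stays stable, and then invoking uniqueness. Fix $R:=DR\pair{\ext}$. For any linearisation $<_{SP}^+$ of $<_{SP}$, write $DG^+\pair{T}:=\ang{\alg,\defeat^+}$ for the defeat graph whose argument preference is lifted from $<_{SP}^+$ (via Equations \ref{eq:new_SP_set_comparison} and \ref{eq:generalised_argument_preference_SP}). By Lemma \ref{lem:spanning_defeat_graphs}, $DG^+\pair{T}\subseteq_\text{span}DG\pair{T}$, i.e. $\defeat^+\:\subseteq\:\defeat$; hence conflict-freeness transfers downward, so $\ext$ is automatically conflict-free in $DG^+\pair{T}$. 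Since $<_{SP}^+$ is a strict total order, $DG^+\pair{T}$ is the defeat graph of an LPDT, and by Theorems \ref{thm:well_defined_stb_ext} and \ref{thm:total_still_has_unique_stable_extension} it possesses a \emph{unique} stable extension, which is exactly the output of Algorithm \ref{alg:gen_stab_ext} run on $<_{SP}^+$. Consequently it suffices to find a linearisation $<_{SP}^+$ of $<_{SP}$ for which $\ext$ is \emph{still} a stable extension of $DG^+\pair{T}$: uniqueness then forces $\ext$ to coincide with Algorithm \ref{alg:gen_stab_ext}'s output, which is the assertion of the lemma (note that this coincidence also yields $\ext=Args(R)$ for free, so we need not establish that form separately).

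Given this reduction, the only thing that can fail when passing to the spanning subgraph is that $\ext$ ceases to defeat some argument outside it, since moving to a subgraph only deletes defeats. So the task is to choose $<_{SP}^+$ preserving, for every $B\notin\ext$, at least one of the defeats witnessing stability of $\ext$ in $DG\pair{T}$. For each such $B$ fix $A\in\ext$ and $B'\subarg B$ with $A\attk B'$ and $A\not\prec_{SP}B'$, where $r:=TopRule\pair{B'}$ lies outside $R$ and the conclusion conflicting with $Cons(r)$ is already produced by $A$, whose rules satisfy $DR(A)\subseteq R$. Passing to $<_{SP}^+$ can destroy this defeat only by upgrading $A\not\prec_{SP}B'$ to $A\prec_{SP}^+B'$, i.e. by making $DR(A)\ordneq_{SP}^+DR(B')$; since $r\in DR(B')-DR(A)$, this is prevented precisely by ranking $r$ strictly $<_{SP}^+$-below every rule of $DR(A)$.

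I therefore build $<_{SP}^+$ so that every such blocking constraint holds. Concretely I use the partial-linearisation operator of Equation \ref{eq:par_lin_U}: first linearise $<_{SP}$ on the used rules $R$ in the order in which their prerequisites become derivable (mirroring the layers of Equation \ref{eq:ext_ind} and the acceptance order of Algorithm \ref{alg:gen_stab_ext}), obtaining a partial order on $\relsymb_d$ that extends $<_{SP}$ and is linear on $R$, and then extend it to a full linearisation that inserts each blocked rule below the $R$-rules witnessing its blocking. Equivalently, by the linearisation square (Theorem \ref{thm:linearisation_square}) together with Theorem \ref{thm:each_string_is_linearisation} and Corollary \ref{cor:intersection_of_lin_of_input}, one may simply take $<_{SP}^+$ to be the strict total order obtained from one of the strings output by Algorithm \ref{alg:part1}, each of which is a linearisation of $<_{SP}$ already respecting argument structure; the correct string is the one whose greedy choices of maximal applicable rules reproduce the rule set $R$. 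With such a $<_{SP}^+$, every preserved defeat gives $A\defeat^+B$, so $\ext$ defeats all arguments outside itself in $DG^+\pair{T}$ and is therefore stable there, completing the reduction.

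The main obstacle is this construction: showing that the family of constraints ``each blocking $R$-rule lies $<_{SP}^+$-above the rule it blocks'' is jointly realisable as a single linear extension of $<_{SP}$, i.e. that adjoining these constraints to $<_{SP}$ creates no cycle. This is exactly where the structure-preference transform does its work — because $<_{SP}=F\pair{<_D}$ was defined (Algorithm \ref{alg:part1}, Line \ref{alg_line:input_here}) so that a rule is never ranked above another on whose conclusion its own applicability depends, no blocked rule is $<_{SP}$-forced above the rules that block it, so the extra ordering demands never close a cycle and a realising linearisation exists. A secondary point to dispatch, as in Example \ref{eg:always_blocked_default}, is that a blocked rule whose prerequisite is never derivable from $\ext$ never fires, so its position in $<_{SP}^+$ is immaterial and accepting it does not enlarge the argument set beyond $Args(R)=\ext$, while a rule conflicting directly with the facts $W$ is defeated by a strict argument and hence stays defeated under every linearisation by Corollary \ref{cor:empty_set_is_top}.
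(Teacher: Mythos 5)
Your proposal follows the same skeleton as the paper's proof: both manufacture a linearisation of $<_{SP}$, via the partial-linearisation operator of Equation \ref{eq:par_lin_U}, that places each excluded rule below all rules of a defeater drawn from $\ext$, and both conclude that Algorithm \ref{alg:gen_stab_ext} run on that linearisation reproduces $\ext$. Your packaging of the endgame is a genuinely nice alternative: instead of re-running the algorithm analysis, you note that conflict-freeness survives passage to the spanning subgraph (Lemma \ref{lem:spanning_defeat_graphs}), so only the ``defeats everything outside'' half of stability can break, and then uniqueness (Theorems \ref{thm:well_defined_stb_ext} and \ref{thm:total_still_has_unique_stable_extension}) forces $\ext$ to coincide with the algorithm's output. (Strictly, those theorems are stated for defeat graphs of LPDTs, i.e.\ for orders in the image of $F$; you should either remark that their proofs use only totality of the chain, or route through Theorem \ref{thm:each_string_is_linearisation}. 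Also, your ``equivalently, take a string output by Algorithm \ref{alg:part1} whose greedy choices reproduce $R$'' is circular: the existence of such a string is essentially what the lemma asserts.) However, the two steps carrying the real weight are asserted rather than proved, and both have genuine gaps.

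First, for each $B\notin\ext$ you fix a defeat $A\defeat B$ on $B'\subarg B$ and assert that $r:=TopRule\pair{B'}$ lies outside $R:=DR\pair{\ext}$. That is not automatic: a witnessing defeat may land on a subargument whose top rule is in $R$ but which contains excluded rules strictly below that top rule; conflict-freeness of $\ext$ only rules out the case $DR\pair{B'}\subseteq R$ (for then $B'\in\ext$ by Lemma \ref{lem:rules_in_stb_ext}). If some $B$ admits only such defeats, your constraint set does not protect it, and constraints ranking $R$-rules below other $R$-rules can close cycles. The paper avoids this by working per excluded rule rather than per excluded argument: for each $s\in\relsymb_d-R$ it takes an argument $B_s$ with $TopRule\pair{B_s}=s$ (under the explicit assumption that every defeasible rule occurs in some argument) and extracts a defeater $A_s\in\ext$ for it. Second, and more seriously, your realisability argument appeals to the wrong property. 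Structural dependency (``a rule is never ranked above one whose conclusions it needs'') says nothing here, because the defeater's rules are not rules on which the blocked rule's applicability depends --- they derive the contrary of its consequent. What actually rules out $a<_{SP}r$ for $a\in DR(A_s)$ is the preference clause of the defeat being preserved: $r$, as the top rule of $B_s$, is $<_{SP}$-least in $DR\pair{B_s}$, so if $a<_{SP}r$ then $a\notin DR\pair{B_s}$ and, by transitivity, $a<_{SP}y$ for every $y\in DR\pair{B_s}-DR(A_s)$; by Equation \ref{eq:new_SP_set_comparison} this makes $a$ a disjoint-elitist witness for $A_s\prec_{SP}B_s$, contradicting the defeat. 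This is precisely the paper's central contradiction argument (showing $DR\pair{A_i}\subseteq U_i$), and it must moreover be applied stage by stage along the sequential construction $<_0\subseteq<_1\subseteq\cdots$, since cycles can also arise from chains mixing $<_{SP}$ with several of your constraints, not merely from a single pairwise clash. Without this argument, the linear extension you posit may simply fail to exist.
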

\begin{proof}
Given $\ext$, let $R^+:=DR\pair{\ext}$ and $R^-=\relsymb_d-R^+$. Define $<_0:=<_{SP,R^-}^\text{par}$, which is Equation \ref{eq:par_lin_U} with $\ang{P,<}=\ang{\relsymb_d,<_{SP}}$ and $U=R^-$. As $\relsymb_d$ is finite, WLOG let $R^-:=\set{s_1,s_2,\ldots,s_m}$ for some $m\in\nat$, such that $i<j\Leftrightarrow s_j<_0 s_i$. so $s_1$ is $<_0$-greatest on $R^-$. For $1\leq i\leq m$, define the set
\begin{align}\label{eq:Ui}
nonlower_{<_{i-1}}(s_i):=:U_i:=\set{r\in\relsymb_d\:\vline\:r\not <_{i-1} s_i}.
\end{align}
For $0\leq i\leq m-1$ we extend $<_i$ to a new partial order $<_{i+1}:=<_{i,U_{i+1}}^\text{par}$, which is Equation \ref{eq:par_lin_U} with $\ang{P,<}=\ang{\relsymb_d,<_i}$ and $U=U_i$ (Equation \ref{eq:Ui}), \textit{such that $s_i$ is the $<_i$-least element on the set $U_i$}. Once we reach $<_m$ we take a final linearisation of $<_m$ to get $<_{SP}^+$.\footnote{Simple examples can be devised where $<_m$ is not a total order on $\relsymb_d$.} This construction therefore gives an increasing sequence of partial orders $<_{SP}\subseteq <_0\subseteq <_1\subseteq\cdots\subseteq <_m\subseteq <_{SP}^+.$ on $\relsymb_d$. Clearly, $<_{SP}^+$ is a well-defined linearisation of $<_{SP}$ by construction. 

To show that $<_{SP}^+$ generates $\ext$ when input into Algorithm \ref{alg:gen_stab_ext}, consider $s_i\in R^-$. We assume no defeasible rule is unnecessary, i.e. $\pair{\forall r\in\relsymb_d}\pair{\exists A\in\alg}r\in DR(A)$. In other words, each defeasible rule is used in some argument.\footnote{This is a fair assumption to make given that PDTs typically do not have defaults that are excluded from all extensions.} Therefore, there is some argument $B_i\in\alg$ such that $TopRule\pair{B_i}=s_i$. By how $<_{SP}$ is defined, $s_i$ is the $<_{SP}$-least rule in $DR\pair{B_i}$. By Lemma \ref{lem:rules_in_stb_ext}, $B_i\notin\ext$ so $\ext\defeat B_i$. Let $A_i\defeat B_i$ for $A_i\in\ext$. This would mean $A_i\not\prec_{SP} B_i$.

We show that for the set of defeasible rules $U_i$ associated with rule $s_i$ as defined in Equation \ref{eq:Ui}, $\pair{\forall r\in DR\pair{A_i}}r\in U_i$. Assume for contradiction that $\pair{\exists r\in DR\pair{A_i}}r\notin U_i$, then there is some $r_0\in DR\pair{A_i}$, $r_0<_{i-1}s_i$. By the properties of Equation \ref{eq:par_lin_U}, we can show that $r_0<_{i-1}s_i<_{i-1}s_{i-1}<_{i-1}\cdots<_{i-1}s_1$. Therefore, $r_0$ cannot be in the sets $U_j$ for any $j<i$, and so could not have been linearised above $s_j$ for $j<i$ in any of the previous stages. Therefore, $r_0<_{SP} s_i$. As $s_i$ is $<_{SP}$-least in $DR\pair{B_i}$ by being the top rule of $B_i$, $r_0\notin DR\pair{B_i}$ and hence there is some rule, $r_0\in DR\pair{A_i}-DR\pair{B_i}$, such that for all rules $x\in DR\pair{B_i}-DR\pair{A_i}$, $r_0<_{SP}x$. Therfore, $A_i\prec_{SP}B_i$ -- contradiction, as $A_i\defeat B_i$. Therefore, all the defeasible rules of $A_i$ are in $U_i$, and in the linearisation process where $<_{i-1}$ is linearised over $U_i$ into $<_i$ such that $s_i$ is $<_i$-minimal in $U_i$, we have ensured that at least one defeater of $B_i$ will be constructed by Algorithm \ref{alg:gen_stab_ext} and included in $\ext$ prior to the consideration of the rule $s_i$. As $i$ is arbitrary, we have shown that the final linearisation $<_{SP}^+$ ensures that all arguments containing rules in $R^-$ are defeated and excluded from $\ext$. Therefore, Algorithm \ref{alg:gen_stab_ext}, upon input from $<_{SP}^+$, generates exactly $\ext$.
\end{proof}

\noindent We give two concrete examples of the construction of $<_{SP}$ in Lemma \ref{lem:each_stb_ext_has_lin_gen_it}.

\begin{eg}
Consider the PDT $\ang{\set{d_1:=\frac{:a}{a},d_2:=\frac{:\neg a}{\neg a}},\es,\es}$ from the proof of Theorem \ref{thm:stb_ext_exists_nonlinear_case}. Translating to argumentation, there are two arguments $A:=[\Rightarrow a]$ and $B:=[\Rightarrow\neg a]$ which attack each other at their conclusions. Clearly, $<_{SP}=\es$ and there are two stable extensions: $\ext_1$ such that $A\in\ext_1$ and $\ext_2$ such that $B\in\ext_2$. Suppose we choose the stable extension $\ext_1$ and construct a linearisation of $\es$ that generates $\ext_1$. We have $R^+=\set{r_1}$ and $R^-=\set{r_2}$. Vacuously, $<_{SP}$ is already linear on $\set{r_2}$ so $<_0=<_{SP}$. We then consider $nonlower_{<_{SP}}(r_2)=\set{r_1,r_2}$. We linearise $<_{SP}$ such that $r_2$ is smaller than all other elements in $nonlower_{<_{SP}}(r_2)$, so $r_2<_{SP}^+ r_1$. This is indeed the linearisation of $<_{SP}$ that generates $\ext_1$.
\end{eg}

\begin{eg}\label{eg:four_args}
Let $\mathcal{K}_n=\es$ and $r_1:=(\Rightarrow\neg b)$, $r_2:=\pair{\Rightarrow a}$, $r_3:=(a \Rightarrow b)$, $r_4:=(\Rightarrow c)$, $r_5:=(c \Rightarrow\neg b)$, $r_6:=(\Rightarrow b)$. Define $A:=[\Rightarrow\neg b]$, $B:=[[\Rightarrow a]\Rightarrow b]$, $C:=[[\Rightarrow c]\Rightarrow\neg b]$ and $D:=[\Rightarrow b]$. We illustrate these arguments in Figure \ref{figure:four_args}.

\begin{figure}[h]
\begin{center}
\begin{tikzpicture}[scale = 1]
\node (T1) at (0,0) {$ \top $};
\node (nb) at (0,-1) {$ \neg b $};
\draw [thick, dashed, ->] (T1) -- (nb);
\node (r1) at (0.3,-0.5) {$r_1$};
\draw [ultra thick] (-0.3,0.3)--(-0.3,-1.3);
\node (A) at (-0.6,-0.5) {$A$};

\node (T2) at (2,0) {$ \top $};
\node (a) at (2,-1) {$ a $};
\draw [thick, dashed, ->] (T2) -- (a);
\node (r2) at (2.3,-0.5) {$r_2$};
\node (b) at (2,-2) {$ b $};
\draw [thick, dashed, ->] (a) -- (b);
\node (r3) at (2.3,-1.5) {$r_3$};
\draw [ultra thick] (1.7,0.3)--(1.7,-2.3);
\node (B) at (1.4,-1) {$B$};

\node (T3) at (4,0) {$ \top $};
\node (c) at (4,-1) {$ c $};
\draw [thick, dashed, ->] (T3) -- (c);
\node (r4) at (4.3,-0.5) {$r_4$};
\node (nb2) at (4,-2) {$ \neg b $};
\draw [thick, dashed, ->] (c) -- (nb2);
\node (r5) at (4.3,-1.5) {$r_5$};
\draw [ultra thick] (3.7,0.3)--(3.7,-2.3);
\node (C) at (3.4,-1) {$C$};

\node (T4) at (6,0) {$ \top $};
\node (b) at (6,-1) {$ b $};
\draw [thick, dashed, ->] (T4) -- (b);
\node (r6) at (6.3,-0.5) {$r_6$};
\draw [ultra thick] (5.7,0.3)--(5.7,-1.3);
\node (D) at (5.4,-0.5) {$D$};
\end{tikzpicture}
\caption{The arguments of Example \ref{eg:four_args}.}
\label{figure:four_args}
\end{center}
\end{figure}
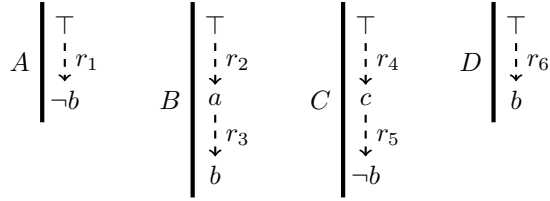

\noindent Suppose $<_{SP}$ is such that $r_6<_{SP} r_5<_{SP} r_4$, $r_5<_{SP}r_3<_{SP}r_1$ and $r_3<_{SP}r_2$. It can be shown that $D\prec_{SP} C\prec_{SP}B\prec_{SP} A$ hence $A\defeat B\defeat C\defeat D$ (notice $A\defeat D$ as well). The stable extension therefore contains $A$, $C$ and $[\Rightarrow a]$, so $R^-=\set{r_3,r_6}$. As $r_6<_{SP} r_3$, $<_{SP}$ is already linear on $R^-$, so $<_0=<_{SP}$. Now consider $nonlower_{<_0}(r_3)=\set{r_1,r_2,r_3,r_4}$ and linearise $<_{SP}$ over this set such that $r_3$ is the smallest element in $nonlower_{<_0}(r_3)$, so suppose $<_1$ is $r_3<_1 r_1<_1 r_2 <_1 r_4$. Now consider $r_6$, but $nonlower_{<_1}(r_6)=\relsymb_d$ and is already linear, so we take $<_{SP}^+$ to be the chain 653124 when written in abbreviated form (see Footnote \ref{fn:abbreviate_order}, page \pageref{fn:abbreviate_order}). This $<_{SP}^+$, when input into Algorithm \ref{alg:gen_stab_ext}, will generate $\ext$.
\end{eg}


We now apply Lemma \ref{lem:each_stb_ext_has_lin_gen_it} to prove a more general representation theorem.

\begin{thm}\label{thm:rep_thm_nonlinear}
(The Representation Theorem for Partial Order Default Priorities) Let $AG(T)$ be the attack graph corresponding to a PDT $T$, where the default priority $<$ is not necessarily total, with defeat graph $DG(T)$ under $\precsim_{SP}$ as defined by Equations \ref{eq:order_mapping_new_SP}, \ref{eq:new_SP_set_comparison} and \ref{eq:generalised_argument_preference_SP}.
\begin{enumerate}
\item Let $E$ be an extension of $T$. Then there exists a corresponding stable extension $\ext\subseteq\alg$ of $DG(T)$ such that $Conc\pair{\ext}=E$.
\item Let $\ext\subseteq\alg$ be a stable extension of $DG(T)$, then $Conc(\ext)$ is an extension of $T$.
\end{enumerate}
\end{thm}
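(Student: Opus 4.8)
The plan is to reduce the partial-order case to the total-order representation theorem (Theorem \ref{thm:rep_thm}) by passing through linearisations, exploiting the linearisation square (Theorem \ref{thm:linearisation_square}) together with the spanning-subgraph lemmas. The guiding observation is that every extension of $T$ is the unique PDE of some LPDT $T^+$ obtained by linearising $<$, and dually every stable extension of $DG(T)$ is generated by some linearisation $<_{SP}^+$ of $<_{SP}=F\pair{<_D}$; in both directions the already-established total-order results then compute the common conclusion set. The two halves of the theorem are thus mirror images, and I would prove them by threading the same correspondence in opposite directions.

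For part 1 I would argue exactly as in Theorem \ref{thm:stb_ext_exists_nonlinear_case}. Given an extension $E$ of $T$, fix a linearisation $<^+$ of $<$ (equivalently $<_D^+$ of $<_D$) that generates $E$, so $T^+:=\ang{D,W,<^+}$ is an LPDT with unique PDE $E$. By the linearisation square, $<_{SP}^+:=F\pair{<_D^+}$ is a linearisation of $<_{SP}=F\pair{<_D}$, so by Lemma \ref{lem:spanning_defeat_graphs} the defeat graph $DG\pair{T^+}$ (under $\precsim_{SP}^+$) is a spanning subgraph of $DG(T)$. Theorem \ref{thm:rep_thm} supplies a unique stable extension $\ext$ of $DG\pair{T^+}$ with $Conc\pair{\ext}=E$, and Corollary \ref{cor:stb_ext_acf} gives $\ext^2\cap\attk=\es$. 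Since the attack relation is unchanged when we widen the preference to $\precsim_{SP}$, no attacks internal to $\ext$ are created, so Lemma \ref{lem:stable_extensions_of_spanning_subgraphs} upgrades $\ext$ to a stable extension of $DG(T)$, yielding the required $Conc\pair{\ext}=E$.

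For part 2 I would run this chain in reverse. Given a stable extension $\ext$ of $DG(T)$, Lemma \ref{lem:each_stb_ext_has_lin_gen_it} produces a linearisation $<_{SP}^+$ of $<_{SP}$ such that Algorithm \ref{alg:gen_stab_ext} on input $<_{SP}^+$ returns exactly $\ext$; hence by the total-order existence and uniqueness results (Theorems \ref{thm:well_defined_stb_ext} and \ref{thm:total_still_has_unique_stable_extension}) $\ext$ is the unique stable extension of the defeat graph determined by $<_{SP}^+$. Using Theorem \ref{thm:each_string_is_linearisation} I would pull $<_{SP}^+$ back to a linearisation $<_D^+$ of $<_D$ with $F\pair{<_D^+}=<_{SP}^+$, so that the LPDT $T^+:=\ang{D,W,<^+}$ with $<^+\cong<_D^+$ has precisely this defeat graph. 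Theorem \ref{thm:rep_thm} then identifies $Conc\pair{\ext}$ with the unique PDE of $T^+$, which is generated by the linearisation $<^+$ of $<$ and is therefore an extension of $T$.

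The main obstacle is the pull-back step in part 2. Lemma \ref{lem:each_stb_ext_has_lin_gen_it} guarantees only that $<_{SP}^+$ is \emph{some} linearisation of $<_{SP}$ (its construction even terminates with an arbitrary final linearisation of a possibly non-total intermediate order), whereas Theorem \ref{thm:rep_thm} applies only when the argument preference arises as $F\pair{<_D^+}$ for a linearisation $<_D^+$ of $<_D$ --- that is, when $<_{SP}^+$ is one of the structure-compatible linearisations enumerated by Theorem \ref{thm:each_string_is_linearisation} and Corollary \ref{cor:intersection_of_lin_of_input}. I therefore expect the real work to be showing that any linearisation of $<_{SP}$ which generates a stable extension via Algorithm \ref{alg:gen_stab_ext} is necessarily structure-compatible, so that it coincides with $F\pair{<_D^+}$ for a suitable $<_D^+$; equivalently, one must verify that the proof of Theorem \ref{thm:rep_thm}, which invokes the defining property of $F$ in Equation \ref{eq:SP_ord_def}, still goes through for $<_{SP}^+$. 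Once this compatibility is secured, both inclusions $Conc\pair{\ext}\subseteq E$ and $E\subseteq Conc\pair{\ext}$ follow from the total-order argument, completing the correspondence.
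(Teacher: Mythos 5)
Your part 1 is exactly the paper's argument (linearise $<$ to generate $E$, apply Theorem \ref{thm:rep_thm}, then transfer the resulting stable extension along the spanning-subgraph relation using Corollary \ref{cor:stb_ext_acf} and Lemmas \ref{lem:spanning_defeat_graphs} and \ref{lem:stable_extensions_of_spanning_subgraphs}), and your part 2 begins the same way as the paper's, via Lemma \ref{lem:each_stb_ext_has_lin_gen_it}. The genuine gap is your pull-back step. You correctly note that Theorem \ref{thm:each_string_is_linearisation} covers only the structure-compatible linearisations, and you propose to close the gap by proving that \emph{any} linearisation of $<_{SP}$ which generates a stable extension via Algorithm \ref{alg:gen_stab_ext} equals $F\pair{<_D^+}$ for some linearisation $<_D^+$ of $<_D$. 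That claim is false. Take Example \ref{eg:penguin_triangle}: $<_D=\set{\pair{r_3,r_2}}$, so $<_{SP}=F\pair{<_D}=\set{\pair{r_2,r_1}}$, and by Table \ref{table:values_of_SP_for_penguin_total} the image under $F$ of the three linearisations of $<_D$ (namely $132$, $312$, $321$ in the notation of Footnote \ref{fn:abbreviate_order}) is $\set{213,\:321}$. But the linearisation $231$ (i.e.\ $r_2<r_3<r_1$) of $<_{SP}$, fed to Algorithm \ref{alg:gen_stab_ext}, adds $r_1$, then $r_3$, then skips $r_2$, and outputs $Args\pair{\set{r_1,r_3}}=\ext_2$, which \emph{is} a stable extension of $DG(T)$ (Example \ref{eg:penguin_stables}); yet $231\notin\set{213,321}$, so it is not structure-compatible. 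Moreover, nothing in the construction of Lemma \ref{lem:each_stb_ext_has_lin_gen_it} excludes this output: its last step is an arbitrary final linearisation, and for $\ext_2$ it may return either $213$ or $231$. So the "real work" you defer to is impossible as stated, and your part 2 cannot be completed along this route.

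The paper avoids the pull-back entirely. It takes the linearisation $<_{SP}^+$ produced by Lemma \ref{lem:each_stb_ext_has_lin_gen_it} and uses it \emph{as the default priority of a new LPDT} $T_1^+:=\ang{D,W,<_1^+}$ with $<_1^+\:\cong\:<_{SP}^+$; Theorem \ref{thm:rep_thm} then identifies $Conc\pair{\ext}$ with the PDE of $T_1^+$, hence with an extension of the PDT $\ang{D,W,<_1}$ where $<_1\:\cong\:<_{SP}$. The decisive final step is Lemma \ref{lem:PDT_SP_order_keeps_extension_same} --- a lemma your proposal never invokes --- which states that $\ang{D,W,<_1}$ and $T=\ang{D,W,<}$ have the same set of extensions, precisely because $<_1$ is isomorphic to $F\pair{<_D}$. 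In other words, the correspondence is transferred back to $T$ on the PDL side (where $F$ provably preserves extensions) rather than on the order-theoretic side (where $F$ is not surjective onto the linearisations of $<_{SP}$), which is exactly why the obstacle you identified does not arise in the paper's proof.
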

\begin{proof}
\textbf{Proof of part 1:} Let $E$ be an extension of $T$, then there exists a LPDT $T^+:=\ang{D,W,<^+}$ where $<^+$ is a linerisation of $<$ that generates the extension $E$ (Equation \ref{eq:ext_ind}). Consider the defeat graph $DG\pair{T^+}$. By Theorem \ref{thm:rep_thm}, there exists a stable extension $\ext$ of $DG\pair{T^+}$ such that $Conc\pair{\ext}=E$. Arguing as in the proof of Theorem \ref{thm:stb_ext_exists_nonlinear_case} where $DG\pair{T^+}\subseteq_\text{span}DG(T)$, $\ext$ is also a stable extension of $DG(T)$, and it satisfies $Conc\pair{\ext}=E$.

\textbf{Proof of part 2:} Let $\ext$ be a stable extension, which exists by Theorem \ref{thm:stb_ext_exists_nonlinear_case}. By Lemma \ref{lem:each_stb_ext_has_lin_gen_it}, there is a linearisation $<_{SP}^+$ of $<_{SP}$ such that Algorithm \ref{alg:gen_stab_ext} returns $\ext$ upon input $<_{SP}^+$. Consider the LPDT $T^+_1:=\ang{D,W,<_1^+}$, where $<_1^+\:\cong\:<_{SP}^{+}$. By Section \ref{sec:ASPIC+_to_PDL}, this has a defeat graph $DG\pair{T^+_1}$ with unique stable extension $\ext$. The set $Conc\pair{\ext}$ is an extension of $T^+_1$ by Theorem \ref{thm:rep_thm}. Clearly $Conc\pair{\ext}$ is also an extension of $\ang{D,W,<_1}$, where $<_1\:\cong\:<_{SP}$. By Lemma \ref{lem:PDT_SP_order_keeps_extension_same}, $Conc\pair{\ext}$ is also an extension of $\ang{D,W,<}=T$.
\end{proof}

\noindent Under the generalised SP argument preference $\precsim_{SP}$, this representation theorem means that PDL is also sound and complete with respect to its argumentation semantics in the case where the default priority $<$ is not necessarily total.

\subsection{Satisfaction of Rationality Postulates}\label{sec:rationality_nonlinear}

In this section we will state and prove a version of Theorem \ref{thm:PDL_is_rational_direct}, which is that the rationality postulates \cite{Caminada:07} hold for the stable extensions of the defeat graph, instead for all complete extensions. We will discuss the possibility for a general proof in Section \ref{sec:discussion_conclusions}.

\begin{thm}\label{thm:weakened_rationality_theorem}
(Rationality Theorem for Stable Extensions) Let $T:=\ang{D,W,<}$ be a PDT. Let its corresponding attack graph be $AG(T):=\ang{\alg,\attk,\precsim_{SP}}$ where $<\:\cong\:<_D\:\mapsto\:<_{SP}$ by Equation \ref{eq:order_mapping_new_SP}, and $\precsim_{SP}$ is defined in terms of $<_{SP}$ using Equations \ref{eq:new_SP_set_comparison} and \ref{eq:generalised_argument_preference_SP}. Let $DG(T):=\ang{\alg,\defeat}$ be the corresponding defeat graph. All stable extensions of $DG(T)$ satisfy the Caminada-Amgoud rationality postulates.
\end{thm}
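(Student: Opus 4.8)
The plan is to reduce each stable extension of $DG(T)$ to the unique stable extension of a suitably linearised LPDT, and then to appeal to the rationality theorem already established in the total-order case (Theorem \ref{thm:PDL_is_rational_direct}). The guiding observation is that all three Caminada-Amgoud postulates are \emph{intrinsic} properties of the extension viewed as a set: subargument closure is a property of $\ext\subseteq\alg$ alone, whereas closure under strict rules and consistency are properties of the conclusion set $Conc\pair{\ext}$ alone. None depends on the ambient argument preference relation or on the particular defeat graph in which $\ext$ is embedded. It therefore suffices to exhibit each stable extension of $DG(T)$ as a stable extension of some LPDT defeat graph, where the postulates are already known to hold.

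First I would fix an arbitrary stable extension $\ext$ of $DG(T)$, which exists by Theorem \ref{thm:stb_ext_exists_nonlinear_case}. By Lemma \ref{lem:each_stb_ext_has_lin_gen_it} there is a linearisation $<_{SP}^+$ of $<_{SP}$ for which Algorithm \ref{alg:gen_stab_ext} returns precisely $\ext$. Setting $T^+_1:=\ang{D,W,<^+_1}$ with $<^+_1\:\cong\:<_{SP}^+$, the graph $DG\pair{T^+_1}$ is the defeat graph of an LPDT whose unique stable extension (Theorem \ref{thm:total_still_has_unique_stable_extension}) is exactly $\ext$; this is the construction already performed in the proof of Theorem \ref{thm:rep_thm_nonlinear}, part 2.

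Next I would invoke Theorem \ref{thm:PDL_is_rational_direct} for the LPDT $T^+_1$, which establishes the three postulates for its unique (complete, hence stable) extension $\ext$: it is subargument closed because Algorithm \ref{alg:gen_stab_ext} produces it in the form $Args(R)$, which is subargument closed by Lemma \ref{lem:Args(R)_props}; $Conc\pair{\ext}$ equals the PDL extension and is therefore deductively closed, i.e.\ closed under strict rules; and $Conc\pair{\ext}$ is consistent because $W$ is. Since these are assertions about the sets $\ext$ and $Conc\pair{\ext}$, they hold verbatim when $\ext$ is regarded instead as a stable extension of $DG(T)$. As $\ext$ was arbitrary, every stable extension of $DG(T)$ satisfies the postulates.

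The only delicate point---the step I would guard most carefully---is ensuring the reduction via Lemma \ref{lem:each_stb_ext_has_lin_gen_it} returns the \emph{same} set $\ext$, not merely an extension with the same conclusions, so that the intrinsic postulates transfer; this is guaranteed since the lemma yields a linearisation under which Algorithm \ref{alg:gen_stab_ext} outputs $\ext$ itself, and that output is the unique stable extension of the corresponding LPDT defeat graph (Theorem \ref{thm:total_still_has_unique_stable_extension}). It is also worth recording why the statement is confined to stable extensions rather than all complete extensions: in the partial-order case the trivialisation theorem (Theorem \ref{thm:trivialisation_theorem}) fails, so complete extensions that are not stable may exist and need not arise from any linearisation, leaving the present argument applicable only to the stable ones.
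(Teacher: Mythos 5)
Your proposal is correct, but it takes a genuinely different route from the paper's own proof. The paper treats the three postulates separately: for subargument closure it gives a direct, self-contained argument inside $DG(T)$ itself --- if $A\in\ext$, $B\subarg A$ and $B\notin\ext$, then stability yields some $C\in\ext$ with $C\defeat B$, and since defeats on subarguments propagate upwards, $C\defeat A$, contradicting conflict-freeness --- while for closure under strict rules and consistency it simply cites Theorem \ref{thm:rep_thm_nonlinear}, which says $Conc\pair{\ext}$ is a PDL extension of $T$ and hence deductively closed and consistent. You instead push all three postulates through a single reduction: Lemma \ref{lem:each_stb_ext_has_lin_gen_it} plus Theorem \ref{thm:total_still_has_unique_stable_extension} exhibit $\ext$ as the unique stable extension of an LPDT defeat graph, Theorem \ref{thm:PDL_is_rational_direct} gives the postulates there, and your ``intrinsicness'' observation (the postulates mention only $\ext$, $Conc\pair{\ext}$ and $\relsymb_s$, never the ambient preference or defeat relation, and $\alg$, $\attk$, $\relsymb_s$ are shared by the two instantiations) transfers them back. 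Both routes rest on the same machinery in the end, since the paper's proof of Theorem \ref{thm:rep_thm_nonlinear}, part 2 is exactly your reduction; your version makes the partial-order rationality theorem a literal corollary of the total-order one, and you correctly guard the one delicate point (that the reduction returns the \emph{same set} $\ext$, not merely one with the same conclusions). What the paper's route buys is economy and generality: its subargument-closure argument needs no PDL-specific apparatus at all --- it holds for any stable extension of any ASPIC$^+$ defeat graph in which defeats lift from subarguments to superarguments --- whereas your proof of that postulate detours through the comparatively heavy Lemma \ref{lem:each_stb_ext_has_lin_gen_it}. Your closing remark about why the statement is restricted to stable extensions (failure of Theorem \ref{thm:trivialisation_theorem} in the partial-order case) matches the paper's own discussion in Section \ref{sec:discussion_conclusions}.
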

\begin{proof}
Given $T$, let $\ext$ be any stable extension of $DG(T)$.
\begin{enumerate}
\item To show that $\ext$ is subargument closed, let $A\in\ext$ and let $B\subarg A$. Assume for contradiction that $B\notin\ext$, then $\ext\defeat B$ and hence there is some $C\in\ext$ such that $C\defeat B$. Therefore, $C\defeat A$. This means $\ext$ is not cf -- contradiction. Therefore, $B\in\ext$ as well, and $\ext$ is thus subargument closed.
\item Theorem \ref{thm:rep_thm_nonlinear} states that $Conc\pair{\ext}$ is an extension of $T$, which is deductively closed. Therefore, $Conc\pair{\ext}$ is closed under strict rules.
\item As $W$ is consistent and $Conc\pair{\ext}$ is an extension of $T$, $Conc\pair{\ext}$ must also be consistent and its deductive closure is consistent.
\end{enumerate}
This shows the result.
\end{proof}

\noindent In conclusion, all stable extensions are normatively rational. This generalises the rationality theorem (Theorem \ref{thm:PDL_is_rational_direct}) to the case where $<_D$ is partial, although \textit{only for stable extensions}.

\subsection{Summary}

In this section, we have generalised our sound and complete instantiation of ASPIC$^+$ to PDL to the case where the default priority is not necessarily a total order. The main challenge is generalising the SP argument preference $\precsim_{SP}$ from a total default priority to a partial default priority. We devise a sorting $F\pair{<_D}=<_{SP}$ such that $<_{SP}$ sorts $<_D$ in a way that respects the argument structure, the defeasible rule preference $<_D$, and the incomparability of rules (Section \ref{sec:partial_order_pref_part2}). This preference has the correct properties to preserve the correspondence between the inferences of the underlying PDT and the conclusions of justified arguments (Section \ref{sec:rep_thm_nonlinear}, Theorem \ref{thm:rep_thm_nonlinear}). We have also shown that each stable extension satisfies the rationality postulates (Section \ref{sec:rationality_nonlinear}, Theorem \ref{thm:weakened_rationality_theorem}).

\section{Conclusions}\label{sec:discussion_conclusions}

We have endowed Brewka's PDL \cite{Brewka:94} with argumentation semantics using ASPIC$^+$ \cite{sanjay:13}. This is achieved by representing PDL in ASPIC$^+$ (Sections \ref{sec:instantiation_choice_of_variables} and \ref{sec:lift_total_assumption}), discussing which argument preference relations can be suitable for the correspondence of inferences (Sections \ref{sec:pref} and \ref{sec:partial_order_pref_part2}), proving that the inferences do correspond under an appropriate preference relation $\precsim_{SP}$ (Sections \ref{sec:rep_thm} and \ref{sec:rep_thm_nonlinear}), and that this instantiation is rational (Sections \ref{sec:normative_rationality_of_PDL_inst} and \ref{sec:rationality_nonlinear}). As explained in Section \ref{sec:intro}, this allows us to interpret the inferences of PDL as conclusions of justified arguments, clarifying the reasons for accepting or rejecting a conclusion. The argumentative characterisation of PDL provides for distributed reasoning in the course of deliberation and persuasion dialogues. This would allow BOID agents with PDL representations of mental attitudes to exchange arguments and counterarguments when deliberating about which goals to select, and thus which actions to pursue \cite{BOID:02}.

\subsection{Related Work}

As mentioned in Section \ref{sec:intro}, there are many existing argumentative characterisations of non-monotonic logics (e.g. \cite{Dung:95,Governatori:04}). However, there has been relatively little work in using defeasible rules to represent defaults, because  the defeasible components of arguments are often captured in the premises \cite{Bondarenko:97,sanjay:13}. Reiter's default logic (DL) \cite{Reiter:80}, as a partial special case\footnote{i.e. in the case where all defaults are normal defaults.} of Brewka's PDL \cite[Proposition 6]{Brewka:94}, has been endowed with sound and complete argumentation semantics by Dung \cite[Section 4.1]{Dung:95}. However, DL cannot handle priorities and as a result draws counter-intuitive inferences. We know that conflicts between defaults often occur and priorities are an intuitive and high-level way of resolving such conflicts \cite{Brewka:94,Brewka:00}. It is therefore important to investigate how preferences can also be incorporated into any argumentation semantics. ASPIC$^+$ is a good framework to achieve this because it is designed to handle preferences.

\subsection{Future Work}

Brewka's preferred subtheories (PS) \cite{Brewka:89} has been endowed with argumentation semantics by Modgil and Prakken using ASPIC$^+$ \cite[Section 5.3.2]{sanjay:13}. Given that PS is a special case of PDL \cite{Brewka:89}, it is interesting to see how the argumentation semantics are related. It can be shown that instantiating the argumentation semantics of PDL in this paper to the case of supernormal defaults and empty facts will recover an argumentation semantics isomorphic to the argumentation semantics of Modgil and Prakken. However, whereas Modgil and Prakken assume that arguments must be consistent, the results of Section \ref{sec:inconsistent_args} lifts this assumption when we specialise our argumentation semantics. We will articulate this in future work.

It will also be interesting to see how an argumentation semantics for Reiter's normal DL \cite[Section 3]{Reiter:80} can be recovered by setting $<_D=\es$ \cite[Proposition 6]{Brewka:94}, and comparing this to Dung's argumentation semantics for DL. However, Dung's argumentation semantics also accommodates non-normal defaults. How would ASPIC$^+$ incorporate non-normal defaults? At first glance it should involve the naming function and undercuts (Section \ref{sec:rev_ASPIC+}), but how can soundness and completeness be proven? How can the argumentation semantics help us understand the interaction of explicit default priority relations with the implicit priority of non-normal defaults \cite{Reiter:81}? Future work will explore further properties of this argumentation semantics.

ASPIC$^+$ can be used to generalise PDL. For example, we know that extensions do not have to exist for non-normal default logic, which corresponds to the failure for stable extensions to exist in the argumentation semantics \cite[Section 4.1]{Dung:95}. We can then consider the justified arguments under different Dung semantics, but what would these other notions of justified arguments mean for PDL?

Another reason for considering different Dung semantics is to show whether the rationality postualtes holds for complete extensions in general. So far we have shown a special case of rationality for the stable extensions only (Section \ref{sec:rationality_nonlinear}). What would the complete extensions look like in this case? How are they related to the other Dung semantics \cite[Section 2.3]{Dung:95}? Alternatively, one can invoke the theory of ASPIC$^+$, which states that normative rationality automatically follows if the instantiation is \textit{well-defined} with a \textit{reasonable} argument preference relation \cite[Definitions 12 and 18]{sanjay:13}. Although it is easy to see that our instantiation is well-defined if the underlying PDT is consistent, it is not obvious whether $\precsim_{SP}$ in the partial order case is reasonable. This will be the subject of future work.

Finally, we have argued that endowing PDL with argumentation semantics provides for distributed reasoning amongst agents (in particular BOID agents for which PDL has been used to generate \textit{individual} agents' goals). Such distributed reasoning in the form of dialogue can be formalised as a generalisation of argument game proof theories for Dung frameworks \cite{Sanjay:09}, whereby agents not only can submit arguments, but locutions that implicitly define arguments providing the reasons for a given claim. We will investigate this in future work.

\bibliographystyle{abbrv}
\bibliography{APY_PhD_Library}

\end{document}